\theoremstyle{plain}
\newtheorem*{theorem*}{Theorem}
\newtheorem{theorem}{Theorem}[section]
\newtheorem{lemma}[theorem]{Lemma}
\newtheorem{proposition}[theorem]{Proposition}
\newtheorem{corollary}[theorem]{Corollary}
\theoremstyle{remark}
\newtheorem{assumption}{Assumption}
\newtheorem{definition}{Definition}
\newenvironment{proof*}[1][\proofname]{\par
  \pushQED{\qed}%
  \normalfont \partopsep=\z@skip \topsep=\z@skip
  \trivlist
  \item[\hskip\labelsep
        \itshape
    #1\@addpunct{.}]\ignorespaces
}{%
  \popQED\endtrivlist\@endpefalse
}
\tikzset{
  -Latex,auto,node distance =1 cm and 1 cm,semithick,
  state/.style ={ellipse, draw, minimum width = 0.7 cm},
  point/.style = {circle, draw, inner sep=0.04cm,fill,node contents={}},
  bidirected/.style={Latex-Latex,dashed},
  el/.style = {inner sep=2pt, align=left, sloped}
}
\DeclareMathOperator{\Aut}{Aut}
\begin{document}

%
\runningtitle{Indeterminacy and Strong Identifiability in Generative Models}

%



\def\bfA{\mathbf{A}}
\def\bfB{\mathbf{B}}
\def\bfC{\mathbf{C}}
\def\bfD{\mathbf{D}}
\def\bfE{\mathbf{E}}
\def\bfF{\mathbf{F}}
\def\bfG{\mathbf{G}}
\def\bfH{\mathbf{H}}
\def\bfI{\mathbf{I}}
\def\bfJ{\mathbf{J}}
\def\bfK{\mathbf{K}}
\def\bfL{\mathbf{L}}
\def\bfM{\mathbf{M}}
\def\bfN{\mathbf{N}}
\def\bfO{\mathbf{O}}
\def\bfP{\mathbf{P}}
\def\bfQ{\mathbf{Q}}
\def\bfR{\mathbf{R}}
\def\bfS{\mathbf{S}}
\def\bfT{\mathbf{T}}
\def\bfU{\mathbf{U}}
\def\bfV{\mathbf{V}}
\def\bfW{\mathbf{W}}
\def\bfX{\mathbf{X}}
\def\bfY{\mathbf{Y}}
\def\bfZ{\mathbf{Z}}

\def\bbA{\mathbb{A}}
\def\bbB{\mathbb{B}}
\def\bbC{\mathbb{C}}
\def\bbD{\mathbb{D}}
\def\bbE{\mathbb{E}}
\def\bbF{\mathbb{F}}
\def\bbG{\mathbb{G}}
\def\bbH{\mathbb{H}}
\def\bbI{\mathbb{I}}
\def\bbJ{\mathbb{J}}
\def\bbK{\mathbb{K}}
\def\bbL{\mathbb{L}}
\def\bbM{\mathbb{M}}
\def\bbN{\mathbb{N}}
\def\bbO{\mathbb{O}}
\def\bbP{\mathbb{P}}
\def\bbQ{\mathbb{Q}}
\def\bbR{\mathbb{R}}
\def\bbS{\mathbb{S}}
\def\bbT{\mathbb{T}}
\def\bbU{\mathbb{U}}
\def\bbV{\mathbb{V}}
\def\bbW{\mathbb{W}}
\def\bbX{\mathbb{X}}
\def\bbY{\mathbb{Y}}
\def\bbZ{\mathbb{Z}}

\def\calA{\mathcal{A}}
\def\calB{\mathcal{B}}
\def\calC{\mathcal{C}}
\def\calD{\mathcal{D}}
\def\calE{\mathcal{E}}
\def\calF{\mathcal{F}}
\def\calG{\mathcal{G}}
\def\calH{\mathcal{H}}
\def\calI{\mathcal{I}}
\def\calJ{\mathcal{J}}
\def\calK{\mathcal{K}}
\def\calL{\mathcal{L}}
\def\calM{\mathcal{M}}
\def\calN{\mathcal{N}}
\def\calO{\mathcal{O}}
\def\calP{\mathcal{P}}
\def\calQ{\mathcal{Q}}
\def\calR{\mathcal{R}}
\def\calS{\mathcal{S}}
\def\calT{\mathcal{T}}
\def\calU{\mathcal{U}}
\def\calV{\mathcal{V}}
\def\calW{\mathcal{W}}
\def\calX{\mathcal{X}}
\def\calY{\mathcal{Y}}
\def\calZ{\mathcal{Z}}

\def\bfz{\mathbf{z}}
\def\bfx{\mathbf{x}}

\def\iid{i.i.d.\ } 
\def\ie{i.e.\ }
\def\eg{e.g.\ }
\newcommand{\kword}[1]{\smallskip\noindent\textbf{#1}\ \ }


\def\P{\bbP} 
\def\E{\bbE} 
\DeclarePairedDelimiterX\bigCond[2]{[}{]}{#1 \;\delimsize\vert\; #2}
\newcommand{\conditional}[3][]{\bbE_{#1}\bigCond*{#2}{#3}}
\def\Law{\mathcal{L}} 
\def\indicator{\mathds{1}} 

\def\borel{\calB} 
\def\sigAlg{\calA} 
\def\filtration{\calF} 
\def\grp{\calG} 

\makeatletter
\newcommand{\oset}[3][0.25ex]{%
  \mathrel{\mathop{#3}\limits^{
    \vbox to#1{\kern-2\ex@
    \hbox{$\scriptstyle#2$}\vss}}}}
\makeatother

\def\condind{{\perp\!\!\!\perp}} 
\def\equdist{\oset{\text{\rm\tiny d}}{=}} 
\def\equas{\oset{\text{\rm\tiny a.s.}}{=}} 
\def\equae{\oset{\text{\rm\tiny a.e.}}{=}} 
\def\simiid{\sim_{\mbox{\tiny iid}}} 

\def\onevec{\mathbf{1}}
\def\iden{\mathbf{I}} 
\def\supp{\text{\rm supp}}

\DeclarePairedDelimiter{\ceilpair}{\lceil}{\rceil}
\DeclarePairedDelimiter{\floor}{\lfloor}{\rfloor}
\newcommand{\argdot}{{\,\vcenter{\hbox{\tiny$\bullet$}}\,}} 

\def\UnifDist{\text{\rm Unif}}
\def\BetaDist{\text{\rm Beta}}
\def\ExpDist{\text{\rm Exp}}
\def\GammaDist{\text{\rm Gamma}}
\def\NormDist{\mathcal{N}}

\def\BernDist{\text{\rm Bernoulli}}
\def\BinomDist{\text{\rm Binomial}}
\def\PoissonPlus{\text{\rm Poisson}_{+}}
\def\PoissonDist{\text{\rm Poisson}}
\def\NBPlus{\text{\rm NB}_{+}}
\def\NBDist{\text{\rm NB}}
\def\GeomDist{\text{\rm Geom}}


\def\obs{\bfX}
\def\latent{\bfZ}
\def\genClass{\calF}
\def\distClass{\calP}
\def\obsModel{\calM}
\def\indet{\calA}
\def\id{\textrm{id}}

\newcommand{\sigalg}[1]{\mathcal{#1}}
\def\borel{\calB}
\def\refMeas{\mu}
\def\lebesgue{\lambda}


\definecolor{BBRCommentColor}{rgb}{0,0,.50}
\definecolor{JXCommentColor}{rgb}{0,0.50,.50}
\newcounter{margincounter}
\newcommand{\displaycounter}{{\arabic{margincounter}}}
\newcommand{\incdisplaycounter}{{\stepcounter{margincounter}\arabic{margincounter}}}
\newcommand{\BBRCOMMENT}[1]{\textcolor{BBRCommentColor}{$\,^{(\incdisplaycounter)}$}\marginpar{\scriptsize\textcolor{BBRCommentColor}{ {\tiny $(\displaycounter)$} #1}}}
\newcommand{\JXCOMMENT}[1]{\textcolor{JXCommentColor}{$\,^{(\incdisplaycounter)}$}\marginpar{\scriptsize\textcolor{JXCommentColor}{ {\tiny $(\displaycounter)$} #1}}}

\newcommand{\equivA}{\sim_{\indet(\obsModel)}}
\newcommand{\equivt}{\sim_{t}}
\newcommand{\equivtx}{\sim_{t(\bfx_m,\argdot)}}

\newcommand{\genTrans}{\vec{A}}

\twocolumn[

\aistatstitle{Indeterminacy in Generative Models:  \\ Characterization and Strong Identifiability}

\aistatsauthor{ Quanhan Xi \And Benjamin Bloem-Reddy }

\aistatsaddress{  \texttt{johnny.xi@stat.ubc.ca} \\ University of British Columbia   \And  \texttt{benbr@stat.ubc.ca} \\ University of British Columbia  
} 
]

\begin{abstract}
 Most modern probabilistic generative models, such as the variational autoencoder (VAE), have certain indeterminacies that are unresolvable even with an infinite amount of data. Different tasks tolerate different indeterminacies, however recent applications have indicated the need for \textit{strongly} identifiable models, in which an observation corresponds to a unique latent code. Progress has been made towards reducing model indeterminacies while maintaining flexibility, and recent work excludes many---but not all---indeterminacies. In this work, we motivate model-identifiability in terms of task-identifiability, then construct a theoretical framework for analyzing the indeterminacies of latent variable models, which enables their precise characterization in terms of the generator function and prior distribution spaces. We reveal that strong identifiability is possible even with highly flexible nonlinear generators, and give two such examples. One is a straightforward modification of iVAE \citep{Khem2020a}; the other uses triangular monotonic maps, leading to novel connections between optimal transport and identifiability.
\end{abstract}

\section{\uppercase{Introduction}} \label{sec:intro}

In generative models, indeterminacy refers to the situation where the latent values underlying observations cannot be uniquely inferred from any amount of empirical evidence. It is a structural issue occurring in generative models such as the variational auto-encoder (VAE) \citep{King2014} or independent component analysis (ICA) \citep{Comon1994, Hyv1999}. For example, it is well known that even linear Gaussian models are plagued by a rotational indeterminacy, where arbitrary rotations of the latent space result in equivalent observation distributions. Modern nonlinear (deep) generative models inherit such indeterminacies and many more.

Characterizing and reducing the indeterminacies of generative models has been referred to as \emph{identifiability}, and has been motivated as a way to address a variety of problems in representation learning such as disentangelment \citep{Loc2019, Halv2021, Yang2021, Klindt2021}, posterior collapse \citep{Wang2021b}, and causal representation learning \citep{Wang2021a, Scho2021, Lu2022}. Compared to linear models, which restrict indeterminacies to linear transformations of the latent space, nonlinear identifiability is more challenging, and has been the subject of much recent work. 

Consider a prototypical generative model,
\begin{align*}
Z_i \sim P_z \;, \quad \epsilon_i \sim \calN(0,\sigma^2) \;, \quad
X_i = f(Z_i) + \epsilon_i \;,
\end{align*}
where $P_z \in \distClass_z$ is a distribution on latent variables, $f \in \genClass$ is an injective generator function, and $X_i$ is an observation. 
Indeterminacies arise when more than one $(f,P_z)$ pair give rise to the same marginal distribution on observations, due to transformations of the latent space. 
Despite the seemingly disparate approaches taken in recent work to reduce indeterminacy, an intuitive trade-off appears: as more structure is added to the model through $\genClass$ and $\distClass_z\textbf{}$, indeterminacy is reduced and hence stronger identifiability results are obtained. 
Our first contribution is a framework for analyzing indeterminacy that abstracts away model specifics and formalises the trade-off as a general phenomenon, described by separating indeterminacy transformations into two possible sources. The first source is the set of transports that turn one distribution in $\distClass_z$ into another; denote this set by $\indet(\distClass_z)$. The second source, $\indet(\genClass)$, is the set of automorphisms of the latent space formed from two functions in $\genClass$ as $f_b^{-1} \circ f_a$
(See \cref{sec:indet:maps} for details). The main result of the analysis is that model indeterminacies must belong to \emph{both} sets.

\begin{theorem*}[Informal Statement of \Cref{thm:id_general}]
The set of indeterminacy transformations of a generative model $(\genClass,\distClass_z)$ is precisely $\indet(\genClass) \cap \indet(\distClass_z)$.
\end{theorem*}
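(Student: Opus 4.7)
The plan is to establish the two set inclusions directly via the algebra of pushforward measures. Write $T$ for a candidate self-map of the latent space $\calZ$. An indeterminacy transformation is understood to arise from two model realizations $(f_a,P_a)$ and $(f_b,P_b)$---with $f_a,f_b \in \genClass$ injective and $P_a,P_b \in \distClass_z$---that induce the same observation distribution.

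For the forward inclusion, suppose $T$ is realized by such a pair. I would first argue that equality of observation distributions forces equality of the noiseless pushforwards, $f_{a\#} P_a = f_{b\#} P_b$; in the additive Gaussian case this follows from deconvolution via characteristic functions, while the general framework abstracts this equality as the definition of model equivalence. This immediately yields $f_a(\supp P_a) = f_b(\supp P_b)$, so by injectivity of $f_b$ the composition $T = f_b^{-1} \circ f_a$ is a well-defined self-map on $\supp P_a$ (and may be extended arbitrarily off the support). It lies in $\indet(\genClass)$ by construction, and a direct pushforward computation,
\[
T_\# P_a \;=\; (f_b^{-1})_\# f_{a\#} P_a \;=\; (f_b^{-1})_\# f_{b\#} P_b \;=\; P_b,
\]
places $T$ in $\indet(\distClass_z)$ as well.

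For the reverse inclusion, suppose $T \in \indet(\genClass) \cap \indet(\distClass_z)$. By definition there exist $f_a, f_b \in \genClass$ with $T = f_b^{-1} \circ f_a$, and $P_a,P_b \in \distClass_z$ with $T_\# P_a = P_b$. I then claim that $(f_a, P_a)$ and $(f_b, P_b)$ realize $T$ as an indeterminacy: their noiseless observation distributions coincide by
\[
f_{a\#} P_a \;=\; (f_b \circ T)_\# P_a \;=\; f_{b\#}(T_\# P_a) \;=\; f_{b\#} P_b,
\]
and convolving with the noise carries this equality to the observed level.

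The main subtlety I expect is that $\indet(\genClass)$ and $\indet(\distClass_z)$ are each defined by separate existential quantifiers---over generator pairs and prior pairs respectively---so the reverse direction must verify that a single pair of witnesses can be used simultaneously. The pushforward identity above is indifferent to the pairing, so any generator witnesses $(f_a,f_b)$ combined with any prior witnesses $(P_a,P_b)$ produce the desired equivalent realizations. The remaining care is measure-theoretic: generators in $\genClass$ need not be surjective, so $f_b^{-1}$ should be read as the left inverse on the image $f_b(\calZ)$, and one must check that $T$ is measurable and uniquely specified on the supports where it matters. Injectivity and standard regularity of $\genClass$ should make these checks routine, though they underlie the precise statement of the formal version.
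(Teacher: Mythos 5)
Your proposal is correct and follows essentially the same route as the paper: both inclusions rest on the equivalence between $P_{\theta_a}=P_{\theta_b}$ and $f_b^{-1}\circ f_a$ being a $(P_a,P_b)$-measure isomorphism, which the paper isolates as \cref{thm:indet:transport} and you carry out inline via the pushforward computations. You also correctly identified and resolved the one real subtlety in the reverse inclusion---that independently chosen generator and prior witnesses can be recombined into a single pair of equivalent parametrizations---which is precisely how the paper's proof proceeds.
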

This reduces the identifiability analysis for any suitable generative model to a well-defined mathematical problem. It also categorizes various methods to reduce indeterminacies in the model class---either constrain $\genClass$, as in linear models; constrain $\distClass_z$, e.g., non-Gaussians; or constrain both. Any of these approaches will reduce the size of the intersection. 
This describes many recent methods for nonlinear identifiability. Generator constraints include restrictions to certain optimal transport maps \citep{Wang2021b}, sparsity \citep{Moran2021, zheng2022} and restrictions on the Jacobian \citep{Gresele2021,buchholz2022}. Many methods also assume a diffeomorphic or analytic generator, which further reduces $\indet(\genClass)$. Direct constraints on $\distClass_z$ include non-Gaussianity \citep{stuhmer2020} and mixture distributions \citep{Kivva2022}. More commonly, latent distribution constraints are formulated in terms of dependence on an auxiliary variable \citep{Hyv2016, Hyv2017, Hyv2018, Khem2020a, Khem2020b, Halv2021, Klindt2021}, multiple views \citep{Loc2020}, when latent variables are perturbed via interventions \citep{Brehmer2022} or sparse mechanism shifts \citep{lachapelle2022, ahuja2022}. As an application of our framework, we show explicitly in \cref{sec:ME} how multiple environments \citep[e.g.,][]{Khem2020a} and multiple views \citep[e.g.,][]{Loc2020} can yield stronger identifiability, simply by reducing the set of indeterminacy transformations. 

Despite the progress in the references above, each of the identifiability results contained therein are weak, in the sense that non-trivial indeterminacies remain. An unanswered question is what constraints are required for strong identifiability, that is, pointwise uniqueness of the latent representation. It has long been seen as unattainable without major sacrifices to model flexibility. For example, permutation and scaling indeterminacies are considered fundamental in ICA models \citep{Comon1994}. Our second main contribution is to use our framework in a few different ways to specify strongly identifiable nonlinear models, without restrictive sacrifices in flexibility. The simplest is to freeze the latent distribution before training the generator, as is typically done in a VAE. Specifically, we show that freezing the priors in iVAE \citep{Khem2020a}, either from the outset or after some initial training, yields strong identifiability---with no further constraints on the generator class---when data from distinct environments (auxiliary information) are used (\cref{sec:ME}). We also show that monotonic triangular flow generators \citep{huang2018, Jaini2019,Wehenkel2019, Irons2021}, which are universal transports between fully supported distributions, are strongly identifiable even with a single environment, and with any latent distribution (\cref{sec:transports}).

Even when strong identifiability is unachievable, weakly identifiable models may be useful for tasks that tolerate the remaining indeterminacies. Our third main contribution is to formally relate model identifiability to task identifiability in \cref{sec:task:id}, providing precise conditions for when weak identifiability is good enough for identifiability of a particular downstream task, including some examples from the recent literature. One obvious conclusion is that strongly identifiable models are acceptable for any task based on the latent variables. 

Before proceeding, we note that any notion of model identifiability is an asymptotic property not achievable with finite data. However, model identifiability is an important quality for statistical inference, and in particular is necessary for the typical consistency guarantees \citep{Vander1998}. Though future work is required to assess the finite-sample properties of identifiable models, empirical evidence shows that even weak identifiability can recover ground truths in simulation studies \citep{Khem2020a, Sorrenson2020, Lu2022}.

\kword{Outline} 
In the rest of this section, we motivate our framework with two classical linear examples, and discuss identifiability at a high level from a downstream task-specific point of view, outlining when and what degree of identifiability is required. In \cref{sec:defs}, we define our mathematical framework and present our main technical results on model identifiability. After revisiting the task-specific view in detail in \cref{sec:task:id}, we then apply the framework to analyze the auxiliary information setting in \cref{sec:ME}, showing in detail how the iVAE fits into the framework, and how it can be easily adapted for strong identifiability. In \cref{sec:transports} we describe properties of triangular flows that yield strongly identifiable models, which can then be generalized to flows based on certain optimal transports.

\subsection{Factor Analysis and Linear ICA} \label{sec:fa:ica}

Factor analysis \citep{Lawley1962} is a linear generative model, 
\begin{align*}
    Z_i \sim \NormDist(0, I_{d_z }) , \; \quad
    \epsilon_i \sim \NormDist(\mu, I_{d_x }) , \; \quad
    X_i = F\, Z_i + \epsilon_i \;,
\end{align*}
where $\epsilon_i\condind Z_i$, and $F$ is a full-rank $d_x \times d_z$ matrix of so-called factor loadings. Here, $F$ is the only learnable parameter and $\distClass_z = \{ \NormDist(0, I_{d_z })\}$, a singleton.  
Linear ICA \citep{Comon1994} is structurally identical, but relaxes the assumption that $Z$ has a Gaussian distribution. Instead, it is parametrized by $(F, P_z)$, where $F$ is again full rank, and $P_z$ is required to have independent components and is learned along with $F$ from data. In other words, $\distClass_z$ is some collection of fully supported distributions with independent components, typically excluding Gaussians.

It is well known that factor analysis suffers from a rotational indeterminacy due to the Gaussian $P_z$. On the other hand, \cite{hastie2009} note that ICA is identical in form to factor analysis, but avoids the rotational indeterminacy via its non-Gaussian assumption. However, if scaling indeterminacies are fundamental to ICA \citep{hyv2001book}, why are they ignored in factor analysis?

Factor analysis does not suffer from scaling indeterminacy due to a fundamental difference in how indeterminacies arise in these two models. The factor model  fixes its Gaussian $P_z$, and the rotational indeterminacies in factor analysis can be characterized precisely as linear measure-preserving automorphisms of the standard Gaussian. Since scaling does not preserve the standard Gaussian, it is not an indeterminacy. On the other hand, ICA does not fix $P_z$. Due to this, any linear transformation of $P_z \in \calP_z$ to another $P_z' \in \calP_z$ is an indeterminacy, and a result of \citet[Thm.~10]{Comon1994} is that only scalings and permutations are possible when $\calP_z$ contains all independent distributions excluding Gaussians. 

Our \cref{thm:id_general} below generalizes these special cases. It can be used to show that  indeterminacies are completely characterized as measure-preserving automorphisms for fixed latent distributions (as in factor analysis), and as measure-transporting isomorphisms within $\calP_z$ otherwise (as in ICA). One takeaway is that strong identifiability appears to be much easier to obtain in the factor analysis setting. We investigate this further in \cref{sec:ME,sec:transports}.

\subsection{Why and How Much Identifiability}

In the recent literature on generative model identifiability, the question of why we care about identifiability typically appeals to recovering ground truth latent factors \citep{Khem2020a,Ahuja2021,Yang2021,Lu2022}. Besides requiring a philosophical position that asserts the objective reality of the latent variables \emph{and} that the model contains the ``true'' data generating distribution, the use of an  unidentifiable model---unable to be uniquely resolved from \emph{any} amount of empirical evidence---makes the recovery of ``true'' latent factors impossible without further, untestable assumptions about the model. Another perspective is to consider when model indeterminacies preserve certain observable quantities, for example distances between observations \citep{arva2018}. More generally, we might judge the importance of model identifiability in terms of whether or not a model can be used for its intended purpose. This opens the possibility that in some cases, weak identifiability may be sufficient; in others, it may not. To our knowledge, a characterization of the distinction has not been formalized in this setting.

To address these questions, we take a pragmatic approach based on downstream tasks that use the inferred latent variables. Informally, a task $t$ is a function of the model parameters $(f,P_z)$, of data, $\bfx_m = (x_1,\dotsc,x_m)$, and of a finite collection of points in the latent space, $\bfz_n = (z_1,\dotsc,z_n)$. There must be some procedure for selecting the latent points $\bfz_n$ represented by a selection function $\bfz_n = s(f,P_z, \bfx_m)$, e.g., $s(f, P_z, \bfx_m) \approx f^{-1}(\bfx_m)$. The entire task is given by $t(f,P_z, \bfx_m, s(f, P_z, \bfx_m))$. 

Some examples are generating synthetic or counterfactual data by shifting inferred latent variables \citep{higgins2017}, or causal discovery through independence tests \citep{monti2020,Khem2020a,Lu2022}. Both of these examples are studied in detail in \cref{sec:task:id}. For a model to be useful for a task, the resulting task output should be the same for all generative model parameters $(f,P_z)$ that yield the same marginal distribution for the observations (this is made precise in \cref{sec:task:id}). For example, \cref{figure::intro} illustrates a task that is constant on each set of points induced by the indeterminacy transformations (rotations). That guarantees that two fits of the same model to the same (infinite) data yield the same task values; weak model identifiability can be sufficient when the relevant task is insensitive to all model indeterminacies. Strongly identifiable models make all tasks trivially insensitive and therefore identifiable. 

\begin{figure}[tb]
    \centering
	\includegraphics[width = 0.75\columnwidth]{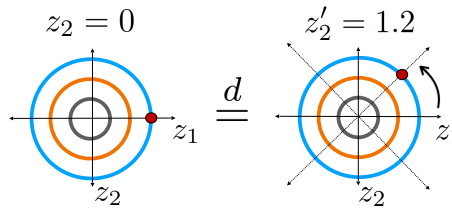}
	\caption{Rotational indeterminacy is fundamental in Gaussian i.i.d.\ models, where each point on the above orbits represent an equivalent solution. If a task gives constant output on each orbit, the task is identifiable even in the presence of model indeterminacies.}
	\label{figure::intro}
\end{figure}

\section{\uppercase{Identifiability in generative models}} \label{sec:defs}

Conceptually, our results are relatively simple, but some care is required to establish them rigorously. See \cref{sec:appx:definitions} and  \cref{sec:appx:proofs} for technical background and proofs. 

A Borel space $(\obs,\borel(\obs))$ is a topological space $\obs$ equipped with the $\sigma$-algebra generated by its open sets, denoted $\borel(\obs)$. All measurable spaces in this paper will be Borel spaces, and for convenience we leave the $\sigma$-algebra implicit in the notation, referring to the Borel space $(\obs,\borel(\obs))$ simply as $\obs$. Let $\latent$ and $\obs$ be two Borel spaces and $f \colon \latent \to \obs$ a measurable function. If $f$ is bijective with measurable inverse then it is called a Borel isomorphism of $\latent$ and $\obs$. If $\obs = \latent$ then $f$ is a Borel automorphism. We denote the set of Borel automorphisms of $\latent$ as $\Aut(\latent)$. 

We will use the notation $\obs$ and $\latent$ to represent the observable space and latent space, respectively. In practice, typically $\obs = \bbR^{d_x}$ and $\latent = \bbR^{d_z}$, $d_z \leq d_x$. For clarity, we will usually refer to these specific spaces, where \emph{almost everywhere} statements and probability densities are with respect to the Lebesgue measure; our results in \cref{sec:defs} and \cref{sec:task:id} apply to generic Borel measure spaces. We write $f_a \equae f_b$ for measurable functions as shorthand for equality almost everywhere on their domain.
The main objects of study in our theory are certain elements of $\Aut(\latent)$, as follows. 

Let $\mu$ and $\nu$ be two measures on $\latent$, and $A \in \Aut(\latent)$. $A$ is called a $(\mu,\nu)$\textbf{-measure isomorphism} whenever $A_{\#}\mu = \nu$ (equiv.\  $A^{-1}_{\#}\nu = \mu)$, and a $\mu$\textbf{-measure preserving automorphism} if $\mu = A_{\#}\mu = A^{-1}_{\#}\mu$.\footnote{$A_{\#}\mu$ ($A$ \emph{pushforward} $\mu$) can also be denoted $A_{*}\mu$, or $\mu\circ A^{-1}$ (the image measure), and is defined by $A_{\#}\mu(B) = \mu(A^{-1}(B))$, $B \in \borel(\latent)$.} 

\subsection{Model Definition} 
We model observations denoted $x_i$ as i.i.d.\ realizations of a random variable $X$ with distribution $P_x$ on $\obs$. Define a latent random variable $Z$ with corresponding realizations $z_i$, with distribution $P_z$ on the latent space $\latent$. We further assume the presence of some noise realization $\epsilon_i$ with some fixed distribution $P_{\epsilon}$ applied by some noise mechanism $g$. Let $f \colon \latent \to \obs$ be a measurable function, which we call the \emph{generator}. We define a \textit{generative model} as,
\begin{align}
Z_i \sim P_z \;, \quad \epsilon_i \sim P_\epsilon \;, \quad
X_i = g(f(Z_i), \epsilon_i) \;,
\label{eq:model}
\end{align}
with $Z_i \condind \epsilon_i$. 
For example, in a VAE or factor analysis, $\obs$ and $\latent$ are Euclidean spaces, and $g(f(z), \epsilon) = f(z) + \epsilon$ is additive noise. We will not be concerned with inferring the noise. Instead, we work with the assumption that the noise distribution and mechanism are fixed and have null effect on the probabilistic properties of the model. 

\begin{assumption}
\label{assump::noiseless}
Assume that $g$ and $P_\epsilon$ are such that, with $\epsilon_a \equdist \epsilon_b$, $g(f(Z_a), \epsilon_a) \equdist  g(f(Z_b), \epsilon_b)$ if and only if $f(Z_a) \equdist f(Z_b)$.
\end{assumption}

This assumption includes, for example, the noiseless case, and additive noise for a suitable noise distribution \citep[e.g.,][]{Halv2021}. It means that, for identifiability purposes, it is sufficient to analyze the noiseless case. We note that it rules out the possibility of discrete observations except in very limited cases; see \cref{sec:appx:discrete} for a brief discussion of this point. 
Designing a generative model involves specifying parameter spaces for the generator and the prior. We denote these as $\genClass$, a set of measurable mappings $\latent \to \obs$; and $\distClass_z$, a set of probability measures on $\latent$. For example, $\distClass_z$ could be a singleton, as in factor analysis. In ICA, $\distClass_z$ contains only distributions that factorize over its components. 

We will also assume that the generators are bijective on their range, which is required to make the inverse problem of recovering latents well-defined, and is a standard assumption in the identifiability literature. 
\begin{assumption} 
\label{assump::bijective}
Assume that any $f \in \genClass$ is injective, and has the same image: for any $f_a,f_b \in \genClass$, $f_a(\latent) = f_b(\latent) := \genClass(\latent) \subseteq \obs$.
\end{assumption}

\subsection{Model Indeterminacies} \label{sec:id}

A generative model \eqref{eq:model} induces a statistical model as
\begin{align}
\label{eqn::statisticalmodel}
    \obsModel(\genClass,\distClass_z) = \{ P_{\theta} \text{ on } \obs \mid \theta = (f, P_z) \in \genClass \times \distClass_z \} \;.
\end{align}
Classical parameter identifiability can be defined via an equivalence relation $\sim$ on parameter space, $\theta \sim \theta' \iff P_{\theta} = P_{\theta'}$. The equivalence classes of parameters induced by $\sim$ are denoted by $[\theta] := \{ \theta' \colon P_{\theta}=P_{\theta'} \}$, and a model is identifiable up to $[\theta]$. Some authors refer only to the previous case as partial or set identifiability \citep{tamer2010}, and reserve the term identifiability for the case that $[\theta] = \{\theta\}$. We refer to this latter case as strong identifiability. Parameter identifiability does not appear to contain any information about the latent values. We take classical parameter identifiability as our starting point to formulate an alternative definition tailored to latent variable indeterminacy. Specifically, we work with transformations of the latent space that yield different generators but that leave the marginal distribution of the data unchanged.  

\begin{definition} \label{def:indet:trans}
    For a model $\obsModel(\genClass,\distClass_z)$, $A_{a,b}$ is an \textbf{indeterminacy transformation} at $\theta_a, \theta_b$ if $P_{\theta_a} = P_{\theta_b}$ and $f_a \circ A_{a,b}^{-1} \equae f_b$. The \textbf{indeterminacy set} of a model $\obsModel(\genClass,\distClass_z)$, denoted $\indet(\obsModel)$, is the collection of all indeterminacy transformations of the model.
\end{definition}

\Cref{def:indet:trans} describes latent variable indeterminacy; it implies that if $\indet(\obsModel)$ contains non-trivial transformations, any latent $z$ that generates $x = f(z)$ has an equivalent counterpart $A(z)$ for some $A \in \indet(\obsModel)$. 
The identity map on $\latent$, $\id_z$, is always (trivially) an indeterminacy transformation by taking $\theta_a = \theta_b$. A simple way to construct non-trivial candidate indeterminacy transformations at $\theta_a\neq\theta_b$ is by ``pushing forward'' and ``pulling back'' along the generators,
\begin{align} \label{eq:indet:map}
    \genTrans_{a,b}(z) := f_b^{-1}(f_a(z)) \;, \quad z \in \latent \;.
\end{align}
It is possible to show that $\genTrans_{a,b} \in \Aut(\latent)$, and thus if transforming $\latent$ in this way results in $P_{\theta_a} = P_{\theta_b}$, then $\genTrans_{a,b}$ is an indeterminacy transformation.
We refer to $\genTrans_{a,b} \colon \latent \to \latent$ as the \textbf{generator transform} between $\theta_a$ and $\theta_b$. 
It turns out that generator transforms characterize all the possible indeterminacy transformations of a model---though not every generator transform is an indeterminacy transformation. 
\begin{lemma}
	\label{thm:indet:transport}
	Let $\theta_a = (f_a, P_{z,a})$ and $\theta_b = (f_b, P_{z,b})$ be two parametrizations of a generative model with resulting marginal distributions $P_{\theta_a}$ and $P_{\theta_b}$. Then, $P_{\theta_a} = P_{\theta_b}$ if and only if $\genTrans_{a,b}$ is a $(P_{z,a}, P_{z,b})$-measure isomorphism. Furthermore, any $A \in \Aut(\latent)$ is an indeterminacy transformation at $\theta_a, \theta_b$ (\cref{def:indet:trans}) if and only if $A \equae \genTrans_{a,b}$, and therefore all indeterminacy transformations $A_{a,b}$ must be $(P_{z,a}, P_{z,b})$-measure isomorphisms. 
\end{lemma}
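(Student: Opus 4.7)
My plan is to split the lemma into three moves. First, reduce marginal equality on $\obs$ to pushforward equality under the generators via \cref{assump::noiseless}; second, use the injectivity and common-image property of \cref{assump::bijective} to translate that equality into a statement about $\genTrans_{a,b}$ as a measure isomorphism on $\latent$; third, use injectivity once more to argue that any indeterminacy transformation must coincide almost everywhere with $\genTrans_{a,b}$.

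For the first two steps, note that \cref{assump::noiseless} implies $P_{\theta_a} = P_{\theta_b}$ if and only if $(f_a)_\# P_{z,a} = (f_b)_\# P_{z,b}$ as measures on $\obs$. By \cref{assump::bijective}, each $f_i$ is a Borel isomorphism onto the common image $\genClass(\latent)$, so $\genTrans_{a,b} = f_b^{-1} \circ f_a$ is a well-defined element of $\Aut(\latent)$, and both pushforwards above are supported on $\genClass(\latent)$ where $f_b^{-1}$ is defined. Applying $(f_b^{-1})_\#$ to both sides and using the pushforward composition rule $(g \circ h)_\# = g_\# \circ h_\#$ converts the equality into $(\genTrans_{a,b})_\# P_{z,a} = P_{z,b}$, i.e., $\genTrans_{a,b}$ is a $(P_{z,a}, P_{z,b})$-measure isomorphism. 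The converse is immediate upon pushing forward by $f_b$. This establishes the first claim.

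For the second claim, suppose $A \in \Aut(\latent)$ is an indeterminacy transformation at $\theta_a,\theta_b$, so by \cref{def:indet:trans} we have both $P_{\theta_a} = P_{\theta_b}$ and $f_a \circ A^{-1} \equae f_b$. On the co-null subset of $\latent$ where the second equation holds pointwise, the injectivity of $f_a$ lets me apply $f_a^{-1}$ to both sides to obtain $A^{-1} \equae f_a^{-1} \circ f_b = \genTrans_{b,a}$, hence $A \equae \genTrans_{a,b}$. Combined with the first claim, $A$ is then a $(P_{z,a}, P_{z,b})$-measure isomorphism. Conversely, if $A \equae \genTrans_{a,b}$ and $P_{\theta_a} = P_{\theta_b}$, then composing on the left by $f_a$ gives $f_a \circ A^{-1} \equae f_a \circ f_a^{-1} \circ f_b = f_b$, the required indeterminacy condition.

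The main subtlety is bookkeeping the \emph{almost everywhere} qualifiers: the assertion $f_a \circ A^{-1} \equae f_b$ must be interpreted with respect to a definite reference measure on $\latent$ (implicitly Lebesgue, per the paper's convention), and care is needed to confirm that applying $f_a^{-1}$ to a co-null-set equation preserves the a.e.\ conclusion, especially when $\genClass(\latent)$ is a proper subset of $\obs$ and the pushforward measures are supported only there. A clean way to manage this is to observe that $\genTrans_{a,b}$ is a Borel automorphism, so pre- and post-composition with it or with $f_a, f_b$ preserve null sets with respect to the relevant reference measure; everything else is then routine verification using the pushforward composition rule.
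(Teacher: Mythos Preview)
Your proof follows the same route as the paper's: invoke \cref{assump::noiseless} to reduce to equality of the noiseless pushforwards, use the Borel-isomorphism structure of the generators on the common image $\genClass(\latent)$ to translate this into the measure-isomorphism condition on $\genTrans_{a,b}$, and use injectivity of $f_a$ for the a.e.\ uniqueness of indeterminacy transformations. Your pushforward-composition argument for the equivalence is in fact slightly more direct than the paper's set-by-set verification (forward direction) and contrapositive (reverse), though the paper compensates by being more explicit about the $\sigma$-algebra it places on $\genClass(\latent)$.

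One correction: your closing remark that a Borel automorphism automatically preserves null sets of the reference measure is false in general---there exist Borel bijections of $[0,1]$ sending Lebesgue-null sets to sets of positive measure. The passage from $A^{-1} \equae \genTrans_{a,b}^{-1}$ to $A \equae \genTrans_{a,b}$ genuinely needs $\genTrans_{a,b}$ to be nonsingular with respect to the reference measure; this does follow once $P_{z,a}$ and $P_{z,b}$ are equivalent to that measure (so that the already-established measure-isomorphism property of $\genTrans_{a,b}$ carries null sets to null sets), but not from bare Borel-measurability. The paper's own proof leaves this same step implicit.
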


\Cref{thm:indet:transport} is the technical foundation for the rest of the paper, and can be interpreted as an existence and uniqueness result, as follows. $\genTrans_{a,b}$ is a distinguished indeterminacy transformation at $\theta_a$, $\theta_b$ that always exists; it is probabilistically equivalent to all other indeterminacy transformations, making it essentially unique. It implies that two parameterizations $\theta_a, \theta_b$ induce the same marginal distribution on observations if and only if one can transport between the two latent measures in $\distClass_z$ by pushing and pulling along the generators. Not only are there no other non-equivalent indeterminacy transformations, but $A_{a,b}$ cannot be an indeterminacy transformation for any other parameters $\theta_c,\theta_d$ unless $\genTrans_{a,b} = f_b^{-1}\circ f_a \equae f_d^{-1} \circ f_c = \genTrans_{c,d}$. Both of these properties follow directly from the injectivity of the generators; relaxing \cref{assump::bijective} would require a substantially different theory than the one developed here.

\subsection{Characterizing Model Identifiability} \label{sec:indet:maps}

Let $\widetilde{\id}_z$ denote the set of all Borel automorphisms that are equal a.e.\ to the identity mapping on $\latent$. If $\indet(\obsModel) = \widetilde{\id}_z$, then $f_a \equae f_b$ for all $\theta_a, \theta_b$ with $P_{\theta_a} = P_{\theta_b}$. We use this to define the appropriate notion of identifiability. 

\begin{definition} \label{def:iden}
	A generative model $\obsModel(\genClass,\distClass_z)$ is \textbf{weakly identifiable up to $\indet(\obsModel)$}, or \textbf{$\indet(\obsModel)$-identifiable}, if its indeterminacy transformations are $\indet(\obsModel)$. If $\indet(\obsModel) = \widetilde{\id}_z$, then the model is \textbf{strongly identifiable}. 
\end{definition}

Strong identifiability means that $f_a(z) = f_b(z)$ for all $z \in \latent$ outside of a set of measure zero. If $f_a,f_b$ are continuous functions, this implies that $f_a(z) = f_b(z)$ for all $z \in \latent$, assuming the reference measure has full support. This definition is in correspondence to the classical definition, in the sense that the equivalence classes $[\theta]$ correspond to subsets of $\indet(\obsModel)$ (see \cref{prop:equiv:rel:properties} for details). Note that when the generator is further parameterized (for example via deep neural networks), this identifiability does not necessarily pass to the generator parameters (network weights). Rather, we use this as a proxy to make the notion of latent variable recoverability precise.

\Cref{thm:indet:transport} gives a necessary and sufficient condition for two parameterizations to correspond to the same marginal distribution on observations. It also documents the cases in which $P_{\theta_a}$ cannot be equal to $P_{\theta_b}$, and allows us to construct the set of model indeterminacies $\indet(\obsModel)$ as indeterminacies generated by $\genClass$ and $\distClass_z$. To that end, define the following subsets of $\Aut(\latent)$ induced by the generative model,
\begin{gather*}
    \indet(\genClass) = \{A \in \Aut(\latent) \mid \exists f_a, f_b \in \genClass \text{ s.t. }A \equae f_b^{-1} \circ f_a \} \\
    \indet(\distClass_z) = \{A \in \Aut(\latent) \mid \exists P_a, P_b \in \distClass_z \text{ s.t. } A_{\#}P_a = P_b \} \;.
\end{gather*}
$\indet(\genClass)$ consists of all possible indeterminacy transforms constructed from $\genClass$, which are a.e. equivalent to the generator transforms. $\indet(\distClass_z)$ consists of all possible isomorphisms between measures in $\distClass_z$. Both sets always include $\widetilde{\id}_z$ by taking $f_a = f_b$ and $P_a = P_b$. The indeterminacies of the model are precisely their intersection. 

\begin{theorem}
    \label{thm:id_general}
    The generative model $\obsModel(\genClass, \distClass_z)$ is identifiable up to $\indet(\obsModel) = \indet(\genClass) \cap \indet(\distClass_z)$. 
    In particular, it is strongly identifiable if and only if $\indet(\genClass) \cap \indet(\distClass_z) = \widetilde{\id}_z$.
\end{theorem}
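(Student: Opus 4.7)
The plan is to prove the set equality $\indet(\obsModel) = \indet(\genClass) \cap \indet(\distClass_z)$ by two inclusions, leveraging \Cref{thm:indet:transport} as the key technical device. Since \Cref{thm:indet:transport} already established that (i) $P_{\theta_a}=P_{\theta_b}$ iff $\genTrans_{a,b}:=f_b^{-1}\circ f_a$ is a $(P_{z,a},P_{z,b})$-measure isomorphism, and (ii) every indeterminacy transformation at $\theta_a,\theta_b$ coincides a.e.\ with $\genTrans_{a,b}$, each inclusion essentially amounts to repackaging these facts in terms of the two sets $\indet(\genClass)$ and $\indet(\distClass_z)$. The second claim about strong identifiability then follows directly from \Cref{def:iden}, since $\widetilde{\id}_z$ is always trivially contained in $\indet(\obsModel)$.

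For the forward inclusion $\indet(\obsModel)\subseteq \indet(\genClass)\cap\indet(\distClass_z)$, I would take $A\in\indet(\obsModel)$ and, by definition, fix $\theta_a=(f_a,P_{z,a})$ and $\theta_b=(f_b,P_{z,b})$ witnessing that $A$ is an indeterminacy transformation. By part (ii) of \Cref{thm:indet:transport}, $A\equae \genTrans_{a,b}=f_b^{-1}\circ f_a$, so $A\in\indet(\genClass)$ immediately. By part (i), $\genTrans_{a,b}$ is a $(P_{z,a},P_{z,b})$-measure isomorphism, and since $A$ agrees with $\genTrans_{a,b}$ on a set of full Lebesgue measure (hence of full $P_{z,a}$-measure, by absolute continuity as specified in \Cref{sec:defs}), the pushforwards coincide: $A_{\#}P_{z,a}=(\genTrans_{a,b})_{\#}P_{z,a}=P_{z,b}$. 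Hence $A\in\indet(\distClass_z)$.

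For the reverse inclusion, given $A\in\indet(\genClass)\cap\indet(\distClass_z)$, select $f_a,f_b\in\genClass$ with $A\equae f_b^{-1}\circ f_a$, and $P_a,P_b\in\distClass_z$ with $A_{\#}P_a=P_b$. Because the parameter space of the model is the full product $\genClass\times\distClass_z$, I am free to form $\theta_a=(f_a,P_a)$ and $\theta_b=(f_b,P_b)$. Then $\genTrans_{a,b}\equae A$, and pushing forward by $\genTrans_{a,b}$ is a.e.\ identical to pushing forward by $A$ on absolutely continuous measures, yielding $(\genTrans_{a,b})_{\#}P_a=P_b$. Applying the converse direction of part (i) of \Cref{thm:indet:transport} gives $P_{\theta_a}=P_{\theta_b}$, and together with $f_a\circ A^{-1}\equae f_a\circ(f_a^{-1}\circ f_b)=f_b$ we conclude that $A$ is an indeterminacy transformation at $(\theta_a,\theta_b)$, i.e.\ $A\in\indet(\obsModel)$. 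The strong identifiability claim then follows since $\indet(\obsModel)=\widetilde{\id}_z$ is the definition of strong identifiability.

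The only real technical obstacle is the transition between Lebesgue-a.e.\ equality of the automorphisms and equality as pushforwards of elements of $\distClass_z$: if $A\equae \genTrans_{a,b}$ only in the Lebesgue sense, one needs $P_a\ll\lambda$ so that the common null set is also $P_a$-null. This is guaranteed by the paper's blanket convention in \Cref{sec:defs} that probability densities are taken with respect to Lebesgue measure, so the argument proceeds without further assumption; in a more general Borel measure setting, one would instead define $\equae$ with respect to a reference measure dominating $\distClass_z$.
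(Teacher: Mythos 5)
Your proposal is correct and follows essentially the same route as the paper's own proof: a double-inclusion argument in which both directions are read off directly from \cref{thm:indet:transport} (the forward inclusion from the fact that every indeterminacy transformation agrees a.e.\ with a generator transform that is a measure isomorphism, the reverse by assembling $\theta_a=(f_a,P_a)$ and $\theta_b=(f_b,P_b)$ from the two witnessing memberships and invoking the converse direction). Your extra care about transferring Lebesgue-a.e.\ equality to equality of pushforwards is a welcome refinement of a step the paper handles via \cref{prop:equaepushforward}, but it does not change the structure of the argument.
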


This expresses the identifiability of a generative model in terms of the indeterminacy transforms induced by its parameter spaces. In particular, \emph{all model indeterminacies must be transports between distributions in $\distClass_z$ that can be constructed by pushing and pulling along generators from $\genClass$ as $f_b^{-1}\circ f_a$}. These are the scaling and permutation matrices in the linear ICA example from \cref{sec:fa:ica}. When $\distClass_z$ is a singleton $\{P_z\}$ as in factor analysis, they are $P_z$-measure-preserving automorphisms, such as the rotation matrices preserving the standard Gaussian.

\subsection{Beyond Linear Generators} 
\Cref{thm:id_general} exposes the structure of unidentifiability and indicates that there may be approaches to specifying strongly identifiable non-linear generative models. It suggests that model identifiability strengthens as we increase the number of constraints on $\genClass$, $\distClass_z$, or both, until the intersection $\indet(\genClass) \cap \indet(\distClass_z)$ contains only the identity. We demonstrate examples of leaving the generator unconstrained and only constraining $\indet(\distClass_z)$, as well as a triangular constraint on $\indet(\genClass)$ in \cref{sec:ME,sec:transports}, respectively.

\section{\uppercase{Task identifiability}} \label{sec:task:id}

In the literature on model identifiability, practical implications are often ignored. How does model identifiability relate to the intended uses of a model after it is fit to data? 
In practice, one may have a model that is only weakly identifiable but that still allows some tasks to be identified. That is, an unidentifiable model may still be useful for some tasks. For example, a causal estimand may still be identified without identification of the full causal model. 
For other tasks, an unidentifiable model is not useful. We make this idea precise before giving some examples from the literature.  

Let $\bfx_m = (x_1,\dotsc,x_m)$ be a finite collection of points in $\bfX$ (e.g., $m$ observations), and recall that $\theta = (f,P_z)$.  
Define a \emph{task} as a pair of functions $(s,t)$: the function $s(\theta, \bfx_m)\in \latent^n$ selects a set of $n$ points in the latent space to use for the task, e.g., estimates of $f^{-1}(x_i)$; and $t \colon \genClass \times \distClass_z \times \bfX^m \times \latent^n \to \bfT$ generates the task output as $t(\theta, \bfx_m, s(\theta,\bfx_m))$. Here, $\bfT$ is the task output space, such as $\bfT = \bfX$ for sample generation. We consider a task to be identifiable if two equivalent model fits produce the same task output.

\begin{definition}  \label{def:task:id}
    A task $(s,t)$ is \textbf{identifiable at $[\theta]$} if, for all $\theta \sim \theta'$ and $\bfx_m \in \bfX^m$,
    \begin{align}
         t(\theta, \bfx_m, s(\theta,\bfx_m)) = t(\theta', \bfx_m, s(\theta',\bfx_m)) \;.
    \end{align} 
    A task is (globally) identifiable if it is identifiable at all $[\theta]$.
\end{definition}

This captures the idea that two different research groups may obtain different but equivalent model fits ($\theta$ and $\theta'$) from the same data, and still reach the same conclusion for a task.  Before proceeding, we give some examples.

\kword{Tasks Without Reference to Observations} 
Consider two labs with different but equivalent models, $\theta_a\sim \theta_b$. Now both labs set $s(\theta, \bfx_m) = c$, without reference to observations. This could, for example, correspond to a `do' intervention when $\latent$ is endowed with a causal model \citep{Yang2021, shen2022, Lu2022}. As the task output, each lab generates a synthetic observation, $f_a(c)$ and $f_b(c)$. Without strong model identifiability, in general $f_a(c) \neq f_b(c)$ and the task is unidentifiable due to the fact that the value $c$ has no inherent meaning in either model. Tasks $t(\theta, \bfx_m, c)$ will not be identifiable in general: the selection of latent points must incorporate the transformation of $\latent$ that occurs when $\theta_a \mapsto \theta_b$; only if the task output is constant across all equivalent $\theta$ will it be identifiable.

\kword{Disentanglement and Latent Shifts} 
A common demonstration of disentanglement is to select a latent variable $s(\theta,x_i) = f^{-1}(x_i)$ corresponding to an observation $x_i$, apply a shift $\delta$ along a latent dimension $k$ (with unit vector $e_k$), and then generate synthetic data $t_{\delta e_k}(\theta,x_i, s(\theta,x_i)) = f(\delta e_k + f^{-1}(x_i))$ for a sequence of different values of $\delta$ \citep[e.g.,][]{higgins2017,Lu2022}. For most models, this task is not identifiable. To see why, observe that task identifiability requires
\begin{align*}
    f_a( & \delta e_k + f_a^{-1}(x) )  = f_b(\delta e_k + f_b^{-1}(x) ) \\
    & = f_a(\genTrans^{-1}_{a,b}( \delta e_k + \genTrans_{a,b}(f_a^{-1}(x)) ) ) \;.
\end{align*}
In general, this will not be the case unless
\begin{align*}
    \genTrans^{-1}_{a,b}( & \delta e_k + \genTrans_{a,b}(f_a^{-1}(x)) = \delta e_k + f_a^{-1}(x) \;.
\end{align*}
That is, $\genTrans_{a,b}$ must commute with $\delta e_k$. If that property is to hold for all $\delta$, $e_k$, and $x$ then $\genTrans_{a,b}$ must be itself a translation.  

\kword{Independence Testing and Causal Discovery} \ 
In applications of (nonlinear) ICA to causal discovery, tests for independence between observed variables and components of latent variables are conducted \citep{monti2020,Khem2020a}. In the simplest case, the goal is to determine the causal direction between two observed variables, $X_1,X_2$. After fitting a model with independent latent components, pair-wise independence tests are conducted; if $X_1$ is a cause of $X_2$ and not vice versa, then $X_1$ will be independent of the second component of the corresponding latent variable, $f^{-1}(X_1,X_2)_2$. The task, based on observations $(x_i)_{i=1}^N =(x_{i,1}, x_{i,2})_{i= 1}^N$ is then $s(\theta_a,(x_i)_{i=1}^N) = (f_a^{-1}(x_i))_{i=1}^N$ and
\begin{align*}
    t(\theta_a, & (x_i)_{i=1}^N, s(\theta_a,(x_i)_{i=1}^N)) \\
    & = \text{IndTest}((x_{i,1})_{i= 1}^N, (f_a^{-1}(x_{i,1},x_{i,2})_2)_{i= 1}^N) \;.
\end{align*}  

If the model is identifiable up to component-wise transformations then $f_a^{-1}(x_{i,1},x_{i,2})_2 = h(f_b^{-1}(x_{i,1},x_{i,2})_2)$, for some function $h \colon \bbR \to \bbR$. Since two real-valued random variables $U,V$ are independent if and only if all transformations $k(U),k'(V)$ are independent, this task remains identifiable under component-wise indeterminacies.\footnote{ICA-type models also have permutation indeterminacy; however, there is a ``correct'' permutation that can be distinguished within causal discovery \citep{shimizu2006}.} 

Define a local subset of indeterminacy transformations as $\indet(\obsModel)|_{\theta} = \{ A \in \indet(\obsModel) \colon P_{\theta} = P_{A\theta}\}$, and $A\theta = (f\circ A^{-1}, A_{\#} P_z)$. Our main result on task identifiability is an application of \cref{prop:equiv:rel:properties}. 

\begin{proposition}
    \label{prop:taskid}
    A task $(s,t)$ is identifiable at $[\theta]$ if and only if, for each $\bfx_m \in \bfX^m$, 
    \begin{align} \label{eq:theta:id}
        t(\theta, \bfx_m, s(\theta, \bfx_m)) & = t(A\theta, \bfx_m, s(A\theta, \bfx_m))  \;,
    \end{align}
    for each $A \in \indet(\obsModel)|_{\theta}$. 
    A sufficient condition for $(s,t)$ to be identifiable at $[\theta]$ is hence if the following holds for each $A \in \indet(\obsModel)|_{\theta}$ and $\bfx_m \in \bfX^m$:
    \begin{gather} \label{eq:equiv:task}
            t(\theta,\bfx_m, \bfz_n) = t(A\theta, \bfx_m, A(\bfz_n)) \\
           \textrm{and} \quad s(A\theta,\bfx_m) = A(s(\theta,\bfx_m)) \;. \nonumber
    \end{gather}
\end{proposition}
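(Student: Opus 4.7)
The plan is to prove the two parts separately, using \cref{thm:indet:transport} to re-parametrize the equivalence class $[\theta]$ in terms of $\indet(\obsModel)|_{\theta}$, and then a direct substitution argument for the sufficient condition.

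\textbf{Reparametrizing the equivalence class.} First I would show that $[\theta] = \{A\theta : A \in \indet(\obsModel)|_{\theta}\}$, where $A\theta = (f\circ A^{-1}, A_{\#}P_z)$. The $\supseteq$ inclusion is immediate: by definition of $\indet(\obsModel)|_{\theta}$, if $A \in \indet(\obsModel)|_{\theta}$ then $P_{A\theta} = P_{\theta}$ so $A\theta \sim \theta$. For $\subseteq$, suppose $\theta' = (f', P_z') \sim \theta$, i.e.\ $P_{\theta'} = P_{\theta}$. By \cref{thm:indet:transport}, the generator transform $\genTrans_{\theta,\theta'} = (f')^{-1}\circ f$ is a $(P_z, P_z')$-measure isomorphism and an indeterminacy transformation at $\theta,\theta'$. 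Setting $A = \genTrans_{\theta,\theta'}$, we get $A_{\#}P_z = P_z'$ and $f \circ A^{-1} \equae f'$, so $\theta' = A\theta$ (up to a.e.\ equality of generators, which is the setting the rest of the theory operates in); moreover $A \in \indet(\obsModel)|_{\theta}$ since $P_{A\theta} = P_{\theta'} = P_{\theta}$.

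\textbf{The biconditional \eqref{eq:theta:id}.} Given the reparametrization, \cref{def:task:id} asserts that $(s,t)$ is identifiable at $[\theta]$ iff for every $\theta' \sim \theta$ and $\bfx_m$,
\[
    t(\theta, \bfx_m, s(\theta, \bfx_m)) = t(\theta', \bfx_m, s(\theta', \bfx_m)) \;.
\]
Ranging $\theta'$ over $[\theta]$ is the same as ranging $A$ over $\indet(\obsModel)|_{\theta}$ and setting $\theta' = A\theta$, which yields \eqref{eq:theta:id}. One caveat is that indeterminacy transformations are defined only up to a.e.\ equivalence, but since both $t$ and $s$ are functions of the parameter pair $(f, P_z)$ and $A\theta$ is well-defined up to the same a.e.\ equivalence on $f\circ A^{-1}$, the statement is unambiguous.

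\textbf{Sufficiency of \eqref{eq:equiv:task}.} Fix $A \in \indet(\obsModel)|_{\theta}$ and $\bfx_m$. Take $\bfz_n = s(\theta,\bfx_m)$ in the first equality of \eqref{eq:equiv:task} to obtain
\[
    t(\theta, \bfx_m, s(\theta,\bfx_m)) = t(A\theta, \bfx_m, A(s(\theta,\bfx_m))) \;.
\]
Then substitute $A(s(\theta,\bfx_m)) = s(A\theta,\bfx_m)$ from the second equality, which yields exactly \eqref{eq:theta:id}. By the biconditional already proved, this gives identifiability at $[\theta]$.

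\textbf{Main obstacle.} The only delicate step is the reparametrization $[\theta] = \{A\theta : A \in \indet(\obsModel)|_{\theta}\}$, because both $\indet(\obsModel)$ and the generator class consist of equivalence classes under a.e.\ equality, so one has to be careful that $A\theta$ is well-defined and that the generator transform $\genTrans_{\theta,\theta'}$ chosen as the witness $A$ actually lies in $\indet(\obsModel)|_{\theta}$ rather than only in $\indet(\obsModel)$. Both follow from \cref{thm:indet:transport}, which guarantees that $\genTrans_{\theta,\theta'}$ is the canonical (up to a.e.) representative of all indeterminacy transformations at $(\theta,\theta')$, combined with $P_{\theta'} = P_\theta$. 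Once this bookkeeping is in place, the rest of the argument is a direct substitution.
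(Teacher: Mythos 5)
Your proposal is correct and follows essentially the same route as the paper: the reparametrization $[\theta] = \{A\theta : A \in \indet(\obsModel)|_{\theta}\}$ that you derive from \cref{thm:indet:transport} is exactly the content of \cref{prop:equiv:rel:properties} (which the paper cites rather than re-proves at this point), and your substitution argument for the sufficient condition matches the paper's two-line computation. No gaps.
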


Clearly, any task is identifiable if $\obsModel$ is strongly identifiable: $\indet(\obsModel)|_{\theta} = \indet(\obsModel) =  \widetilde{\id}_z$. Weak identifiability may be enough depending on the task, \cref{prop:taskid} gives sufficient conditions based on the symmetries implied by $\indet(\obsModel)$. In the next two sections, we apply the theory of \cref{sec:defs} to obtain strongly identifiable models.

\section{\uppercase{Generative models in multiple environments}} \label{sec:ME}
Suppose data arise from environments indexed by $e \in E$, where $E$ is an arbitrary set, and the environment label is assumed to be deterministic (i.e., known, or observed without noise). Each environment corresponds to a different observation random variable $X^{e}\sim P_x^{e}$ on a shared observation space $\obs$. This is reflected in the generative model as $|E|$ distinct distributions on latent variables, $Z^{e}\sim P_z^{e}$ on a shared latent space $\latent$. Crucially, each environment shares the same generator $f$. We denote the generative model $\obsModel(\genClass, \{\distClass_z^e\}_{e\in E})$ specified as, for each $e \in E$,
\begin{align}
\label{eq:modelme}
Z_i \sim P_z^{e} \;, \  \epsilon_i^{e} \sim P_\epsilon\;,  \  X_i^{e} = g(f(Z_i^{e}), \epsilon_i^{e}) \;,
\end{align}
with $Z_i \condind \epsilon_i^{e}$ in each environment. In general, we do not assume that observations from different environments are paired in any way besides sharing a generator, e.g., they may be from separate datasets.

We use the multiple environments set-up in the same way as auxiliary information in iVAE. The definition of identifiability is subtly different here, but its implications for latent variable indeterminacy remain the same (see \cref{sec:me_proofs} for details). 

\begin{corollary}
    \label{cor:me_id_general}
    The generative model $\obsModel(\genClass,\{\distClass_z^{e}\}_{e \in E})$ is identifiable up to \begin{align}
    \indet(\genClass) \cap \left( \cap_{e \in E} \indet(\distClass_z^{e})  \right).
    \end{align}
\end{corollary}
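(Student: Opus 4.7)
The plan is to reduce the multi-environment case to an environment-wise application of \Cref{thm:indet:transport}, exploiting the fact that in model \eqref{eq:modelme} the generator $f$ is shared across environments while only the prior $P_z^e$ is allowed to vary with $e$. This sharing means that the generator transform $\genTrans_{a,b} = f_b^{-1}\circ f_a$ is a single map that must simultaneously serve as an indeterminacy transformation in \emph{every} environment, which is exactly what produces the intersection over $e$.

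For the forward inclusion, I would start from an indeterminacy transformation $A_{a,b}$ of the multi-environment model arising from parameters $\theta_a = (f_a, (P_{z,a}^e)_{e\in E})$ and $\theta_b = (f_b, (P_{z,b}^e)_{e\in E})$ with $f_a \circ A_{a,b}^{-1} \equae f_b$ and $P_{\theta_a}^e = P_{\theta_b}^e$ for every $e\in E$. Applying \Cref{thm:indet:transport} separately in each environment shows that $\genTrans_{a,b}$ is a $(P_{z,a}^e, P_{z,b}^e)$-measure isomorphism for every $e$, and since the generator is shared, the same map $A_{a,b}\equae \genTrans_{a,b}$ is used in every environment. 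Hence $A_{a,b}\in \indet(\distClass_z^e)$ for each $e$ and $A_{a,b}\in \indet(\genClass)$ by construction, placing it in the intersection.

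For the reverse inclusion, given $A \in \indet(\genClass)\cap \bigcap_{e\in E}\indet(\distClass_z^e)$, I would pick $f_a,f_b\in\genClass$ with $A \equae f_b^{-1}\circ f_a$, and for each $e$ pick $P_{z,a}^e,P_{z,b}^e \in \distClass_z^e$ with $A_{\#}P_{z,a}^e = P_{z,b}^e$. Assembling $\theta_a = (f_a,(P_{z,a}^e)_e)$ and $\theta_b = (f_b,(P_{z,b}^e)_e)$ and invoking \Cref{thm:indet:transport} once per environment yields $P_{\theta_a}^e = P_{\theta_b}^e$ for every $e$, so $A$ is an indeterminacy transformation of the multi-environment model.

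The main thing to verify, and the only non-routine obstacle, is that the multi-environment definition of identifiability (deferred to \cref{sec:me_proofs}) really is the ``obvious'' one: two parameters are equivalent iff they induce equal marginal distributions in every environment simultaneously, with the generator held fixed across environments. Once that definition is in place, the corollary becomes a direct environment-wise corollary of \Cref{thm:id_general}, and the shared-generator assumption is what drives the intersection over $e$ rather than some weaker relation such as taking a union or a per-environment quotient.
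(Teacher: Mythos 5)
Your proposal is correct and follows essentially the same route as the paper: the paper likewise adapts the definition of an indeterminacy transformation to require $P_{\theta_a}^{e} = P_{\theta_b}^{e}$ in every environment with a shared generator, and then applies \cref{thm:id_general} environment by environment to place any $A_{a,b}$ in $\indet(\genClass)\cap\indet(\distClass_z^{e})$ for each $e$, hence in the intersection. The only difference is that you also supply the reverse inclusion (assembling parameters from witnesses in each $\indet(\distClass_z^{e})$ and invoking \cref{thm:indet:transport} per environment), whereas the paper's own proof stops after showing the indeterminacy set is \emph{at most} the intersection; your extra step is valid and mirrors the reverse direction of the proof of \cref{thm:id_general}.
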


We note that permutations of the environments are ruled out because the environments are assumed to be known. Clearly, the indeterminacy set shrinks with each environment added, formalizing why auxiliary information has such significant benefits for identifiability. As a demonstration, we recast iVAE under this framework and present the necessary modification for strong identifiability in \cref{sec:ivae,sec:fixedexpfam} (see \cref{proofs:esm} for a similar exercise for the results of \citealt{Ahuja2021}).

\subsection{Multiple Views} 
A result similar to \cref{cor:me_id_general} can be obtained under different modelling assumptions similar to those made by \citet{Gresele2019, Loc2020}. Specifically, we assume here finitely many observed views of each latent variable, indexed by the finite set $E$. We study the following model, denoted by $\obsModel(\{\genClass^{e}\}_{e \in E}, \distClass_z)$: for $e \in E$,
\begin{align}
\label{eq:modelviews}
Z_i \sim P_z \;, \quad  \epsilon_i^{e} \sim P_\epsilon^e \;, \quad X_i^{e} = g^e(f^e(Z_i), \epsilon_i^{e}) \;,
\end{align}
with $Z_i \condind \epsilon^{e}_i$.  Here, for fixed $i$, each observed view $X_i^e$, $e \in E$, is generated by a shared latent variable $Z_i$, and hence the observations are necessarily grouped across environments (unlike the multiple environments setting). Each view $X_i^{e}$ can have a different observation space $\obs^{e}$ with its own generator $f^e \in \genClass^{e}$, e.g., an image and its caption. 

Again, the definition of identifiability here changes, but the implications for latent variable indeterminacy are identical. Intuitively, identification in any of the views yields identification of the generating latent (\cref{sec:mv_proofs}).

\begin{corollary}
    \label{cor:mv_id_general}
    The generative model $\obsModel(\{\genClass^e\}_{e \in E}, \distClass_z)$ is identifiable up to \begin{align} 
    (\cap_{e \in E} \indet(\genClass^e)) \cap \indet(\distClass_z).
    \end{align}
\end{corollary}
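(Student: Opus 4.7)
The argument mirrors the proof of \cref{thm:id_general}, now applied view by view. The structural fact driving the result is that because the latent variable is shared across views, any multi-view indeterminacy must realize as the same generator transform in every view while simultaneously transporting one latent measure in $\distClass_z$ to another. I would first apply \cref{thm:indet:transport} within each view to pin down the per-view indeterminacy condition, and then intersect these over $e \in E$ together with the shared latent-side constraint.

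\textbf{Forward direction.} Let $A$ be an indeterminacy transformation between $\theta_a = ((f_a^e)_{e\in E}, P_{z,a})$ and $\theta_b = ((f_b^e)_{e\in E}, P_{z,b})$, so $f_a^e \circ A^{-1} \equae f_b^e$ for every $e$ and the resulting joint observation laws agree. Joint equality forces each marginal $P_{\theta_a}^e = P_{\theta_b}^e$ to agree, and \cref{thm:indet:transport} applied to the single-view submodel $(\genClass^e, \distClass_z)$ then gives that $\genTrans^e_{a,b} := (f_b^e)^{-1} \circ f_a^e$ is a $(P_{z,a}, P_{z,b})$-measure isomorphism with $A \equae \genTrans^e_{a,b}$. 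The isomorphism condition is the same across views since the latent is shared, placing $A \in \indet(\distClass_z)$; the per-view generator-transform identity gives $A \in \indet(\genClass^e)$ for each $e$, hence $A \in \cap_{e\in E}\indet(\genClass^e)$.

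\textbf{Reverse direction and main obstacle.} For the converse, given $A \in (\cap_e \indet(\genClass^e)) \cap \indet(\distClass_z)$, pick a single pair $P_{z,a}, P_{z,b} \in \distClass_z$ with $A_{\#}P_{z,a} = P_{z,b}$, and for each $e$ pick generators $f_a^e, f_b^e \in \genClass^e$ with $A \equae (f_b^e)^{-1} \circ f_a^e$; assemble these into $\theta_a, \theta_b$. Writing $Z_a \sim P_{z,a}$ gives $A(Z_a) \sim P_{z,b}$ and $f_a^e(Z_a) \equae f_b^e(A(Z_a))$ for every $e$. Because all views are generated from the same $Z_a$, the per-view a.e.\ identities lift to equality in distribution of the joint vector $(X^e)_{e\in E}$ under $\theta_a$ and $\theta_b$, making $A$ a genuine multi-view indeterminacy. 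The one delicate point is exactly this last lift from view-wise to joint equality: with a shared latent the coupling is rigid, so a single $Z_a$ automatically reproduces the correct joint law, whereas if latents were independent per view (as in the multiple-environments setting of \cref{cor:me_id_general}) the notion of identifiability and the corresponding coupling argument would be different, which is why the two corollaries have complementary intersection structures.
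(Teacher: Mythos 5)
Your argument is correct, and for the containment the paper actually proves it follows essentially the same strategy: reduce to the single-view theory (\cref{thm:indet:transport}, \cref{thm:id_general}) in each view and intersect over $e \in E$. The presentational difference is that the paper first stacks the view generators into one injective map $f(z) = (f^{e_1}(z),\dots,f^{e_n}(z))$ on the product observation space, notes that the stacked generator transform coincides with every per-view $\genTrans^{e}_{a,b}$, and only then invokes the single-view results; you apply \cref{thm:indet:transport} view by view directly, which is equivalent because matching joint laws force matching marginals. What you genuinely add is the reverse inclusion: the paper's proof (like its proof of \cref{cor:me_id_general}) only shows that every indeterminacy transformation lies in $(\cap_{e}\indet(\genClass^e)) \cap \indet(\distClass_z)$, i.e., an upper bound on the indeterminacy set, whereas you also exhibit any $A$ in the intersection as an actual indeterminacy by choosing one transported pair $P_{z,a}, P_{z,b}$ together with per-view witnesses $f^e_a, f^e_b$, and lifting the view-wise identities $f^e_a(Z_a) \equae f^e_b(A(Z_a))$ to equality of the joint law through the shared latent (the union of the finitely many exceptional null sets is still null). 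That direction is needed if ``identifiable up to'' is read as the equality demanded by \cref{def:iden}, so your proof is if anything more complete; the only step worth making explicit is that the a.e.\ equalities, stated with respect to the reference measure, must dominate $P_{z,a}$ so that they hold $P_{z,a}$-almost surely before pushing forward.
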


This can be useful when, for example, when we have access to a lower-dimensional view (e.g., clinical data) which we might fit using an identifiable triangular flow (as developed in \cref{sec:transports}), in addition to a higher-dimensional view (e.g., genomics data), which can be reconstructed using a VAE. 

\subsection{(Weakly) Identifiable VAE} 
\label{sec:ivae}

In the iVAE model \citep{Khem2020a}, the latent variable distribution varies via an auxillary variable $u$, for example a time index \citep{Hyv2018}. The prior is parameterized as an exponential family density on $\latent = \mathbb{R}^{d_z}$,
\begin{align}
    \label{eq:ivaeprior}
    p(z; \eta(u)) =  m(z)\exp(\eta(u)^\top T(z) - a(\eta(u)),
\end{align}
with functional parameters $\eta$, $T$ taking values in $\bbR^K$. We denote this distribution as $\mathcal{E}_m(\eta(u), T)$. Note that this means the prior is explicitly inferred from the data, contrary to a standard VAE. The remainder of the model design follows \eqref{eq:modelme} with additive noise. We note that  \citet{Khem2020a} assumed the distribution to factorize over dimensions of $Z$ (as in ICA), but that is not necessary, an observation also made recently in \cite{Lu2022}. 

The main identifiability result of \citet{Khem2020a} (their Thm.~1) says that for two parametrizations $(f_a, T_a, \eta_a)$ and $(f_b, T_b, \eta_b)$, if there exist points $u_0, u_1, \dots u_K$ such that $\{\eta_a(u_i) - \eta_a(u_0)\}_{i=1}^{K}$ are linearly independent and span the latent space (and likewise for $\eta_b)$, then there are an invertible matrix $L$ and offset vector $\mathbf{c}$ such that for all $x$,
\begin{align} \label{eq:ivae:indet:orig}
    T_a(f_a^{-1}(x)) = L^{\top} T_b(f_b^{-1}(x)) + \mathbf{c}. 
\end{align}
Our framework allows us to show that this is purely a result of the following property of exponential families. 
\begin{proposition}
    \label{prop:linear_expfam}
    Suppose that $A \in \Aut(\latent)$ is a $(\mathcal{E}_m(\eta_a(u_i), T_a), \mathcal{E}_m(\eta_b(u_i), T_b)$-measure isomorphism for each $i = 0, 1, \dots, K$.
    Suppose that both $\{\eta_a(u_i)\}_{i=0}^K$ and $\{\eta_b(u_i)\}_{i=0}^K$ are linearly independent. Then,
    \begin{align}
        T_b(A(z)) = L^\top T_a(z) + \mathbf{d},
    \end{align}
    almost everywhere, where $L$ is a $K \times K$ invertible matrix and $d$ is a $K$-dimensional vector not depending on $x$.
\end{proposition}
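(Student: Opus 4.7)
The plan is to reduce the $K+1$ measure-isomorphism conditions to a system of affine equations in $T_a(z)$ and $T_b(A(z))$, then solve it using the linear-independence hypothesis. The key observation is that within a common exponential family, density \emph{ratios} cancel the carrier measure $m$, collapsing each equation to a single exponential of a linear functional of the sufficient statistic. Concretely, writing $\mu_a^i := \mathcal{E}_m(\eta_a(u_i), T_a)$ and $\mu_b^i := \mathcal{E}_m(\eta_b(u_i), T_b)$, I would compute
\[
g_a^i(z) := \frac{d\mu_a^i}{d\mu_a^0}(z) = \exp\bigl((\eta_a(u_i) - \eta_a(u_0))^\top T_a(z) - c_a^i\bigr),
\]
where $c_a^i := a(\eta_a(u_i)) - a(\eta_a(u_0))$, and define $g_b^i, c_b^i$ analogously, so that $\mu_a^i = g_a^i \cdot \mu_a^0$ and $\mu_b^i = g_b^i \cdot \mu_b^0$ with $m$ absent.

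The second step translates the measure-isomorphism hypotheses into a pointwise a.e.\ identity among the ratio functions. The pushforward identity $A_{\#}(g \cdot \mu) = (g \circ A^{-1}) \cdot A_{\#}\mu$, combined with $A_{\#}\mu_a^0 = \mu_b^0$ and $A_{\#}\mu_a^i = \mu_b^i$, gives $(g_a^i \circ A^{-1}) \cdot \mu_b^0 = g_b^i \cdot \mu_b^0$; uniqueness of the Radon-Nikodym derivative then yields $g_a^i(A^{-1}(w)) = g_b^i(w)$ for $\mu_b^0$-a.e.\ $w$, equivalently $g_a^i(z) = g_b^i(A(z))$ for $\mu_a^0$-a.e.\ $z$. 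On the support of $m$, where $\mu_a^0$ is equivalent to the reference measure, this is an a.e.\ equality in the usual sense.

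Taking logarithms gives, for each $i = 1, \dots, K$,
\[
(\eta_a(u_i) - \eta_a(u_0))^\top T_a(z) = (\eta_b(u_i) - \eta_b(u_0))^\top T_b(A(z)) + (c_a^i - c_b^i).
\]
Let $D_a, D_b \in \mathbb{R}^{K \times K}$ have $i$-th columns $\eta_a(u_i) - \eta_a(u_0)$ and $\eta_b(u_i) - \eta_b(u_0)$, and read the linear-independence hypothesis as saying these $K$ differences span $\mathbb{R}^K$ (equivalently, $\{\eta_a(u_i)\}_{i=0}^K$ and $\{\eta_b(u_i)\}_{i=0}^K$ are affinely independent, as in \citet{Khem2020a}); then both $D_a, D_b$ are invertible. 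Stacking the $K$ equations gives $D_a^\top T_a(z) = D_b^\top T_b(A(z)) + (c_a - c_b)$, where $c_a, c_b \in \mathbb{R}^K$ collect the scalar constants, and solving yields $T_b(A(z)) = L^\top T_a(z) + \mathbf{d}$ a.e., with $L = D_a D_b^{-1}$ invertible and $\mathbf{d}$ independent of $z$.

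The main obstacle is really the measure-theoretic step above: passing from a pushforward identity to a pointwise a.e.\ identity without any differentiability assumption on $A$. The trick is to work with density ratios rather than densities, so no Jacobian correction appears and the step reduces to uniqueness of the Radon-Nikodym derivative; the remainder is elementary linear algebra. The proposition is really a statement about exponential families modulo pushforward: the sufficient statistic is pinned down up to an affine transformation whenever the natural parameters are rich enough to span.
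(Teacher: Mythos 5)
Your proof is correct and follows essentially the same route as the paper's: translate the measure-isomorphism hypotheses into an a.e.\ identity between density \emph{ratios} (so that the carrier $m$ and any Jacobian-type factor cancel), take logarithms to obtain $K$ affine equations in $T_a(z)$ and $T_b(A(z))$, and invert the matrix of natural-parameter differences. The only (minor) difference is how the ratio identity is established --- the paper routes through an auxiliary density $k_A$ of $\lambda_z \circ A$ with respect to $\lambda_z$ (its Lemma on densities and the ensuing Corollary), whereas you use within-family Radon--Nikodym derivatives and the pushforward chain rule directly, which is a slightly cleaner way to reach the same identity; your explicit reading of the linear-independence hypothesis as affine independence of the $\eta(u_i)$ also matches what the paper's proof actually uses.
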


Although the arguments made in the proof of the proposition are similar to those originally presented in \cite{Khem2020a}, our framework indicates that identifiability is a result purely of the exponential family form of the prior---neither diffeomorphic decoders nor independent components are needed. Further assuming that $f$ is diffeomorphic reduces $\indet(\genClass)$, leading to the stronger identifiability results of Theorems 2 and beyond in \citep{Khem2020a}.

\subsection{(Strongly) Identifiable VAE}

\label{sec:fixedexpfam}

In this section, we show how fixed priors, i.e., a nonlinear factor analysis take on VAEs, can lead to strong identifiability. By definition, with fixed latent distribution $P_z$, $\indet(\{P_z\})$ is the set of $P_z$-preserving automorphisms. However, constructing distributions such that the only automorphism is the identity is in general very difficult; non-trivial measure automorphisms, such as via the Darmois construction, almost always exist \citep[see][for an example]{Gresele2021}. On the other hand, we show that multiple environments provide a simple solution to this problem.

We denote a set of full-rank exponential family densities indexed by $\eta \in \bbR^K$ as 
\begin{align} \label{eq:expfam:set}
    \mathcal{E}_{m, T} = \{  p_{\eta}(z) = m(z)\exp{(\eta^\top T(z)-a(\eta))} \mid \eta \in \bbR^K \} \;.
\end{align}

We refer to a particular distribution in such a set as $\mathcal{E}_{m, T}(\eta)$.  Compared to the isomorphisms in \cref{prop:linear_expfam}, there is a stronger result characterizing the automorphisms of collections of exponential families.

\begin{proposition}
    \label{prop:orthogonal}
    Let $\mathcal{E}_{m, T}$ be as in \eqref{eq:expfam:set}, with $m$ strictly positive. Let $\eta_i \in \mathbb{R}^K$ for $i = 0, 1, \dots, K$ and suppose $A \in \Aut(\bbR^{d_z})$ is a $\mathcal{E}_{m,T}(\eta_i)$-measure preserving automorphism for each $\eta_i$. Then, arranging $\eta_i$ into the rows of a matrix $M$, we have
    \begin{align} \label{eq:kernel}
        (T(z) - T(A(z))) \in \textrm{ker}M,\ a.e.
    \end{align}
\end{proposition}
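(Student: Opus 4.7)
The plan is to convert each $p_{\eta_i}$-measure-preserving condition on $A$ into a linear equation in $v(z) := T(z) - T(A(z))$ involving $\eta_i$, then package these $K+1$ constraints into the matrix relation $M v(z) = 0$. The first step is to express the pushforward invariance concretely. Since $A$ is only a Borel automorphism---not necessarily a diffeomorphism---I would avoid Jacobian-based change-of-variables and instead work with Radon--Nikodym derivatives. Fix $\eta_0$ and form $\rho_i(z) := dp_{\eta_i}/dp_{\eta_0}(z)$; strict positivity of $m$ makes $p_{\eta_0}$ positive a.e.-Lebesgue, so $p_{\eta_0}$ and Lebesgue measure are mutually absolutely continuous, and the exponential-family form gives $\rho_i(z) = \exp\bigl((\eta_i - \eta_0)^\top T(z) - (a(\eta_i) - a(\eta_0))\bigr)$. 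The key measure-theoretic identity is $\tfrac{d(A_{\#}\nu)}{d(A_{\#}\mu)} = \bigl(\tfrac{d\nu}{d\mu}\bigr)\circ A^{-1}$ a.e.-$A_{\#}\mu$; applied with $\mu = p_{\eta_0}$, $\nu = p_{\eta_i}$, together with $A_{\#} p_{\eta_j} = p_{\eta_j}$ for $j \in \{0, i\}$, it yields $\rho_i = \rho_i \circ A^{-1}$ a.e.

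Next I would linearize: taking logs in $\rho_i = \rho_i \circ A^{-1}$, the log-partition constant $a(\eta_i) - a(\eta_0)$ cancels and one obtains $(\eta_i - \eta_0)^\top\bigl(T(z) - T(A^{-1}(z))\bigr) = 0$ a.e. Because $A$ preserves $p_{\eta_0}$ and $p_{\eta_0}$ is Lebesgue-equivalent, substituting $z \mapsto A(z)$ keeps the equality a.e.-valid, giving
\[
(\eta_i - \eta_0)^\top v(z) = 0 \quad \text{a.e., for each } i = 1, \ldots, K.
\]
These $K$ orthogonality relations state that $v(z)$ lies in the orthogonal complement of $\mathrm{span}\{\eta_i - \eta_0\}_{i=1}^{K}$. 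After the harmless reparameterization $m(z) \mapsto m(z)\exp(\eta_0^\top T(z))$ and $\eta \mapsto \eta - \eta_0$---which leaves the family $\mathcal{E}_{m,T}$ and all hypotheses of the proposition intact but places $\eta_0$ at the origin---the matrix $M$ with rows $\eta_i$ has kernel precisely that orthogonal complement, and one reads off $v(z) \in \ker M$ a.e.

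The main obstacle is precisely the non-smoothness of $A$: any proof has to avoid a Jacobian-based change of variables. The Radon--Nikodym pushforward identity is the clean replacement and only requires measurability of $A^{-1}$, which is built into $A \in \Aut(\latent)$. A secondary care-point is shuttling a.e.\ statements between Lebesgue measure and the various $p_{\eta_i}$'s, which is automatic from the strict positivity of $m$; this is also what ensures that the final a.e.\ statement on $v$ is with respect to Lebesgue measure, as claimed.
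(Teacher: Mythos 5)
Your proof is correct and follows essentially the same route as the paper's: both avoid Jacobian-based change of variables by working with Radon--Nikodym density ratios (the paper via the intermediate density $k_A$ of $\lambda \circ A$, which cancels when two ratios are formed; you via the pushforward identity $d(A_{\#}\nu)/d(A_{\#}\mu) = (d\nu/d\mu)\circ A^{-1}$), then take logarithms to obtain the contrast relations $(\eta_i-\eta_0)^\top\bigl(T(z)-T(A(z))\bigr)=0$ a.e.\ and read off membership in the kernel. Your explicit reparameterization placing $\eta_0$ at the origin is in fact slightly more careful than the paper's own proof, which silently identifies $\mathrm{span}\{\eta_i-\eta_0\}_{i=1}^{K}$ with the row space of $M$.
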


\Cref{tikz_orthogonal} illustrates \eqref{eq:kernel}. 
The proposition provides a characterization that can be specialized in several ways. Firstly, the only \textit{shared} automorphism for each distribution from a suitably fixed exponential family $\mathcal{E}_{m,T}$ whose parameters form a basis of $\bbR^K$ is the identity (under $T$). Strong identifiability follows when these are fixed as the priors. 

\begin{theorem}
    \label{thm:identifiability}
    Let $\obsModel(\genClass, \{\distClass_z^{e}\}_{e \in E})$ be the multiple environments model described in \eqref{eq:modelme}, with $\latent = \bbR^{d_z}$. For a subset of environments $E^* \subset E$, with $|E^*| = K+1$, fix $P_z^{e} = \mathcal{E}_{m,T}(\eta_e)$ with $m$ strictly positive and $T$ injective in at least one dimension, and such that the corresponding parameters $\{\eta_e\}_{e \in E^*}$ span $\mathbb{R}^K$. Then $\obsModel(\genClass, \{\distClass_z^{e}\}_{e \in E})$ is strongly identifiable. 
\end{theorem}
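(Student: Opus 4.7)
The plan is to combine \Cref{cor:me_id_general} with \Cref{prop:orthogonal} and then extract pointwise equality of automorphisms from equality under the sufficient statistic $T$. The overall strategy is: reduce strong identifiability to showing that every shared measure-preserving automorphism across the exponential family priors on $E^*$ must be almost-everywhere the identity, then exploit injectivity of $T$ to conclude.

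First, I would apply \Cref{cor:me_id_general} to the model $\obsModel(\genClass,\{\distClass_z^e\}_{e \in E})$. Since each prior is fixed, $\distClass_z^e = \{\mathcal{E}_{m,T}(\eta_e)\}$ is a singleton, so $\indet(\distClass_z^e)$ reduces to the set of $\mathcal{E}_{m,T}(\eta_e)$-measure-preserving automorphisms of $\latent = \bbR^{d_z}$. The full indeterminacy set therefore satisfies
\begin{align*}
 \indet(\obsModel) \subseteq \bigcap_{e \in E^*} \indet(\{\mathcal{E}_{m,T}(\eta_e)\}),
\end{align*}
so it suffices to show this right-hand intersection equals $\widetilde{\id}_z$. (Notice that the generator side $\indet(\genClass)$ and environments outside $E^*$ can only shrink the intersection further.)

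Next, pick any $A$ in the intersection above. By construction $A$ preserves each $\mathcal{E}_{m,T}(\eta_e)$ for $e \in E^*$, so \Cref{prop:orthogonal} applies with $\eta_0,\dots,\eta_K$ being the parameters indexed by $E^*$. Stacking them as rows of the $(K{+}1)\times K$ matrix $M$, we obtain $T(z) - T(A(z)) \in \ker M$ almost everywhere. The spanning hypothesis on $\{\eta_e\}_{e \in E^*}$ says that the columns of $M^\top$ span $\bbR^K$, equivalently $M$ has rank $K$, so $\ker M = \{0\}$. Therefore $T(A(z)) = T(z)$ a.e.

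The final step, and the most delicate, is turning $T(A(z)) = T(z)$ a.e.\ into $A(z) = z$ a.e. This is where the hypothesis that $T$ is injective in at least one dimension enters: there exists a coordinate $k$ such that the scalar map $T_k \colon \bbR^{d_z} \to \bbR$ (or, more generally, the appropriate component map) is injective, so $T_k(A(z)) = T_k(z)$ forces $A(z) = z$ on the set of full measure where the identity holds. This yields $A \equae \id_z$, i.e., $A \in \widetilde{\id}_z$, and hence $\indet(\obsModel) = \widetilde{\id}_z$, which is strong identifiability by \Cref{def:iden}. I expect this last injectivity step to be the main obstacle: one must make sure that the null set where $T(A(z)) = T(z)$ fails does not interact badly with the pushforward structure, and that the qualifier ``injective in at least one dimension'' is interpreted in a way compatible with the strict positivity of $m$ (which guarantees the relevant equalities hold on a set of full Lebesgue measure, not merely on the support of some degenerate prior).
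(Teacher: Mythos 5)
Your proposal is correct and follows essentially the same route as the paper: reduce via \Cref{cor:me_id_general} to the shared measure-preserving automorphisms of the fixed priors on $E^*$, invoke \Cref{prop:orthogonal} together with the spanning condition to get $\ker M = \{0\}$ and hence $T(A(z)) = T(z)$ a.e., and finish with injectivity of $T$. The only (cosmetic) difference is that the paper packages the last two steps into an intermediate proposition stated in terms of the contrast vectors $\eta_i - \eta_0$, whereas you apply \Cref{prop:orthogonal} directly with the $\eta_e$ as rows of $M$; your closing remarks about the interpretation of ``injective in at least one dimension'' are apt, as the paper handles that step at the same level of informality.
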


In fact, we do not require a spanning set of environments for a useful characterization of the indeterminacy. A second specialization of \cref{prop:orthogonal} is in the case of Gaussian priors. In such a setting, assuming common covariances across environments, the model is identifiable up to arbitrary transformations on the dimensions \textit{not spanned} by the environment means $\mu_e$; see \cref{sec:geom:indet} for details.

\begin{figure}[bt]
	\centering
		\begin{tikzpicture}[scale=2.2,tdplot_main_coords]

            \coordinate (O) at (0,0,0);
            \coordinate (mu1) at (0.5, -0.2, 0.3);
            \coordinate (mu2) at (-0.4, 0.4, -0.5);
            
            \draw[thick, draw opacity = 0.4, ->] (0,0,0) -- (1,0,0) node[anchor=north east]{$x$};
            \draw[thick, draw opacity = 0.4,->] (0,0,0) -- (0,1,0) node[anchor=north west]{$y$};
            \draw[thick, draw opacity = 0.4,->] (0,0,0) -- (0,0,1) node[anchor=south]{$z$};
            
            
            \draw[thick, color = blue, ->] (O) -- (mu1) node[anchor = south east]{$\eta_1$};
            \draw[thick, color = blue, ->] (O) -- (mu2) node[anchor = south west]{$\eta_2$};
            
            \fill[fill=blue,fill opacity=0.1] ($1.5*(mu1) + 1.5*(mu2)$) -- ($-1.5*(mu1) + 1.5*(mu2)$) -- ($-1.5*(mu1) + -1.5*(mu2)$) -- ($1.5*(mu1) + -1.5*(mu2)$) -- cycle ;
            
            \fill[] ($-1.5*(mu1) + 1.5*(mu2)$) node[color = blue, above left = 8pt] {${\scriptstyle \text{span}(\eta_1, \eta_2)}$};
            \draw[dashed, color = red, <->] ($0.7*(0.166, -1.083, -1)$) -- ($0.7*(-0.166, 1.083, 1)$) node[anchor = south east]{${\scriptstyle T(A(z)) - T(z) \in \text{span}(\eta_1, \eta_2)^\perp}$};
        \end{tikzpicture}
        \caption{The indeterminacy set (under the sufficient statistic) in \textcolor{red}{red} for parameter vectors $\eta_1, \eta_2$.}
        \label{tikz_orthogonal}
\end{figure}
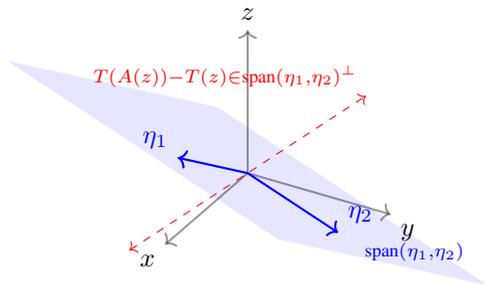

\subsection{Fixed versus Learned Distributions} In ICA compared to factor analysis (\cref{sec:fa:ica}), and now iVAE compared to a strongly identifiable version, learning the latent distributions adds indeterminacies. It is not apparent to us that that source of indeterminacies can be addressed without either: (i) fixing the latent distributions before fitting the generator; or (ii) cleverly constructing $\genClass, \distClass_z$ to ensure that any generator transform $f_b^{-1}\circ f_a$ applied to any $P_z \in \distClass_z$ transports $P_z$ to a different distribution not contained in $\distClass_z$. The latter seems difficult to do in a flexible and general way, though it poses an interesting question for future research. Fixing the latent distributions, on the other hand, is easy, but it comes with questions about flexibility and interpretation. We briefly discuss these here, and note that more work is required. 

Fixing the latent distributions before fitting the generator is one way to achieve strong identifiability, though \textit{how} the distributions ought to be specified depends on the desired use of the model, and the framework for doing so remains unresolved. A task-driven approach, as in \cref{sec:task:id}, is one avenue for exploration. The use of auxiliary data, for example $u$ in iVAE, also appears to be a useful way to structure the latent space. In all of the ``fixed distribution'' identifiability results in this paper, a mapping $u \to \distClass_z$ can be learned from data (as in iVAE), \textit{as long as it is frozen at some point during training}. These frozen parameters should be published alongside any results for reproducibility. We note this does not hinder efforts to generalize to new environments---the fixed environments $E^*$ (as in \cref{thm:identifiability}) may act as ``anchors'' that shape the latent space for the remaining distributions in $E \setminus E^*$.

\section{\uppercase{Identifiability via transports}} \label{sec:transports}

In \cref{sec:fixedexpfam}, $\genClass$ was left unconstrained; strong identifiability was achieved through multiple environments and restricting the environment latent variable distributions to be fixed members of an exponential family. In this section we construct strongly identifiable models in which the latent distributions can be \emph{any} fixed distributions with strictly positive density, and without requiring observations from multiple environments (i.e., auxiliary information). This is achieved by restricting the class of generators, with the additional condition that $\latent=\obs=\bbR^d$. For simplicity of presentation, results are stated for a single environment, but the results also apply in multiple environments or views. 

We approach the problem indirectly, by considering what properties of $\indet(\genClass)$ would yield strong identifiability. We aim to specify $\genClass$ such that when $\genTrans_{a,b} = f_b^{-1}\circ f_a$ transports one latent distribution to another, it is unique in a suitable sense so that, in particular, when it transports a distribution to itself, it must be (equivalent to) the identity map. 

\subsection{Kn\"{o}the--Rosenblatt Transports} 
Triangular monotone increasing (TMI) maps are of growing interest in generative modelling \citep{Kingma2016, Papa2017, Jaini2019, Irons2021}. In particular, they can be used to approximate any fully supported distribution, and can be parametrized via deep neural networks \citep{huang2018,Wehenkel2019}. 
Given two probability measures $P_a$, $P_b$ on $\bbR^d$, there always exists a $(P_a, P_b)$-measure isomorphism with an explicit construction as a TMI map in terms of one dimensional conditional CDF transforms. This is known as the Kn\"{o}the--Rosenblatt (KR) transport. 

The KR transport has several appealing properties. 1) It is the unique (up to a.e.\ equivalence) TMI map that transports between $P_a$ and $P_b$. 2) The class of TMI maps are closed under composition and inversion. 3) Due to its construction, if $P_a = P_b$, then it is almost everywhere equal to the identity map. These properties are ideal for specifying identifiable generative models: 1) gives us a criterion (TMI maps); 2) ensures that the resulting generator transforms are also TMI maps; and 3) says that the resulting measure preserving automorphisms are trivial (akin to triangular matrices in factor analysis). For more details, see \cref{sec:appx:triangular}.

\begin{theorem}
    \label{thm:krmap:id}
	Let $\latent=\obs=\bbR^d$ with fixed latent distribution $P_z$ with strictly positive density. If $\genClass$ is the set of all TMI maps, then $\obsModel(\genClass, \{P_z\})$ is strongly identifiable.
\end{theorem}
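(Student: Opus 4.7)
The plan is to invoke \cref{thm:id_general} to reduce the statement to verifying that $\indet(\genClass) \cap \indet(\{P_z\}) = \widetilde{\id}_z$, and then to exploit the three properties of the Kn\"{o}the--Rosenblatt (KR) transport stated just before the theorem.

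First, I would characterize $\indet(\genClass)$ when $\genClass$ is the class of all TMI maps. Property 2 (closure of TMI maps under composition and inversion) immediately implies that every generator transform $\genTrans_{a,b} = f_b^{-1}\circ f_a$ with $f_a,f_b \in \genClass$ is itself TMI; conversely, any TMI $A$ can be written as $\id^{-1} \circ A$, so it belongs to $\indet(\genClass)$. Hence $\indet(\genClass)$ is exactly the a.e.\ equivalence class of TMI Borel automorphisms of $\bbR^d$. Since $\distClass_z = \{P_z\}$ is a singleton, $\indet(\{P_z\})$ is precisely the set of $P_z$-measure-preserving Borel automorphisms. The intersection thus consists of TMI maps $A$ satisfying $A_{\#}P_z = P_z$.

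Second, I would apply Property 1 (uniqueness up to a.e.\ equivalence of the TMI transport between two measures) to the distribution pair $(P_z, P_z)$. Both $A$ and $\id$ are TMI maps that transport $P_z$ to $P_z$, so Property 1 forces $A \equae \id$, i.e., $A \in \widetilde{\id}_z$. Strict positivity of the density of $P_z$ is used here: it ensures that all the one-dimensional conditional densities underlying the KR construction are well defined and strictly positive, so the associated conditional CDFs are strictly increasing and the component transports are uniquely determined. Combined with Property 3 (the KR transport from a measure to itself is the identity a.e.), this gives $\indet(\genClass) \cap \indet(\{P_z\}) = \widetilde{\id}_z$, and \cref{thm:id_general} then yields strong identifiability.

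The main obstacle is that the entire argument rests on the uniqueness claim (Property 1), which is the nontrivial ingredient and is treated separately in the appendix. The standard way to establish it is by induction on the ambient dimension $d$: for $d=1$, uniqueness of the monotone transport between two distributions with strictly positive densities is the classical CDF--quantile argument; for $d>1$, the first coordinate map is pinned down as a one-dimensional monotone transport between the first marginals, and for almost every fixed value of the first coordinate the remaining $(d-1)$ coordinates are a TMI transport between the appropriate conditional distributions, to which the inductive hypothesis applies. Strict positivity of $P_z$ is exactly what keeps the conditioning and inversion steps well posed at every stage of the recursion.
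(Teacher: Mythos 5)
Your proposal is correct and follows essentially the same route as the paper: reduce via \cref{thm:id_general} to showing $\indet(\genClass) \cap \indet(\{P_z\}) = \widetilde{\id}_z$, use closure of TMI maps under composition and inversion to conclude that every generator transform is TMI, and then invoke uniqueness of the TMI transport between fully supported measures together with the fact that the KR transport from $P_z$ to itself is the identity a.e. The only difference is that you additionally sketch an inductive argument for the uniqueness property itself, which the paper simply cites from the literature rather than reproving.
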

This is a generalization of linear factor models with triangular $F$ \citep{Geweke1996, Aguilar2000}. 
Such a generator class also provides identifiability in the ICA sense---we thus believe the following result may also be of independent interest to the ICA community.

\begin{proposition}
    \label{prop:krmap:ica:id}
    Let $\latent=\obs=\bbR^d$. The nonlinear ICA model where $\genClass$ are TMI maps and $\distClass_z$ are fully supported distributions with independent components is identifiable up to invertible, component-wise transformations.
\end{proposition}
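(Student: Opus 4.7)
The plan is to apply \Cref{thm:id_general} to reduce the question to characterizing elements of $\indet(\genClass) \cap \indet(\distClass_z)$, and then to show that any such element is almost everywhere a componentwise invertible transformation.

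First I would establish that $\indet(\genClass)$ is contained (up to a.e.\ equivalence) in the set of TMI Borel automorphisms of $\bbR^d$. This uses property 2 from \cref{sec:transports}: the class of TMI maps is closed under composition and inversion, so every generator transform $\genTrans_{a,b}=f_b^{-1}\circ f_a$ is itself TMI.

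Next I would analyze $\indet(\distClass_z)$ restricted to TMI maps. Let $A$ be a TMI automorphism with $A_{\#}P_a = P_b$, where $P_a,P_b \in \distClass_z$ both factorize across coordinates and are fully supported. The Knöthe--Rosenblatt transport $K$ from $P_a$ to $P_b$ is explicitly constructed via conditional CDF transforms; when $P_a$ and $P_b$ are product measures, every conditional CDF reduces to a marginal CDF, so $K$ collapses to the componentwise map
\begin{equation*}
    K(z) = \bigl( F_{b,1}^{-1}\circ F_{a,1}(z_1),\ \ldots,\ F_{b,d}^{-1}\circ F_{a,d}(z_d) \bigr).
\end{equation*}
Because the densities are strictly positive, each marginal CDF is continuous and strictly increasing on $\bbR$, so each coordinate function $K_i=F_{b,i}^{-1}\circ F_{a,i}$ is an invertible monotone bijection of $\bbR$. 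By uniqueness of the KR transport among TMI maps (property 1 cited in \cref{sec:transports}), any TMI map transporting $P_a$ to $P_b$ must satisfy $A \equae K$.

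Intersecting the two characterizations, any $A \in \indet(\genClass)\cap\indet(\distClass_z)$ is a.e.\ equal to an invertible componentwise transformation of $\bbR^d$, which establishes the claim. The only delicate point is verifying the KR uniqueness step; this is a classical fact (deferred to \cref{sec:appx:triangular} in the paper), and the fully-supported assumption is essential since it prevents degeneracies that would allow a TMI map to disagree with $K$ on a positive-measure set while still preserving the pushforward.
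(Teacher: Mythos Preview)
Your proposal is correct and follows essentially the same approach as the paper: reduce via \cref{thm:id_general} to generator transforms, use closure of TMI maps to conclude these are TMI, then invoke KR uniqueness together with the observation that the KR transport between product measures is diagonal (componentwise). The paper's proof is just a terser version of exactly this argument.
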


\subsection{Optimal Transport Indeterminacies} Though the KR transport is not an optimal transport map itself, it is the limit of optimal transport maps for a sequence of appropriately weighted quadratic costs \citep[][Ch.~2.4]{Santambrogio2015}, and the properties that yield identifiability may apply to some optimal transport maps more generally. 

Given two probability measures $P_a$ and $P_b$ on $\bbR^d$, the Monge formulation of the optimal transport problem with respect to the cost function $c \colon \latent \times \latent \to \bbR_+$ is to find a map $T \colon \latent \to \latent$ such that $T_{\#}P_a = P_b$ and that minimizes the total cost \citep{Santambrogio2015},
\begin{align} \label{eq:OT:cost}
    \int_{\bbR^d} c(z, T(z)) dP_a(z) \;.
\end{align}
We call $T$ an optimal transport (OT) map with respect to $c$ if it minimizes \eqref{eq:OT:cost} for transporting between \textit{some} pair of probability distributions on $\bbR^d$. Let $\calT_c$ be the set of OT maps with respect to a cost $c$. 
Most OT cost functions are derived from metrics, and thus typically $c(z_1,z_2) = 0 \iff z_1 = z_2$, in which case we say that $c$ is separating. This has the implication that the unique OT map from a distribution $P_z$ to itself must be equal to the identity map $P_z$-almost everywhere, as it incurs the minimal cost 0. We use this to formulate a sufficient condition for strong identifiability.  

\begin{theorem} \label{thm:transport:ID}
	Let $\latent=\obs=\bbR^d$, with fixed latent distribution $P_z$ with full support on $\bbR^d$. If $\indet(\genClass) \subseteq \calT_c$ for a separating cost, then $\obsModel(\genClass, \{P_z\})$ is strongly identifiable.
\end{theorem}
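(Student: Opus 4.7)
The plan is to apply \cref{thm:id_general} with the singleton prior class $\distClass_z = \{P_z\}$, so that $\indet(\{P_z\})$ is exactly the set of $P_z$-measure-preserving automorphisms of $\bbR^d$ and $\indet(\obsModel) = \indet(\genClass) \cap \indet(\{P_z\})$. Strong identifiability then reduces to showing that any $A$ in this intersection satisfies $A \equae \id_z$. So fix such an $A$: by hypothesis it lies in $\calT_c$ (it is an OT map for some pair of distributions under the cost $c$), and it pushes $P_z$ to itself.

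The key step is to promote the statement ``$A \in \calT_c$'' (optimal for \emph{some} pair) to ``$A$ is optimal for $(P_z, P_z)$.'' I would invoke the standard structural characterization of OT maps via $c$-cyclical monotonicity of the graph, which is a pointwise property that does not depend on the source or target measure. If $A$ realizes the optimum for some pair $(\mu,\nu)$ under $c$, then its graph is $c$-cyclically monotone on (the $\mu$-essential image of) its domain, and this monotonicity alone is enough to make $A$ optimal for the Monge problem between $P_z$ and $A_{\#}P_z = P_z$. For canonical costs (e.g.\ quadratic) this is immediate via Brenier's theorem and gradients of convex functions; for general separating $c$ the same conclusion follows as long as $c$ is nonnegative and sufficiently regular (lower semicontinuous).

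Once $A$ is known to be optimal from $P_z$ to $P_z$, the remainder is short. Since $c$ is separating (and nonnegative, as is standard for OT costs), $c(z,z)=0$, so the identity transports $P_z$ to itself with total cost $\int c(z,z)\,dP_z(z) = 0$, the minimum possible. Optimality of $A$ then forces $\int c(z,A(z))\,dP_z(z)=0$, nonnegativity of the integrand forces $c(z,A(z))=0$ for $P_z$-a.e.\ $z$, and separation gives $A(z)=z$ for $P_z$-a.e.\ $z$. Full support of $P_z$ (interpreted so that $P_z$-null sets coincide with Lebesgue-null sets, e.g.\ a strictly positive density) upgrades this to $A\equae \id_z$ in the Lebesgue sense, which is exactly strong identifiability.

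The main obstacle is the middle step: transferring ``optimal for some pair'' to ``optimal for $(P_z,P_z)$''. If $\calT_c$ is taken literally as the union of OT maps over all source--target pairs, the bridge requires $c$-cyclical monotonicity or a cost-specific argument, and pinning down the minimal regularity conditions on $c$ (beyond separation) under which this invariance actually holds is the delicate bookkeeping of the proof. Everything else is a direct consequence of \cref{thm:id_general} combined with the uniqueness of the trivial transport from a distribution to itself under a separating cost.
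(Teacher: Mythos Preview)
The paper does not give a separate formal proof of this theorem; its entire argument is the sentence preceding the statement: since $c$ is separating, the identity has cost $0$ and is therefore the (unique) OT map from $P_z$ to itself, so $\indet(\genClass)\cap\indet(\{P_z\})\subseteq\widetilde{\id}_z$ and strong identifiability follows from \cref{thm:id_general}. Your proposal follows exactly this route and is strictly more careful: you correctly isolate the step the paper leaves implicit, namely that membership in $\calT_c$ (optimality for \emph{some} pair) must be upgraded to optimality for $(P_z,P_z)$ before the separating-cost argument can be applied.

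On that step, your $c$-cyclical monotonicity bridge is the right idea but, as you already suspect, does not close the gap in full generality as written. If $A$ is optimal for $(\mu,\nu)$, its graph is $c$-cyclically monotone only on a set of full $\mu$-measure; to conclude optimality for $(P_z,P_z)$ you need $c$-cyclical monotonicity on a set of full $P_z$-measure, which requires either $\supp(\mu)=\bbR^d$ (not assumed) or a structural reading of $\calT_c$ as ``maps whose graph is globally $c$-cyclically monotone'' (equivalently, $c$-subdifferentials of $c$-convex functions). The paper never resolves this either---it simply asserts the conclusion---so your explicit flag about the ``delicate bookkeeping'' is, if anything, more honest than the paper's own treatment. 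For the quadratic cost the issue disappears via Brenier's theorem, which is consistent with the paper's closing remark that general flexible $\genClass$ satisfying the hypothesis are not known.
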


We are unaware of any general and flexible function classes that satisfy this property. In particular, properties 1) and 3) that yield strong identifiability in TMI maps are generally true for optimal transport maps, but property 2), the closure property, typically does not hold. Detailed specification of such models is left for future work.

\section{\uppercase{Conclusion}} \label{sec:conclusion}

We have developed a general formal framework for analyzing the sources of indeterminacy in a broad class of generative models. The framework brings seemingly disparate approaches to the problem together: if a model satisfies our assumptions then any identifiability result \emph{must} be a special case of \cref{thm:id_general}. The theory is descriptive rather than constructive, in the sense that proving identifiability results for specific models still requires the non-trivial work of characterizing $\indet(\genClass)$ and $\indet(\distClass_z)$. To that end, \cref{thm:id_general} also makes the sources of indeterminacy visible, enabling more straightforward reasoning during model design. That visibility was crucial to our strong identifiability results, particularly for the transport-based models in Section 5. Those models are far from exhaustive, and we believe that our framework can be useful in developing novel strongly identifiable generative models.

\subsection{Limitations and Societal Implications} 

As with much of the identifiability literature, our results require that the generator is injective with ignorable noise; this is a technical limitation that must be overcome in order to have a theory that is applicable to the full array of models in use. We also note that incorporating inference from finite data into the framework is an important unsolved problem; our framework is limited in that respect. Finally, generative models with stronger identifiability properties can be used to the benefit of society (e.g., latent causal models for drug development), or to its detriment (e.g., better control over harmful synthetic data generation). The present work does not specifically address either, as it is purely theoretical; the potential impact depends on the application.

\subsubsection*{Acknowledgements}
The authors acknowledge helpful comments from anonymous reviewers at AISTATS 2023, as well as the organizers and participants of the \emph{Generative Models and Uncertainty Quantification Workshop} (Copenhagen, September, 2022) for challenging and helpful discussions. BBR gratefully acknowledges the support of the Natural Sciences and Engineering Research Council of Canada (NSERC): RGPIN-2020-04995, RGPAS-2020-00095, DGECR-2020-00343. 

\bibliography{citations.bib}

\appendix
\onecolumn

\section{\uppercase{Indeterminacies and parameter identifiability}}
\label{sec:appx:equivid}
In this section, we justify the study of indeterminacy transformations, which are mappings on $\latent$, as a proxy for identifiability in parameter space. For any $A \in \Aut(\latent)$ and $\theta = (f,P_z) \in \genClass\times\distClass_z$, define $A\theta = (f\circ A^{-1}, A_{\#} P_z)$. We say that $A\theta_a = \theta_b$ if $A_{\#}P_{z,a} = P_{z,b}$ and $f_a \circ A^{-1} \equae f_b$. Finally, let $\indet(\obsModel)|_{\theta} = \{ A \in \indet(\obsModel) \colon P_{\theta} = P_{A\theta}\}$---we can show that this is a local set of indeterminacy transformations ``originating'' at $\theta$ (\cref{lem:uniqueness}). 

\begin{lemma}
    \label{lem:uniqueness}
    $\indet(\obsModel)|_{\theta_a}$ consists exactly the collection of indeterminacy transformations at $\theta_a$ and all $\theta_\argdot \sim \theta_a$.
\end{lemma}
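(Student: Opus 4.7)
The plan is to establish the equality via two inclusions, leaning on \cref{thm:indet:transport} throughout to identify indeterminacy transformations with the generator transforms $\genTrans_{a,b} = f_b^{-1}\circ f_a$, which are simultaneously $(P_{z,a}, P_{z,b})$-measure isomorphisms.

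For the forward inclusion, given $A \in \indet(\obsModel)|_{\theta_a}$, the idea is to exhibit a specific $\theta_b \sim \theta_a$ at which $A$ is an indeterminacy transformation. The natural candidate is $\theta_b := A\theta_a = (f_a \circ A^{-1}, A_\# P_{z,a})$. The two conditions of \cref{def:indet:trans} then hold by construction: $f_a \circ A^{-1} \equae f_b$ trivially, and $P_{\theta_a} = P_{\theta_b}$ is exactly the defining condition $P_{\theta_a} = P_{A\theta_a}$ that placed $A$ in $\indet(\obsModel)|_{\theta_a}$. Hence $\theta_b \sim \theta_a$ and $A$ is an indeterminacy transformation at $\theta_a, \theta_b$, as required.

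For the reverse inclusion, suppose $A$ is an indeterminacy transformation at some $\theta_a, \theta_b$ with $\theta_b \sim \theta_a$; then $A \in \indet(\obsModel)$ is automatic from the definition. To verify the localization condition $P_{\theta_a} = P_{A\theta_a}$, I would apply \cref{thm:indet:transport} to obtain both $A \equae f_b^{-1}\circ f_a$ and $A_\# P_{z,a} = P_{z,b}$. Combining these two facts yields $A\theta_a = (f_a\circ A^{-1}, A_\# P_{z,a}) = (f_b, P_{z,b}) = \theta_b$ up to a.e.\ equivalence of the generator, so $P_{A\theta_a} = P_{\theta_b} = P_{\theta_a}$, and $A \in \indet(\obsModel)|_{\theta_a}$.

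The main subtlety, and really the only obstacle, is bookkeeping around whether $A\theta_a$ constitutes a legitimate element of $\genClass \times \distClass_z$. The pair $A\theta_a = (f_a \circ A^{-1}, A_\# P_{z,a})$ is always well-defined as a pushforward expression, and a direct computation shows that the induced pushforward to $\obs$ equals $P_{\theta_a}$ identically regardless of $A$. Consequently the equality $P_{\theta_a} = P_{A\theta_a}$ in the definition of $\indet(\obsModel)|_{\theta_a}$ must be read as implicitly requiring $A\theta_a \in \genClass \times \distClass_z$, i.e., $f_a \circ A^{-1} \in \genClass$ and $A_\# P_{z,a} \in \distClass_z$; otherwise $P_{A\theta_a}$ would not be a meaningful model distribution. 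Once this reading is fixed, both inclusions reduce to direct unpacking of \cref{def:indet:trans} together with \cref{thm:indet:transport}.
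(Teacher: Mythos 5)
Your proof is correct and follows essentially the same route as the paper's: both inclusions reduce to unpacking \cref{def:indet:trans} via \cref{thm:indet:transport} and the identification $A\theta_a = \theta_b$ (the paper likewise takes $\theta_\argdot = A\theta_a$ in one direction and invokes \cref{thm:indet:transport} to get $A_{\#}P_{z,a} = P_{z,b}$ in the other). Your closing remark---that $P_{\theta_a} = P_{A\theta_a}$ holds automatically as a pushforward identity unless one implicitly requires $A\theta_a \in \genClass \times \distClass_z$---is a legitimate observation about a subtlety the paper leaves unstated, and your reading is the intended one.
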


\begin{proof}
    Let $A'$ denote the set of indeterminacy transformations at $\theta_a$ and some arbitrary $\theta_\argdot \sim \theta_a$. We show that $\calA' = \indet(\obsModel)|_{\theta_a}$.

    We first show that $\calA' \subset \indet(\obsModel)|_{\theta_a}$. Let $\theta_\argdot \sim \theta_a$, i.e., $P_{\theta_a} = P_{\theta_\argdot}$. Then $A_{a, \argdot} \in \calA'$ is an indeterminacy transformation at $\theta_a, \theta_\argdot$. By \cref{thm:indet:transport}, $A_{a, \argdot \#}P_{z,a} = P_{z, \argdot}$, and by definition of an indeterminacy transformation, $f_a \circ A_{a, \argdot}^{-1} \equae f_\argdot$. Hence $A_{a,\argdot}\theta_a = \theta\argdot$, which shows that $A_{a, \argdot} \in \indet(\obsModel)|_{\theta_a}$.

    We now show that $\indet(\obsModel)|_{\theta_a} \subset \calA'$. Let $A \in \indet(\obsModel)|_{\theta_a}$ and denote $A\theta = \theta_\argdot$, so that $P_{\theta} = P_{A\theta} = P_{\theta_\argdot}$. By definition of $A\theta$, we have $f_a \circ A^{-1} \equae f_\argdot$. Hence, $A$ is an indeterminacy map at $\theta, \theta_\argdot$, i.e., $A \in \calA'$.   
\end{proof}

We are now ready to prove \cref{prop:equiv:rel:properties}.

\begin{proposition} \label{prop:equiv:rel:properties}
    The equivalence class of each $\theta \in \genClass\times\distClass_z$ is generated by $\indet(\obsModel)|_{\theta}$. That is, $[\theta] = \{A\theta \colon A \in \indet(\obsModel)|_{\theta} \}$.
\end{proposition}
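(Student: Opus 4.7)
The plan is to establish the two set inclusions separately, using \Cref{thm:indet:transport} as the workhorse for the nontrivial direction, and reading off the easy direction from the definition of $\indet(\obsModel)|_{\theta}$.

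For the inclusion $\{A\theta : A \in \indet(\obsModel)|_{\theta}\} \subseteq [\theta]$, I would simply note that by the very definition of $\indet(\obsModel)|_{\theta}$, any $A$ in this set satisfies $P_{\theta}=P_{A\theta}$, which means $A\theta \sim \theta$ and hence $A\theta \in [\theta]$. No further work is needed here.

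For the reverse inclusion $[\theta] \subseteq \{A\theta : A \in \indet(\obsModel)|_{\theta}\}$, I would pick an arbitrary $\theta' = (f', P_z') \in [\theta]$, so $P_{\theta}=P_{\theta'}$, and construct the candidate transformation as the generator transform $A := \genTrans_{\theta,\theta'} = (f')^{-1}\circ f$ from \eqref{eq:indet:map}. The key step is to invoke \Cref{thm:indet:transport}: since $P_\theta = P_{\theta'}$, the generator transform $A$ is a $(P_z, P_z')$-measure isomorphism, and moreover it is an indeterminacy transformation at $\theta,\theta'$, so $A \in \indet(\obsModel)$. I then verify that $A\theta = \theta'$ in the sense defined at the start of the appendix: the pushforward condition $A_{\#}P_z = P_z'$ follows from $A$ being a $(P_z,P_z')$-measure isomorphism, and the generator condition $f \circ A^{-1} \equae f'$ follows from $A \equae \genTrans_{\theta,\theta'} = (f')^{-1}\circ f$ (so $f \circ A^{-1} \equae f \circ f^{-1} \circ f' \equae f'$, using \Cref{assump::bijective}). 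Because $P_{A\theta} = P_{\theta'} = P_\theta$, we have $A \in \indet(\obsModel)|_{\theta}$, which exhibits $\theta'$ as $A\theta$ for some $A \in \indet(\obsModel)|_{\theta}$, completing the inclusion.

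The only subtle point, and the one I would handle carefully, is the almost-everywhere nature of the identity $f \circ A^{-1} \equae f'$. This is immediate from the a.e.\ characterization in \Cref{thm:indet:transport}, combined with injectivity of $f,f'$ from \Cref{assump::bijective}; I would not expect any genuine obstacle, but one must be precise about the fact that $A$ is defined only up to a.e.\ equivalence. Alternatively, one can cite \Cref{lem:uniqueness} directly: the right-hand side of the claim is exactly the set of $A\theta$ with $A$ ranging over indeterminacy transformations at $\theta$ and $\theta_{\cdot}\sim\theta$, which \Cref{lem:uniqueness} identifies with $\indet(\obsModel)|_{\theta}$, and these are precisely the $\theta' \in [\theta]$ by \Cref{thm:indet:transport}.
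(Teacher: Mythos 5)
Your proposal is correct and follows essentially the same route as the paper's proof: both directions are handled by the same two inclusions, with the nontrivial one resting on \cref{thm:indet:transport} to show that the generator transform $\genTrans$ realizes $\theta'$ as $A\theta$ for an $A \in \indet(\obsModel)|_{\theta}$, and the easy one read off from the definition of $\indet(\obsModel)|_{\theta}$. Your explicit verification that $f \circ A^{-1} \equae f'$ and the fallback appeal to \cref{lem:uniqueness} are both consistent with how the paper organizes the argument.
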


\begin{proof}[Proof of \cref{prop:equiv:rel:properties}]
    
    For notational purposes, we denote an arbitrary $\theta$ by $\theta_a$. 

    We wish to show that $[\theta_a] =  \{A\theta_a \colon A \in \indet(\obsModel)|_{\theta_a} \}$. Note that by \cref{lem:uniqueness}, $\indet(\obsModel)|_{\theta_a}$ consists of indeterminacy transformations at $\theta$ and some arbitrary $\theta_\argdot \sim \theta_a$.

    We first show that $[\theta_a] \subset \{A\theta_a \colon A \in \indet(\obsModel)|_{\theta_a} \}$. Let $\theta_\argdot \in [\theta_a]$. Then, $P_{\theta_a} = P_{\theta_\argdot}$ which by \cref{thm:indet:transport} implies that $\genTrans_{a, \argdot}\theta_a = \theta_\argdot$. Clearly, $\genTrans_{a,\argdot}$ is an indeterminacy transformation at $\theta, \theta_\argdot$, and so $\genTrans \in \indet(\obsModel)_{\theta_a}$, implying that $\theta_\argdot \in \{A\theta_a \colon A \in \indet(\obsModel)|_{\theta_a} \}$.
    
    We now show that $\{A\theta_a \colon A \in \indet(\obsModel)|_{\theta_a} \} \subset [\theta_a]$. Let $\theta_\argdot \in \{A\theta_a \colon A \in \indet(\obsModel)|_{\theta_a} \}$. Then, $\theta_\argdot = A_{a, \argdot}\theta_a$ for some indeterminacy transformation $A_{a, \argdot}$. Since $A_{a, \argdot}$ is an indeterminacy transformation, $P_{\theta_a} = P_{\theta_\argdot}$ by definition and hence $\theta_\argdot \in [\theta_a]$. 
\end{proof}

The above result states that the indeterminacy transforms (mappings on $\latent$) can be used to generate the equivalence classes of parameters. This justifies the study of indeterminacy transformations of $\latent$ (and their extension to the parameter space), rather than studying the parameter space directly, and puts identifiability up to indeterminacy transformations (\cref{def:iden}) in correspondence with parameter identifiability. 

\section{\uppercase{Discrete observations}} 
\label{sec:appx:discrete}
In this section, we discuss models with discrete observations, e.g., Bernoulli with probability parameter given by $f(z)$, or Poisson with mean parameter $f(z)$ (such models were briefly discussed in iVAE \citep{Khem2020a}, as well as in follow-up work such as the pi-VAE \citep{Zhou2020}). In short, the framework developed in this paper rests on bijective generators which enable the recovery of unique latent codes for each observed value. As noted in a correction in \cite{Khem2020a}, this task seems fundamentally impossible for example when the latent space is uncountable and the outcome is discrete, due to the lack of an bijective map between spaces of different cardinality. However, generative models do not typically send a latent variable to the outcome, but rather to a parameter value of a conditional distribution. This allows us to reformulate the assumptions required for our theory, although as we will see shortly, most discrete outcome models do not satisfy these assumptions. 
Formally, suppose $\obs$ is either finite or countable. Let $X$ be a random variable on $\obs$ and denote by $P_x := P(X = x): \obs \to [0,1]$ the probability mass function, which satisfies $\sum_{x \in \obs} P_x(x) = 1$. Two random variables $X_a$, $X_b$ are said to be equal in distribution if and only if their respective PMFs satisfy $P_{x,a}(x) = P_{x,b}(x)$ for all $x \in \obs$.

The observation model is then described by a conditional PMF $P(X = x |z)$. We assume this model has a topological parameter space $\Theta$ and also pair it with the Borel $\sigma$-algebra, e.g., $\Theta = [0,1]^n$ for an $n$-dimensional Bernoulli. Let $f: \latent \to \Theta$ be an injective generator (note this implies $\Theta$ has cardinality at least that of $\latent$). Then the generative model is as follows:
\begin{align}
Z \sim P_z \;, \quad P(X = x | z) = g_{x}(f(z)) \;,
\label{eq:model_discrete}
\end{align}
where $g_{x}(\theta)$ is the PMF of the observation model with parameter $\theta$ at $x$. 

Recall \cref{assump::noiseless} of the main text. We now introduce its discrete analogoue, which would be required for the theory developed in this paper to apply in the discrete formulation. First, note that the marginal PMF on $\obs$ is given as follows:
\begin{align}
    P_{x}(x) = \int_{\latent} g_x(f(z)) P_z(dz) = \int_{\Theta} g_x(\theta) P_z(f^{-1}(d\theta)) = \E_{\theta}\left[ g_x(\theta) \right],
\end{align}
where as a random variable, $\theta = f(Z)$. The assumption is then as follows:
\begin{assumption}[Discrete analogue to main text: \cref{assump::noiseless}]
\label{assump:discrete}
    Assume that $\E_{\theta_a}\left[ g_x(\theta_a) \right] = \E_{\theta_b}\left[ g_x(\theta_b) \right]$ for each $x \in \obs$ if and only if $\theta_a \equdist \theta_b$.
\end{assumption}

In other words, the distribution of $\theta = f(Z)$ must be characterized by the moments $\E[g_x(\theta_a)]$, for each $x \in \obs$. Indeed, for observational equivalence to imply anything about the latent spaces, such an assumption would be needed. However, it appears that this assumption is rarely satisfied for any reasonable models. For example, the Bernoulli observation model with $P(X = 1 | z) = g_1(\theta) = f(z)$, $\Theta = [0,1]$, requires that the distribution of $\theta$ be characterized by just its first moment, $\E[\theta]$. Of course, this is highly unlikely unless very strict restrictions are placed on $f$ and $P_z$, such as $f_\# P_z$ being Gaussian, indicating that identifiability in this case may be restricted to linear generators.   

More generally, the core of the issue remains the cardinality mismatch between $\latent$ and $\obs$. A necessary condition for $\theta_a \equdist \theta_b$ is that $\E[g(\theta_a)] = \E[g(\theta_b)]$ for all bounded continuous $g: \Theta \to \bbR$ (test functions). For $\Theta$ uncountable, there are clearly uncountably many test functions, while in our discrete assumption above, there are countably many test functions at best. Though we do not make this notion precise here, we believe this makes the discrete assupmtion above unlikely to be satisfied, and hence any notion of identifiability, at least under our framework (which we believe to be reasonably general), is highly unlikely for discrete outcomes with uncountable latent spaces.

\section{\uppercase{Definitions and preliminaries}}
\label{sec:appx:definitions}


The rest of this Appendix takes place in the mathematical setting of measure-theoretic probability. This section will review some relevant definitions. Our standard reference here will be \cite{cinlar2011}, though we will sometimes refer to other texts depending on the specific topic \citep{Kechris1995,Schil5118, Boga2007}. 

\subsection{Basic Definitions}

Let $(E, \tau)$ be a topological space with $E$ a set and $\tau$ its collection of open sets.

\begin{definition}[$\sigma$-algebras, {\citealp[Eq. 1.3]{cinlar2011}}]
    A collection $\calE$ of subsets of $E$ is called a $\sigma$-algebra on $E$ if it is closed under complements and countable unions:
    \begin{align}
    B \in \calE \implies E \setminus B \in \calE, \quad B_1, B_2, \dots \in \calE \implies \cup_{n} A_n \in \calE.
    \end{align}
\end{definition}

Note that a $\sigma$-algebra always contains the empty set and $E$ itself. 
The pair $(E, \calE)$ defines a measurable space. The elements of $\calE$ in this context are called measurable sets. When the $\sigma$-algebra is insignificant, or obvious by context, we will simply refer to the space by $E$.

\begin{definition}[Generated $\sigma$-algebras, {\citealp[Sec. 1]{cinlar2011}}]
    The $\sigma$-algebra generated by a collection of subsets $\calE'$, denoted $\sigma(\calE')$ is the smallest $\sigma$-algebra that contains $\calE'$. 
\end{definition}

In this work, we will always work with what is known as the Borel $\sigma$-algebra. 

\begin{definition}[Borel $\sigma$-algebras, {\citealp[Sec. 1]{cinlar2011}}]
    Let $(E, \tau)$ be a topological space. The Borel $\sigma$-algebra of $E$ is generated by the collection of open sets, $\sigma(\tau)$. We denote it by $\borel(E)$.
\end{definition}

An element $B \in \borel(E)$ is then said to be a Borel set. 

Let $(E, \calE)$, $(F, \calF)$ be two measurable spaces, and $f: E \to F$ a mapping between them. The image of $A \subset E$ is defined as
\begin{align}
    f(A) = \{ f(a) \mid a \in A \} \subset F.
\end{align}
Similarly, the preimage of $B \subset F$ is defined as 
\begin{align}
    f^{-1}(B) = \{ x \in E \mid f(x) \in B \} \subset E.
\end{align}
Most mappings we will be concerned with will be assumed to be \emph{measurable}.

\begin{definition}[Measurable Mappings, {\citealp[Sec. 2]{cinlar2011}}]
    A mapping $f: E \to F$ is said to be $(\calE, \calF)$-measurable if $f^{-1}(B) \in \calE$ for each $B \in \calF$.
\end{definition}

If $(E, \calE) = (F, \calF)$, we will refer to $f$ as simply $\calE$-measurable. 

\begin{definition}[Measures, {\citealp[Sec. 3]{cinlar2011}}]
    Given a measurable space $(E, \calE)$, a mapping $\mu: \calE \to [0, \infty]$ is a measure if it satisfies
    \begin{align}
        \mu(\emptyset) = 0, \quad \mu(\cup_{n} A_n) = \sum_{n}\mu(A_n), \quad (A_n) \text{ a disjoint sequence in } \calE.
    \end{align}
    In particular, if $\mu(E) = 1$, then $\mu$ is said to be a probability measure.
\end{definition}

The triplet $(E, \calE, \mu)$ is known as a measure space, and, if $\mu$ is a probability measure, as a probability space. When there is no risk for confusion, we simply identify a measure space by its measure $\mu$. Two measures $\mu$ and $\nu$ on the same measurable space are equal whenever 
\begin{align}
    \mu(B) = \nu(B), \text{ for all } B \in \calE. 
\end{align}

\subsection{Random Variables, Pushforward measures}

\cite[Ch. 2]{cinlar2011} covers probability spaces in depth. Here, we only review the relevant notion pertaining to random variables and their distributions. A random variable $X$ on $(E, \calE)$ is associated to a probability measure $\mu$, called its distribution, defined as\footnote{Technically, we require a background measure space $(\Omega,\calF, P)$, and a random variable is defined as a measurable function $X: \Omega \to E$.}
\begin{align}
     \mu(B)= P(X \in B) , \text{ for all } B \in \calE. 
\end{align}
Random variables $X$, $Y$ defined on the same measurable space with distributions $\mu$, $\nu$ are said to be equal in distribution if $\mu = \nu$ as probability measures, denoted $X \equdist Y$. 

Any $(\calE, \calF)$-measurable function $f: E \to F$ applied to $X$, denoted $f(X)$, defines a random variable on $(F, \calF)$ with distribution $\mu \circ f^{-1}$, where $f^{-1}$ denotes the preimage of $f$ as a set function $\calF \to \calE$. Whenever it is convenient, we will use the more streamlined \textit{pushforward} notation for the distribution of $f(X)$, as follows:
\begin{align}
    f_\# \mu = \mu \circ f^{-1}.
\end{align}

Finally, we define the notion of a pushforward $\sigma$-algebra.

\begin{definition}[Pushforward $\sigma$-algebras, {\citealp[Sec. 2]{cinlar2011}}]
    \label{def:pushsigma}
    Let $(E, \calE)$ be a measurable space, $F$ be a set, and $f$ be a mapping $f: E \to F$. The pushforward $\sigma$-algebra of $f$ is defined as 
    \begin{gather}
    \sigma(f) = \{B \subset F; f^{-1}(B) \in \calE  \}
    \end{gather}
\end{definition}

It is easily shown that $\sigma(f)$ is a $\sigma$-algebra on $F$, and that $f$ is measurable with respect to $\sigma(f)$ \citep[Exercise 2.20]{cinlar2011}. In fact, it is the smallest $\sigma$-algebra that makes $f$ measurable. 
\subsection{Null sets and absolute continuity}

\begin{definition}[Null Sets, {\citealp[Sec. 3]{cinlar2011}}]
    Given a measurable space $(E, \calE)$, a measurable set $A \in \calE$ is said to be \textbf{null} with respect to a measure $\mu$, or $\mu$-null, if $\mu(A) = 0$. 
\end{definition}

The empty set is always a null set by the definition of a measure, but there can be many more null sets, depending on the measure. Any countable union of null sets is again null. For example, the Lebesgue measure $\lebesgue$ on $\bbR$ assigns null measure to a singleton $\{x\}$ for $x \in \bbR$, and so both the set of natural numbers and rational numbers are $\lebesgue$-null sets. 

For a measure space $(E, \calE, \mu)$, most properties that are consequences of measure-theoretic manipulations can only hold $\mu$-\textit{almost everywhere}. This is a weaker notion than a property holding pointwise, and the strength of a result can depend on the measure $\mu$. 

\begin{definition}[Almost Everywhere, {\citealp[Sec. 3]{cinlar2011}}]
    Given a measure space, a property that is stated for $x \in E$ is said to hold $\mu$-almost everywhere if there exists a measurable set $N$ with $\mu(N) = 0$ such that $P$ holds for all $x \in E \setminus N$.
\end{definition}

Furthermore, two measures are often compared with respect to their null sets.

\begin{definition}[Absolute Continuity, {\citealp[p. 31]{cinlar2011}}]
    A measure $\mu$ is said to be absolutely continuous with respect to $\nu$ defined on the same measurable space, denoted $\mu \ll \nu$, if for any $A \in \calE$ such that $\nu(A) = 0$, we also have $\mu(A) = 0$.  
\end{definition}

An equivalent property is given by the Radon-Nikodym theorem.

\begin{theorem}[Radon-Nikodym, {\citealp[Thm. 5.11]{cinlar2011}}]
    \label{thm:rd}
    $\mu \ll \nu$ on $(E, \calE)$ if and only if there exists a measurable function $p: E \to [0, \infty)$, uniquely defined $\nu$-almost everywhere, such that for any measurable set $A \in \calE$ and measurable function $f$, we have
    \begin{align}
        \int_{x \in A} f(x) \mu(dx) = \int_{x \in A} p(x)f(x) \nu(dx).
    \end{align}
    We call $p$ the density of $\mu$ with respect to $\nu$. 
\end{theorem}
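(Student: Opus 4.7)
The plan is to prove both directions of the biconditional in the Radon--Nikodym theorem. The ``if'' direction is immediate: whenever $\nu(A) = 0$, the integral $\int_A p(x) \nu(dx)$ vanishes because the integrand is being integrated over a $\nu$-null set, which forces $\mu(A) = 0$, hence $\mu \ll \nu$. The general integrability statement $\int_A f \, d\mu = \int_A pf \, d\nu$ then follows from the usual monotone class / standard machine argument, first for indicators, then simple functions, then non-negative measurables, then general integrable $f$.

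For the harder ``only if'' direction I would use von Neumann's Hilbert space argument, which is both clean and well-suited to the Borel setting used throughout this paper. Assuming first that $\mu, \nu$ are finite (the $\sigma$-finite case follows by a standard countable partition and patching argument), form the auxiliary dominating measure $\lambda = \mu + \nu$ and consider the linear functional $L(h) = \int h \, d\mu$ on the Hilbert space $L^2(E, \calE, \lambda)$. By Cauchy--Schwarz, $|L(h)| \leq \mu(E)^{1/2}\|h\|_{L^2(\lambda)}$, so $L$ is bounded, and the Riesz representation theorem produces some $g \in L^2(\lambda)$ with $L(h) = \int hg \, d\lambda$ for every $h \in L^2(\lambda)$.

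Next I would establish that $0 \leq g < 1$ holds $\lambda$-almost everywhere. Testing against the indicator $\mathbf{1}_{\{g < 0\}}$ shows the set where $g < 0$ has $\mu$-measure zero, and with some care $\lambda$-measure zero. Testing against $\mathbf{1}_{\{g \geq 1\}}$ on the set $A = \{g \geq 1\}$ gives $\mu(A) = \int_A g \, d\lambda \geq \lambda(A) = \mu(A) + \nu(A)$, which forces $\nu(A) = 0$; then the hypothesis $\mu \ll \nu$ upgrades this to $\mu(A) = 0$ and hence $\lambda(A) = 0$. This is the unique point in the argument where absolute continuity is actually used. Rewriting $L(h) = \int hg \, d\mu + \int hg \, d\nu$ yields $\int h(1-g) \, d\mu = \int hg \, d\nu$, and substituting $h = \mathbf{1}_A/(1-g)$ (well-defined by the bound on $g$) gives $\mu(A) = \int_A \frac{g}{1-g} \, d\nu$. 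Setting $p = g/(1-g)$ delivers the required density. Uniqueness $\nu$-almost everywhere is the standard argument: if two densities $p_1, p_2$ both represent $\mu$, then $\int_A (p_1 - p_2) \, d\nu = 0$ for every measurable $A$, forcing $p_1 = p_2$ $\nu$-a.e.

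The main obstacle is precisely the verification that $g < 1$ holds $\lambda$-a.e., which is where absolute continuity must be invoked and where the whole argument would collapse for a non-absolutely-continuous $\mu$. The extension from finite to $\sigma$-finite measures is fiddly but routine; one partitions $E$ into countably many sets of finite measure, obtains local densities on each, and glues them together measurably, checking that the pasted function still represents $\mu$ globally.
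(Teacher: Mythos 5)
The paper does not prove this statement: it is imported verbatim as a classical result, cited to \citet[Thm.~5.11]{cinlar2011}, in the background section \cref{sec:appx:definitions}. So there is no in-paper argument to compare yours against; your proposal stands or falls on its own.

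On its own terms, your von Neumann--Riesz argument is the standard modern proof and is essentially correct: the ``if'' direction via vanishing of integrals over $\nu$-null sets plus the standard machine, the ``only if'' direction via the auxiliary measure $\lambda = \mu + \nu$, boundedness of $L(h)=\int h\,d\mu$ on $L^2(\lambda)$, Riesz representation, and the localization $0 \le g < 1$ $\lambda$-a.e.\ (correctly identifying the set $\{g \ge 1\}$ as the one place where $\mu \ll \nu$ enters). One step deserves more care than you give it: the substitution $h = \mathbf{1}_A/(1-g)$ is not licensed by ``$g<1$ a.e.'' alone, because $1/(1-g)$ need not belong to $L^2(\lambda)$. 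The standard repair is to test against $h_n = \mathbf{1}_A(1+g+\cdots+g^n)$, which is bounded and hence in $L^2(\lambda)$ for finite $\lambda$, obtain $\int_A (1-g^{n+1})\,d\mu = \int_A g(1+\cdots+g^n)\,d\nu$, and pass to the limit by monotone convergence on the right and dominated convergence on the left, using $g^{n+1}\to 0$ $\lambda$-a.e.\ on $\{g<1\}$. With that fix, and the routine $\sigma$-finite patching you sketch (which is genuinely needed --- the theorem is false without $\sigma$-finiteness of $\nu$, a hypothesis the paper's statement leaves implicit since it only ever applies the result to probability and Lebesgue measures), the proof is complete.
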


Typically, when discussing a probability measure $P$ on $\bbR^d$, $p$ is the density of $P$ with respect to the Lebesgue measure $\lebesgue$, implicitly assuming that $P \ll \lebesgue$. In this work, we will also be working in this context, referring to $p$ as a probability density, unless stated otherwise. 

\begin{definition}[Equivalence, {\citealp[Problem 19.5]{Schil5118}}]
    Two measures $\mu$, $\nu$ are said to be \textbf{equivalent} if $\mu \ll \nu$ and $\nu \ll \mu$.
\end{definition}

Clearly, equivalent measures assign the exact same null sets, and imply the same ``almost everywhere'' statements. There is again an analogous definition in terms of densities.

\begin{restatable}{lemma}{restateequivdensity}[{\citealp[Exercise 19.5]{Schil5118}}]
    If $\mu$ and $\nu$ are two measures defined on the same measurable space, then any density $p$ of one with respect to the other satisfies $p(x) > 0$ $\mu$-almost everywhere (equiv. $\nu$-almost everywhere) if and only if they are equivalent. 
\end{restatable}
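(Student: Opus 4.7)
The plan is to exploit the Radon--Nikodym representation (\cref{thm:rd}) to reduce the equivalence of measures to a pointwise positivity statement about the density. Without loss of generality, assume $\mu \ll \nu$ (so that a density $p = d\mu/d\nu$ exists); the symmetric case follows by swapping the roles of $\mu$ and $\nu$. Let $N = \{x \in E : p(x) = 0\}$, which is measurable since $p$ is. The key integral identity I will repeatedly use is $\mu(A) = \int_A p \, d\nu$ for any measurable $A$.

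For the forward direction, assume $\mu$ and $\nu$ are equivalent, so in particular $\nu \ll \mu$. Applying the density formula to $N$ itself gives $\mu(N) = \int_N p \, d\nu = 0$ because $p \equiv 0$ on $N$. By $\nu \ll \mu$, this forces $\nu(N) = 0$, so $p > 0$ holds $\nu$-almost everywhere. Since $\mu \ll \nu$, this also implies $p > 0$ $\mu$-a.e., justifying the parenthetical remark.

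For the converse, assume $p > 0$ $\nu$-a.e., so $\nu(N) = 0$. I need to show $\nu \ll \mu$. Take any measurable $A$ with $\mu(A) = 0$. Then
\begin{equation*}
0 = \mu(A) = \int_A p \, d\nu = \int_{A \setminus N} p \, d\nu,
\end{equation*}
where the last equality uses $\nu(A \cap N) \leq \nu(N) = 0$. Since $p > 0$ on $A \setminus N$, a standard argument (e.g., decomposing $A \setminus N$ into the sets $\{p \geq 1/k\}$ and using monotone convergence) gives $\nu(A \setminus N) = 0$, hence $\nu(A) = 0$. Thus $\nu \ll \mu$, and combined with the standing $\mu \ll \nu$, we obtain equivalence.

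There is no serious obstacle; the only subtlety is interpreting the "$\mu$-a.e.\ (equiv.\ $\nu$-a.e.)" statement, which becomes meaningful only once equivalence is established, because a priori the two notions of almost-everywhere coincide precisely when $\mu \sim \nu$. The proof above resolves this by proving positivity $\nu$-a.e.\ in the forward direction and then transferring to $\mu$-a.e.\ using the equivalence we just established.
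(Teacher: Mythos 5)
Your proof is correct. The paper itself does not supply a proof of this lemma---it is cited as Exercise 19.5 of Schilling and used as a black box---so there is no in-paper argument to compare against; your argument (the zero set $N=\{p=0\}$ is $\mu$-null by the density formula, transferred to $\nu$ via equivalence in one direction, and the layer-cake decomposition of $\{p>0\}$ into $\{p\geq 1/k\}$ in the other) is the standard one and is complete, including the careful handling of which measure's null sets the a.e.\ statement refers to.
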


When working on Euclidean spaces, we will simply write almost everywhere (or a.e.) to mean $\lebesgue$-almost everywhere---if a probability measure $P_z$ is equivalent to $\lebesgue$, then $P_z$-almost everywhere is equivalent.

Finally, it should be intuitively obvious that if $f$ and $g$ are $\mu$-almost-everywhere equal, then pushing forward $\mu$ by either $f$ or $g$ results in the same measure.

\begin{proposition} 
    \label{prop:equaepushforward}
    Let $(E, \calE, \mu)$ be a measure space and $(F, \calF)$ a measurable space. Let $f, g: E \to F$ be measurable. Suppose $f = g$ $\mu$-almost everywhere. Then, we have $f_{\#}\mu = g_{\#}\mu$ on $\calF$.
\end{proposition}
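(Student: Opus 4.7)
The plan is to show equality of the two pushforward measures by evaluating each on an arbitrary $B \in \calF$ and decomposing the preimages along a measurable null set on which $f$ and $g$ may disagree.

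First, I would unpack the almost-everywhere hypothesis. By \cref{def:pushsigma} and the definition of ``almost everywhere'' given in the preliminaries, $f = g$ $\mu$-a.e.\ means there exists a measurable set $N \in \calE$ with $\mu(N) = 0$ such that $f(x) = g(x)$ for every $x \in E \setminus N$. It is important to use this measurable null set rather than the raw disagreement set $\{x : f(x) \neq g(x)\}$, which need not itself lie in $\calE$ --- this is the one subtle point to handle carefully.

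Next, fix an arbitrary $B \in \calF$. The key observation is the set identity
\begin{equation*}
    f^{-1}(B) \setminus N \;=\; g^{-1}(B) \setminus N,
\end{equation*}
which follows immediately because on $E \setminus N$ the two maps agree pointwise, so $f(x) \in B \iff g(x) \in B$. Both sides are measurable since $f$ and $g$ are measurable and $N \in \calE$.

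Finally, I would split $f^{-1}(B)$ and $g^{-1}(B)$ along $N$ and apply monotonicity together with $\mu(N) = 0$:
\begin{align*}
    f_\#\mu(B) &= \mu(f^{-1}(B)) = \mu\bigl(f^{-1}(B) \setminus N\bigr) + \mu\bigl(f^{-1}(B) \cap N\bigr) \\
    &= \mu\bigl(f^{-1}(B) \setminus N\bigr) \\
    &= \mu\bigl(g^{-1}(B) \setminus N\bigr) = \mu(g^{-1}(B)) = g_\#\mu(B),
\end{align*}
where the second-to-last equality uses the identity above and the same decomposition applied to $g^{-1}(B)$. Since $B$ was arbitrary, $f_\#\mu = g_\#\mu$ on all of $\calF$, proving the proposition. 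The only nontrivial step is the first one (passing to a measurable null set); the rest is routine measure-theoretic bookkeeping.
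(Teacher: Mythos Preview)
Your proof is correct. It differs from the paper's in method: the paper argues via integration, observing that $\onevec_B \circ f = \onevec_B \circ g$ $\mu$-a.e.\ and then invoking the change-of-variables formula together with the fact that a.e.-equal functions have equal integrals, so that $f_\#\mu(B) = \int \onevec_B\circ f\,d\mu = \int \onevec_B\circ g\,d\mu = g_\#\mu(B)$. Your argument is more elementary: you work directly with preimages, split along the measurable null set $N$, and use only finite additivity and monotonicity of $\mu$. In effect you are reproving, in this special case, the ``equal a.e.\ implies equal integrals'' step that the paper invokes as a black box. Both are perfectly valid; yours has the minor advantage of being self-contained given only the basic measure axioms, while the paper's is a one-liner once the integral fact is taken for granted. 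One small quibble: your citation of \cref{def:pushsigma} is misplaced, as that definition concerns pushforward $\sigma$-algebras and has nothing to do with the a.e.\ hypothesis; the relevant definition is the one of ``almost everywhere'' you also mention.
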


\begin{proof}
    Let $B \in \calF$, and $\onevec_B$ be the indicator function for $B$. Clearly, $\onevec_B \circ f = \onevec_B \circ g$ $\mu$-almost everywhere. Then, 
    \begin{align}
        f_{\#}\mu(B) = \int \onevec_B(x) \mu(f^{-1}(dx)) = \int \onevec_B \circ f(x) \mu(dx) \\
        = \int \onevec_B \circ g(x) \mu(dx) = \int \onevec_B(x) \mu(g^{-1}(dx)) = g_{\#}\mu(B),
    \end{align}
    since $\int f d\mu = \int g d\mu$ for $f = g$ $\mu$-almost everywhere.
\end{proof}

\subsection{Bijective Mappings, Borel Isomorphisms}

This section reviews relevant facts about Borel isomorphisms, the main reference is \cite[Ch. 15]{Kechris1995}. In this section, we refer to the Borel spaces measurable spaces $(E, \borel(E))$ and $(F, \borel(F))$ simply by $E$ and $F$.

Bijective mappings $f: E \to F$ can enjoy some additionally nice properties. First, it can be easily shown that 
\begin{align}
    f(f^{-1}(A)) = A, \text{ for all } A \subset E.
\end{align}
Furthermore, the image $f(A)$ defines the pre-image of the inverse mapping $f^{-1}$. This is not necessarily true for non-bijective mappings. 

We now define the notion of isomorphism between Borel measurable spaces.
\begin{definition}[Borel Isomorphism, {\citealp[Ch. 10.B]{Kechris1995}}]
    Let $f: E \to F$ be a bijective mapping. If $f$ and $f^{-1}$ are both measurable, it is known as an \textbf{Borel isomorphism}, and $E$, $F$ are said to be isomorphic. If $E = F$, $f$ is called an \textbf{Borel automorphism}. 
\end{definition}

We denote the space of Borel automorphisms of $E$ as $\Aut(E)$.

\begin{lemma}[{\citealp[Corollary 15.2]{Kechris1995}}]
\label{lem:borelimage}
For $f: E \to F$ Borel and injective, $f(B) \in \borel(F)$ for any $B \in \borel(E)$.
\end{lemma}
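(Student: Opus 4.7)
The plan is to invoke the classical Lusin--Souslin theorem (the cited Corollary~15.2 of \citet{Kechris1995}) and sketch its proof strategy, which rests on three pillars of descriptive set theory: (i) continuous (more generally Borel) images of Borel sets are analytic; (ii) Lusin's separation theorem, that two disjoint analytic subsets of a Polish space can be separated by a Borel set; and (iii) Souslin's theorem, a consequence of (ii), that a subset of a standard Borel space is Borel if and only if it and its complement are both analytic. Here $E$ and $F$ are standard Borel, which we may realize as Polish.

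First, I would reduce to the continuous case by passing to the graph. The graph $\mathrm{graph}(f|_B) = \{(x, f(x)) : x \in B\}$ is a Borel subset of $E \times F$ because $f|_B$ is Borel (its preimages of Borel cylinders are Borel). The projection $\pi_F \colon E \times F \to F$ is continuous, and $f(B) = \pi_F(\mathrm{graph}(f|_B))$. Injectivity of $f$ on $B$ transfers to injectivity of $\pi_F$ on the graph, since if two graph points share an $F$-coordinate then their $E$-coordinates coincide by injectivity of $f$. It therefore suffices to prove: if $g \colon E' \to F$ is continuous and injective on a Borel $B' \subset E'$, then $g(B')$ is Borel.

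Second, $g(B')$ is analytic as the continuous image of a Borel set, so by Souslin's theorem it is enough to show $F \setminus g(B')$ is also analytic. This is the key step, and the main obstacle, because complements of analytic sets are only coanalytic in general. The standard technique builds a Lusin scheme: recursively decompose $B'$ along the Borel hierarchy into countable partitions of Borel pieces of vanishing diameter. At each level, injectivity of $g$ makes the images of the pieces pairwise disjoint analytic sets, so Lusin separation provides Borel sets $C_s \supset g(B'_s)$ that remain disjoint across each partition. Intersecting the approximations across levels produces a Borel set equal to $g(B')$, which establishes that $g(B')$ is Borel.

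The hardest part is the Lusin-scheme construction in the continuous case, which genuinely uses the Polish structure (refinements with small diameter) together with the full force of Lusin separation applied countably many times. Because the lemma is cited as a known classical result, a full proof is not required in this appendix; the plan is to present the reduction and outline above and refer the reader to \citet[Ch.~15]{Kechris1995} (in particular Theorems~15.1 and~15.2) for the detailed scheme.
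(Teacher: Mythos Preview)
The paper does not prove this lemma at all; it is stated with a bare citation to \citet[Corollary~15.2]{Kechris1995} and then used as a black-box input to Lemmas~\ref{lem:boreliso} and~\ref{lem:indeterminacyismsbl}. Your proposal therefore goes well beyond what the paper does, and the outline you give is the correct classical argument: the graph trick to reduce a Borel injection to a continuous injection on a Borel set, followed by the Lusin-scheme construction driven by the separation theorem, is precisely the route taken in \citet[Theorem~15.1]{Kechris1995}.

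One small presentational wobble: you set up the second step by saying it suffices (via Souslin) to show that $F \setminus g(B')$ is analytic, but the scheme you then describe does not do that---it directly manufactures a Borel set equal to $g(B')$ by intersecting the separated approximations. These are two different endgames; the Lusin-scheme argument bypasses the ``complement is analytic'' formulation entirely. This does not affect correctness, but if you keep the sketch, drop the Souslin framing and go straight to the scheme. Relative to the paper, nothing is missing: any proof content here is strictly additional.
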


Since compositions of measurable functions remain measurable, the composition of Borel isomorphisms is again a Borel isomorphism of the appropriate spaces. Borel-measurable bijections are particularly nice to work with---they are automatically Borel isomorphisms.

\begin{lemma}
\label{lem:boreliso}
Let $E$, $F$ be Borel spaces and $f: E \to F$ be bijective. Then, $f$ is measurable if and only if it is a Borel isomorphism.
\end{lemma}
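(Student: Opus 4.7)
The plan is to prove both directions, with the forward direction being essentially definitional and the reverse direction reducing immediately to the preceding Lemma (\cref{lem:borelimage}).

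For the ``if'' direction, suppose $f$ is a Borel isomorphism. Then by the definition of a Borel isomorphism stated just above, both $f$ and $f^{-1}$ are measurable, so in particular $f$ is measurable. This direction requires no further work.

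For the ``only if'' direction, suppose $f \colon E \to F$ is a measurable bijection. I need to show that $f^{-1} \colon F \to E$ is also measurable, i.e., that for every $B \in \borel(E)$, the preimage $(f^{-1})^{-1}(B)$ lies in $\borel(F)$. The key observation is that, since $f$ is bijective, $(f^{-1})^{-1}(B) = f(B)$, so the task reduces to showing that $f$ sends Borel sets to Borel sets (i.e., that $f$ is a Borel map in the forward direction on sets). Because $f$ is bijective, it is in particular injective, and $f$ is Borel-measurable by hypothesis, so \cref{lem:borelimage} applies directly and yields $f(B) \in \borel(F)$ for every $B \in \borel(E)$. Therefore $f^{-1}$ is measurable, and $f$ is a Borel isomorphism.

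The only nontrivial content in this lemma is hidden in \cref{lem:borelimage}, which is a nontrivial descriptive-set-theoretic fact (it fails for general measurable spaces and really uses that $E$ and $F$ are standard Borel). Since that lemma has already been stated and we are entitled to invoke it, the proof of the present lemma is effectively a one-line application once the forward direction is dispensed with, and there is no real obstacle to overcome beyond a clean bookkeeping of preimages versus images for bijections.
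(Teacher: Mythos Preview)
Your proposal is correct and follows essentially the same approach as the paper: the nontrivial direction is handled by observing that $(f^{-1})^{-1}(B) = f(B)$ for a bijection and then invoking \cref{lem:borelimage} (the Kechris result on Borel images under injective Borel maps) to conclude $f^{-1}$ is measurable. The only cosmetic difference is that the paper dismisses the trivial direction with a one-line remark rather than writing it out.
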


\begin{proof}
We only prove the forward direction, i.e., showing $f^{-1}$ is measurable if $f$ is measurable. The reverse direction is identically proved. By \cite[Theorem 15.1]{Kechris1995}, for $f$ Borel-measurable and injective, $f(B)$ is a Borel set of $F$ for any Borel set $B$ of $E$. Since $f(B)$ defines the pre-image of $f^{-1}$ for any Borel set $B$ of $E$, it immediately follows that $f^{-1}$ is also Borel-measurable.
\end{proof}

Finally, we define the notion of a measure auto/isomorphism. 

\begin{definition}[Measure Isomorphism, {\citealp[Sec 9.2]{Boga2007}}]
Let $(E, \calE, \mu)$ and $(F, \calF, \nu)$ be two Borel measure spaces. A Borel isomorphism $f: E \to F$ is called a $(\mu, \nu)$-measure isomorphism if
\begin{align}
    f_{\#}\mu = \nu, \quad f^{-1}_{\#}\nu = \mu. 
\end{align}
It is called a $\mu$-measure-preserving automorphism if $(E, \calE, \mu) = (F, \calF, \nu)$. 
\end{definition}

Furthermore, by \cref{prop:equaepushforward}, if $f$ is a measure auto/isomorphism and $f = g$ $\mu$-almost everywhere, then $g$ is also a measure auto/isomorphism.

\section{\uppercase{Proofs}}
\label{sec:appx:proofs}

Before proceeding into the proofs of any specific result, we first state a useful technical fact.

\begin{lemma}
    \label{lem:indeterminacyismsbl}
	Let $f_a, f_b \in \genClass$. Then, $\genTrans_{a,b} = f_b^{-1} \circ f_a \in \Aut(\latent)$.
\end{lemma}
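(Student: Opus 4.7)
The plan is to verify the two defining properties of a Borel automorphism of $\latent$: bijectivity as a self-map of $\latent$, and measurability of both $\genTrans_{a,b}$ and its inverse. Both follow by combining \cref{assump::bijective} with the two facts about Borel maps already recorded in the paper: \cref{lem:borelimage} (injective Borel maps send Borel sets to Borel sets) and \cref{lem:boreliso} (measurable bijections between Borel spaces are Borel isomorphisms).

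First I would address well-definedness and bijectivity. By \cref{assump::bijective}, both $f_a$ and $f_b$ are injective with common image $\genClass(\latent) \subseteq \obs$. Hence $f_b$ restricts to a bijection $\tilde f_b \colon \latent \to \genClass(\latent)$, and similarly for $f_a$. For any $z \in \latent$ we have $f_a(z) \in \genClass(\latent) = \tilde f_b(\latent)$, so $\tilde f_b^{-1}(f_a(z))$ is well-defined, and $\genTrans_{a,b} = \tilde f_b^{-1} \circ f_a$ is a map $\latent \to \latent$. It is a bijection of $\latent$, with two-sided inverse $\tilde f_a^{-1}\circ f_b = \genTrans_{b,a}$, since the composition of two bijections (onto their respective codomains) is a bijection.

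The remaining step, and the one requiring the most care, is measurability of $\genTrans_{a,b}$ and its inverse as maps $\latent \to \latent$. Equip $\genClass(\latent)$ with the trace $\sigma$-algebra from $\borel(\obs)$; because $f_b$ is injective and Borel, \cref{lem:borelimage} implies $\genClass(\latent) = f_b(\latent) \in \borel(\obs)$, so $\genClass(\latent)$ is itself a Borel space. Now $\tilde f_b \colon \latent \to \genClass(\latent)$ is a measurable bijection between Borel spaces, so \cref{lem:boreliso} applies and gives that $\tilde f_b^{-1} \colon \genClass(\latent) \to \latent$ is measurable. Viewing $f_a$ as a measurable map $\latent \to \genClass(\latent)$ (its range already lies there) and composing with the measurable $\tilde f_b^{-1}$ yields that $\genTrans_{a,b}$ is measurable. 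The identical argument with $a$ and $b$ swapped handles the inverse $\genTrans_{b,a}$.

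Once $\genTrans_{a,b}$ is known to be a measurable bijection of $\latent$ onto itself, a second application of \cref{lem:boreliso} to $\genTrans_{a,b}\colon \latent \to \latent$ upgrades this to a Borel isomorphism, giving $\genTrans_{a,b} \in \Aut(\latent)$. The main obstacle is purely bookkeeping: keeping track of the domain restriction from $\obs$ to $\genClass(\latent)$ and justifying that the restricted inverse $\tilde f_b^{-1}$ is measurable, which is exactly what \cref{lem:borelimage,lem:boreliso} are designed to handle.
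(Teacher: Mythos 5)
Your proof is correct and rests on the same key ingredient as the paper's: \cref{lem:borelimage} (injective Borel maps send Borel sets to Borel sets). The paper gets there slightly more directly by computing the set-preimage $\genTrans_{a,b}^{-1}(B) = f_a^{-1}(f_b(B))$ and applying \cref{lem:borelimage} to $f_b(B)$, whereas you factor through the trace $\sigma$-algebra on $\genClass(\latent)$ and invoke \cref{lem:boreliso}; the two arguments are essentially equivalent, and yours is if anything a bit more explicit about well-definedness and the measurability of the inverse.
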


\begin{proof}
	The pre-image of $\genTrans_{a,b}$ is $f_a^{-1} \circ f_b$, since $\genClass$ is a family of injective functions. By definition of measurability, it suffices to show that $f_a^{-1}(f_b(B)) \subset \latent$ is still Borel for any Borel set $B \in \borel(\latent)$. By \cref{lem:borelimage}, $f_b(B) \subset \obs$ is Borel. Then, by measurability of $f_a$, it follows that $f_a^{-1}(f_b(B))$ is Borel and hence $\genTrans_{a,b}$ is measurable.
\end{proof}

\subsection{Proof of \cref{thm:indet:transport}}

The proof of \cref{thm:indet:transport} is conceptually simple, but requires some careful book-keeping to make the measure theoretic arguments precise. Specifically, we are required to analyze $\genTrans_{a,b}$, involving $f_{b}^{-1}$, which is only well-defined on $\genClass(\latent)$. As a result, we need to first establish a measurable space on $\genClass(\latent)$.

Recall the definition of a pushforward $\sigma$-algebra of $f$ (\cref{def:pushsigma}). This defines a $\sigma$-algebra on $\genClass(\latent)$, and by construction, $f$ is $(\borel(\latent), \genClass(\latent))$-measurable. Further, all generators in $\genClass$ induce the same pushforward $\sigma$-algebra.

\begin{lemma}
	For $f_a, f_b$ in $\mathcal{F}$, $\sigma(f_a) = \sigma(f_b)$. 
\end{lemma}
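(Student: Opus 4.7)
The plan is to establish set inclusion in both directions; by symmetry it suffices to show $\sigma(f_a) \subseteq \sigma(f_b)$. Unpacking the definition of the pushforward $\sigma$-algebra, I would take an arbitrary $B \in \sigma(f_a)$, so $B \subseteq \genClass(\latent)$ and $f_a^{-1}(B) \in \borel(\latent)$, and aim to show $f_b^{-1}(B) \in \borel(\latent)$.

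The key algebraic identity is that because $f_a$ is injective with image exactly $\genClass(\latent)$ (by \cref{assump::bijective}) and $B \subseteq \genClass(\latent)$, the composition $f_a \circ f_a^{-1}$ is the identity on $B$, so $f_a(f_a^{-1}(B)) = B$. Substituting, I would write
\begin{align*}
f_b^{-1}(B) \;=\; f_b^{-1}\bigl(f_a(f_a^{-1}(B))\bigr) \;=\; \genTrans_{a,b}\bigl(f_a^{-1}(B)\bigr).
\end{align*}
This reduces the question to showing that $\genTrans_{a,b}$ sends Borel sets of $\latent$ to Borel sets of $\latent$.

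That final step is handled by the immediately preceding \cref{lem:indeterminacyismsbl}, which gives $\genTrans_{a,b} \in \Aut(\latent)$, together with \cref{lem:boreliso} (a Borel-measurable bijection between Borel spaces is a Borel isomorphism) and \cref{lem:borelimage} (a Borel injection maps Borel sets to Borel sets). Applying either to the Borel set $f_a^{-1}(B)$ yields $\genTrans_{a,b}(f_a^{-1}(B)) \in \borel(\latent)$, so $f_b^{-1}(B) \in \borel(\latent)$ and hence $B \in \sigma(f_b)$. Swapping the roles of $a$ and $b$ gives the reverse inclusion and thus equality.

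The only real subtlety, which I would flag as the main thing to get right, is ensuring that $B$ lies inside the common image $\genClass(\latent)$ so that $f_a(f_a^{-1}(B)) = B$ holds exactly (not just up to intersection with $\genClass(\latent)$); this is precisely what the shared-image condition in \cref{assump::bijective} buys, and it is why pushforward $\sigma$-algebras agree across $\genClass$. Everything else is a routine bookkeeping argument chaining measurability of $f_a$, $f_b$, and their composition.
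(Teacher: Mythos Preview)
Your proof is correct. The paper takes a closely related but differently packaged route: instead of routing through the generator transform $\vec{A}_{a,b}$, it first shows that any $C \in \sigma(f_a)$ is already a Borel subset of $\obs$ (this is the argument of \cref{lem:onlyborel}, namely $C = f_a(f_a^{-1}(C))$ together with \cref{lem:borelimage}), and then concludes $f_b^{-1}(C) \in \borel(\latent)$ directly from Borel measurability of $f_b$. Both arguments bottom out in \cref{lem:borelimage}; the paper applies it to $f_a$ alone, whereas you apply it (via \cref{lem:indeterminacyismsbl}/\cref{lem:boreliso}) to the composite $\vec{A}_{a,b} = f_b^{-1}\circ f_a$. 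Your version has the mild expository advantage of avoiding the paper's forward reference to \cref{lem:onlyborel}; the paper's version has the advantage of isolating the reusable intermediate fact ``$C$ is Borel in $\obs$,'' which it needs immediately afterward anyway.
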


\begin{proof}
	To see that $\sigma(f_a) \subset \sigma(f_b)$, suppose $C \in \sigma(f_a)$. By \cref{lem:onlyborel}, $C$ is Borel, which means that $f_b^{-1}(C)$ is Borel by measurability. Hence, $C \in \sigma(f_b)$. We have $\sigma(f_b) \subset \sigma(f_a)$ by the exact same argument, which implies that $\sigma(f_a) = \sigma(f_b)$.
\end{proof}

We denote this shared $\sigma$-algebra by $\sigma(\genClass)$. Importantly, $\sigma(\genClass)$ is a subset of the Borel sets of $\obs$. 

\begin{lemma}
	\label{lem:onlyborel}
	$\sigma(\genClass)$ contains only Borel sets. In other words, $\sigma(\genClass) \subset \borel(\obs)$.
\end{lemma}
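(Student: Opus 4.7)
The plan is to prove the lemma by unwinding the definition of the pushforward $\sigma$-algebra and leveraging injectivity of the generators together with \cref{lem:borelimage}, which states that an injective Borel map sends Borel sets to Borel sets. Since the preceding lemma shows that $\sigma(f_a) = \sigma(f_b)$ for any $f_a, f_b \in \genClass$ and defines $\sigma(\genClass)$ as this common $\sigma$-algebra, it suffices to fix an arbitrary $f \in \genClass$ and argue that $\sigma(f) \subset \borel(\obs)$.

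Concretely, I would fix $f \in \genClass$ and take an arbitrary $B \in \sigma(f)$. By definition of the pushforward $\sigma$-algebra, $A := f^{-1}(B) \in \borel(\latent)$. Next, using that $f$ is injective by \cref{assump::bijective} and that elements of $\sigma(f)$ are naturally identified with subsets of the image $\genClass(\latent) = f(\latent)$, I obtain the identity $f(A) = f(f^{-1}(B)) = B$. Finally, I invoke \cref{lem:borelimage}: since $f$ is a Borel-measurable injection and $A \in \borel(\latent)$, the image $f(A)$ is a Borel subset of $\obs$. Therefore $B = f(A) \in \borel(\obs)$, and since $B \in \sigma(f)$ was arbitrary, $\sigma(f) \subset \borel(\obs)$, yielding $\sigma(\genClass) \subset \borel(\obs)$.

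The main subtlety, and the only place where some care is required, is that for a general measurable map $f\colon\latent\to\obs$ one cannot conclude Borelness of $B\subset \obs$ from Borelness of $f^{-1}(B)$ alone: points in $\obs\setminus \genClass(\latent)$ are invisible to $f^{-1}$, so arbitrary subsets there could sneak into the pushforward $\sigma$-algebra if it were taken on all of $\obs$. This is exactly why the argument relies on \cref{assump::bijective} (each $f \in \genClass$ is injective and all generators share the image $\genClass(\latent)$) and on \cref{lem:borelimage} (injectivity is what promotes Borel preimages to Borel images). Once those two inputs are in hand, the proof is a one-line application of $f(f^{-1}(B)) = B$; there is no routine calculation to grind through.
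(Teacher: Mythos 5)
Your proof is correct and follows essentially the same route as the paper's: take $B \in \sigma(f)$, note $f^{-1}(B)$ is Borel by definition of the pushforward $\sigma$-algebra, use injectivity to write $B = f(f^{-1}(B))$, and apply \cref{lem:borelimage} to conclude $B \in \borel(\obs)$. Your added remark about why elements of $\sigma(f)$ must be viewed as subsets of the common image $\genClass(\latent)$ (so that $f(f^{-1}(B)) = B$ rather than $B \cap f(\latent)$) is a worthwhile clarification of a point the paper leaves implicit, but it does not change the argument.
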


\begin{proof}
	Let $C \in \sigma(\genClass)$, and let $f \in \genClass$ be any generator. By definition, $f^{-1}(C)$ is Borel. Since $f$ is injective, we have $f(f^{-1}(C)) = C$. By \cref{lem:borelimage}, $C$ must be Borel.
\end{proof}

We are now ready to construct the measurable space $(\mathcal{F}(\latent), \sigma(\mathcal{F}))$. Note the following facts about $(\mathcal{F}(\latent), \sigma(\mathcal{F}))$:

\begin{itemize}
	\item For any $f \in \mathcal{F}$, $f: \latent \to \mathcal{F}(\latent)$ is bijective, and $f^{-1} : \mathcal{F}(\latent) \to \latent$ is well defined.
	\item For any $f \in \mathcal{F}$ and a Borel set $B \in \borel(\latent)$, its image $f(B) \subset \mathcal{F}(\latent)$ is also the pre-image of $f^{-1}$---that is, $(f^{-1})^{-1}(B) = f(B)$. 
	\item Since $\sigma(\mathcal{F}) \subset \borel(\obs)$, if any measures are equal on $\borel(\obs)$, then they are also equal on $\sigma(\mathcal{F})$. 
\end{itemize}

We are now ready to prove \cref{thm:indet:transport}. Recall that to say $A_{a,b}$ is a $(P_{z,a}, P_{z,b})$-measure isomorphism is to say that $A_{a,b \#}P_{z,a} = P_{z,b}$---equivalently, for each $B \in \borel(\latent)$, $P_{z,a}(A_{a,b}^{-1}(B)) = P_{z,b}(B)$.

\begin{proof}[Proof of \cref{thm:indet:transport}]~
    We first show the claim that $P_{\theta_a} = P_{\theta_b}$ if and only if $\genTrans_{a,b}$ is a $(P_{z,a}, P_{z,b})$-measure isomorphism.
    \begin{enumerate}
        \item[$\Rightarrow$] Recall that $\genTrans_{a,b} \in \Aut(\latent)$, by \cref{lem:indeterminacyismsbl}. By \cref{assump::noiseless} in the main text, $P_{\theta_a} = P_{\theta_b}$ implies that $f_{a\#}P_{z,a} = f_{b\#}P_{z,b}$ as measures on $\borel(\obs)$. By \cref{lem:onlyborel}, this implies $f_{a\#}P_{z,a} = f_{b\#}P_{z,b}$ also as measures on $\sigma(\mathcal{F})$. Now, let $B \in \borel(\latent)$. Then,
        \begin{align}
            P_{z,a}(\genTrans_{a,b}^{-1}(B)) = P_{z,a}(f_a^{-1}(f_b(B)) = P_{z,b}(f_b^{-1}(f_b(B)) = P_{z,b}(B),
        \end{align}
        where the first equality is by definition (working on $\sigma(\mathcal{F})$), the second equality is due to $f_{a\#}P_{z,a} = f_{b\#}P_{z,b}$, and the third equality is due to injectivity  (on the measurable space $(\genClass(\latent), \sigma(\genClass)$). Since $B$ was arbitrary, this shows that $P_{z,a} \circ \genTrans_{a,b}^{-1} = P_{z,b}$.
        \item[$\Leftarrow$] We show the contrapositive statement, i.e.,
        \begin{align}
            \genTrans_{a,b \#}P_{z,a} \neq P_{z,b} \implies P_{\theta_a} \neq P_{\theta_b}
        \end{align}
        Note that by \cref{assump::noiseless}, the hypothesis is equivalent to $f_{a\#}P_{z,a} \neq f_{b\#}P_{z,b}$. This means that there exists $B \in \borel(\obs)$ such that
        \begin{align}
            P_{z,a}(f_a^{-1}(B)) \neq P_{z,b}(f_b^{-1}(B)).
        \end{align}
        To show that $\genTrans_{a,b \#}P_{z,a} \neq P_{z,b}$ is to find $B^* \in \borel(\latent)$ such that 
        \begin{align}
            P_{z,a}(f_a^{-1}(f_b(B^*)) \neq P_{z,b}(B^*).
        \end{align}
        Now, $f_b^{-1}(B) \in \borel(\latent)$ by measurability, and hence taking $B^* = f_b^{-1}(B)$, we obtain
        \begin{align}
            P_{z,a}(f_a^{-1}(f_b(f_b^{-1}(B))) = P_{z,a}(f_a^{-1}(B)) \neq 
            P_{z,b}(f_b^{-1}(B)),
        \end{align}
        and hence $\genTrans_{a,b \#}P_{z,a} \neq P_{z,b}$, as required.

        Now, note that by definition of $\genTrans_{a,b}$, it is the unique such map that $f_a (\genTrans_{a,b}^{-1}(z)) = f_b(z)$ pointwise. Therefore, all possible indeterminacy transformations must be equivalent almost everywhere to $\genTrans_{a,b}$. By \cref{prop:equaepushforward}, almost everywhere equivalence preserves measure isomorphisms, and hence any indeterminacy transformation $A_{a,b}$ must be a $P_{z,a}, P_{z,b})$-measure isomorphism.
    \end{enumerate}
\end{proof}

\subsection{Proof of \cref{thm:id_general}}

As a result of \cref{thm:indet:transport}, the proof of \cref{thm:id_general} is straight forward.

\begin{proof}[Proof of \cref{thm:id_general}]
    Recall that, for the generative model to be identifiable up to a set of measurable functions $\indet(\obsModel)$ is to say that, for all $(f_a, P_{z,a})$, $(f_b, P_{z,b}) \in \mathcal{F} \times \mathcal{P}_z$ such that $P_{\theta_a} = P_{\theta_b}$, there exists some $A \in \indet(\obsModel)$ such that $A \equae f_b^{-1} \circ f_a$.
    
    We first show that for any parameter spaces $\genClass$ and $\distClass_z$, we have that $\indet(\obsModel) \subseteq \indet(\genClass) \cap \indet(\distClass_z)$. Suppose $A \in \mathcal{A}(\calM)$. That is, $A \equae f_b^{-1} \circ f_a$ for some $P_{z,a}, P_{z,b}$ such that $P_{\theta_a} = P_{\theta_b}$, where $\theta_a = (f_a, P_{z,a}), \theta_b = (f_b, P_{z,b})$. By definition of $A$, we have $A \in \indet(\genClass)$. By \cref{thm:indet:transport}, we must have that $A \in \indet(\distClass_z)$ also.
    
    We now show that $\indet(\genClass) \cap \indet(\distClass_z) \subseteq \indet(\obsModel)$. Suppose $A \in \indet(\genClass) \cap \indet(\distClass_z)$. Since $A \in \indet(\genClass)$, we can write $A \equae f_b^{-1} \circ f_a$ for some $f_a, f_b \in \genClass$, e.g., $A \equae \genTrans_{a,b}$. Since $A \in \indet(\distClass_z)$, there exist $P_{z,a}$ and $P_{z,b}$ such that $A_{\#}P_{z,a} = P_{z,b}$. By \cref{thm:indet:transport}, $\theta_a = (f_a, P_{z,a})$, $\theta_b = (f_b, P_{z,b})$ are such that $P_{\theta_a} = P_{\theta_b}$, and hence $A \in \mathcal{A}(\calM)$. 
\end{proof}

\subsection{Proof of \cref{prop:taskid}}

\begin{proof}[Proof of \cref{prop:taskid}]
    Equation \eqref{eq:theta:id} is just the definition of task identifiability (\cref{def:task:id}) with $\theta' = A\theta$, along with $[\theta] = \{ A \theta \colon A \in \indet(\obsModel)|_{\theta} \}$ from \cref{prop:equiv:rel:properties}. Now, assume that both equations in \eqref{eq:equiv:task} hold. Then
    \begin{align*}
        t(A\theta, \bfx_m, s(A\theta, \bfx_m)) & = t(A\theta, \bfx_m, A(s(\theta,\bfx_m))) \\
        & = t(\theta, \bfx_m, s(\theta, \bfx_m)) \;.
    \end{align*}
\end{proof}

\subsection{Proof of \cref{cor:me_id_general}}
\label{sec:me_proofs}

We first define an appropriate notion of identifiability in the multiple environments model indexed by $e \in E$, $\obsModel(\genClass, \{\distClass_z^e\}_{e\in E})$. Since the environments are known deterministically, we view the overall model as a statistical model over each environment with parameter $\theta^e$. We have $\theta^e = (f, P_z^e) \in \genClass \times \distClass_z^{e}$, where $\distClass_z^{e}$ is not necessarily the same for each $e \in E$. We will denote $\theta = \{\theta^e\}_{e \in E}$, and denote the marginal distribution over $\obs$ in environment $e$ as $P_{\theta}^{e}$ (recall that each of these are over $\obs$). We adapt the definition of an indeterminacy transform as follows. 

\begin{definition}[Multiple Environments Analogue to Main Text: \cref{def:indet:trans}]
\label{def:indet:trans:me}
$A_{a,b}$ is an \textbf{indeterminacy transformation} at $\theta_a, \theta_b$ if $P_{\theta_a}^{e} = P_{\theta_b}^{e}$ for each $e \in E$ and $f_a \circ A_{a,b}^{-1} \equae f_b$.
\end{definition}

That is, $A_{a,b}$ should be an indeterminacy transformation for the model in each environment. The indeterminacy set is again the collection of all indeterminacy transforms, and weak and strong identifiability remain as in \cref{def:indet:trans} and \cref{def:iden} in this setting. Since there are more constraints on $\theta_a$ and $\theta_b$ compared to the single environment case, it should be intuitively clear that the indeterminacy set is smaller in this case. Since the generator and hence the generator transforms do not change across environments, \cref{thm:id_general} applies in each environment. This forms the basic idea of the following proof of \cref{cor:me_id_general}.

\begin{proof}[Proof of \cref{cor:me_id_general}]
    Fix an environment $e \in E$. Recall that an indeterminacy transformation at $\theta_a$, $\theta_b$ in each environment is an automorphism $A_{a,b} \in \Aut(\latent)$ such that $P_{\theta_a}^e = P_{\theta_b}^e$ and $f_a \circ A_{a,b}^{-1} \equae f_b$. By \cref{thm:id_general}, any $A_{a,b}$ satisfying the above lies in $\mathcal{A}(\mathcal{F}) \cap \mathcal{A}(\mathcal{P}_z^e)$. 
    
    Now, by definition, the indeterminacy transformations as defined in \cref{def:indet:trans:me} are necessarily also indeterminacy transformations for each environment $e \in E$. Hence, for any indeterminacy transformation $A_{a,b}$, we have
    \begin{gather}
        A_{a,b} \in \bigcap_{e \in E} \left( \mathcal{A}(\mathcal{F}) \cap \mathcal{A}(\mathcal{P}_z^e) \right) =         \mathcal{A}(\mathcal{F}) \cap \left( \cap_{e} \mathcal{A}(\mathcal{P}_z^e)  \right),
    \end{gather}
    and hence the indeterminacy set is at most $ \mathcal{A}(\mathcal{F}) \cap \left( \cap_{e} \mathcal{A}(\mathcal{P}_z^e)  \right)$. 
\end{proof}

\subsection{Proof of \cref{cor:mv_id_general}}
\label{sec:mv_proofs}

In the multiple environments setting, the model could be viewed $|E|$ separate models which jointly identify the generator. In the multiple views case, $\obsModel(\{\genClass^e\}_{e \in E}, \distClass_z)$, we assume $E = (e_1, \dots, e_n)$ and combine all views into one unifying model, on which we apply our previously developed theory. 

Suppose we have the observation spaces $\obs^{e_i}$ and generator classes $\genClass^{e_i}$ (all of which are injective), which may be different across views (indeed, this should be the case for the result to be interesting). Our model is then parametrized in each view by $\theta^{e_i} = (f^{e_i}, P_z)$, where $f^{e_i} \in \genClass^{e_i}$ and $P_z \in \distClass_z$. 

Note that each observation $\{x_{e_i}\}_{i=1}^{n}$ is assumed to be generated by the same latent point $z$, i.e., the noiseless views are $\{f^{e_i}(z)\}_{i=1}^n$. We can equivalently state this assumption by stacking the views into a multivariate function:
\begin{align}
    f(z) = \begin{bmatrix}
    f^{e_1}(z) \\
    \vdots \\
    f^{e_n}(z)
    \end{bmatrix},
\end{align}
where we have $f: \latent \to \bigtimes_{i = 1}^n \obs^{e_i}$. Note the image of $f$ is significantly smaller than the product of the respective images. Writing $\genClass$ to denote the implied parameter space of the stacked $f$, we can write the shared image as:
\begin{align}
    \genClass(\latent) = \{(x_1, \dots, x_n) \in \bigtimes_{i = 1}^n \obs^{e_i} \mid (f^{e_i})^{-1}(x_i) = (f^{e_j})^{-1}(x_j), \text{ for all } i, j = 1, \dots, n \}.
\end{align}
As an example, consider $f: \bbR \to \bbR^2$ defined by $f(z) = (z, z^3)$. Writing $x = z, y = z^3$, the image is defined by the graph of the function $y = x^3$ in $\bbR^2$. Now, $f$ is invertible as a mapping $\latent \to \genClass(\latent)$, and we have 
\begin{align}
    f^{-1}(x_1, \dots, x_n) = (f^{e_1})^{-1}(x_1) = \cdots = (f^{e_n})^{-1}(x_n).
\end{align}

Clearly, identifying $f$ in the usual sense is equivalent to identifying each underlying generator. Hence, we follow \cref{def:indet:trans} and \cref{def:iden} in the main text, and do not need any additional definitions. For an $f \in \genClass$, $f_\#P_z$ is the joint distribution on $\bigtimes_{i = 1}^n \obs^{e_i}$ with marginals $f^{e_i}_\# P_z$.\footnote{It is sufficient for this section to characterize $f_\#P_z$ up to a coupling of its marginals.} Now, let $\theta = (f = \{f^{e}\}_{e \in E}, P_z)$, we denote the resulting distribution over $\bigtimes_{i = 1}^n \obs^{e_i}$ by $P_{\theta}$, and the corresponding marginals by $P_{\theta}^{e}$. Note that if $\theta_a$ and $\theta_b$ result in the same joint distribution, then their corresponding marginals match also. We now prove \cref{cor:mv_id_general} (for the case $E = (e_1, \dots, e_n)$).

\begin{proof}[Proof of \cref{cor:mv_id_general}]
Suppose $A_{a,b}$ is an indeterminacy transform at $\theta_a$, $\theta_b$. By \cref{thm:indet:transport}, $A_{a,b} \equae \genTrans_{a,b}$. The generator transform $\genTrans_{a,b} = f_b^{-1} \circ f_a$ is given by, for any $z \in \latent$
\begin{align}
    f_b^{-1}(f_a^{e_1}(z), \dots, f_a^{e_n}(z)) = (f_b^{e_1})^{-1}(f_a^{e_1}(x_1)) = \cdots = (f_b^{e_n})^{-1}(f^{e_n}_a(x_n)).
\end{align}
In other words, letting $\genTrans_{a,b}^e$ denote the generator transform for each view, we have
\begin{align}
    \genTrans_{a,b} = \genTrans_{a,b}^{e_1} = \cdots = \genTrans_{a,b}^{e_n}.
\end{align}
Since the marginals match at $\theta_a, \theta_b$ also, $A_{a,b}^{e_i} \equae \genTrans_{a,b}^{e_i} = \genTrans_{a,b} \equae A_{a,b}$ for any indeterminacy transform in each view $e_i$, i.e., since the generator transforms characterize the indeterminacy transforms, $A_{a,b}$ must be an indeterminacy transformation for each $e_i$. By \cref{thm:id_general}, we have $A_{a,b}^{e_i} \in \indet(\genClass^{e_i})) \cap \indet(\distClass_z)$ for each $e_i$, which implies that
\begin{align}
 A_{a,b} \in (\bigcap_{e \in E} \left[ \indet(\genClass^e)) \cap \indet(\distClass_z)\right] = (\cap_{e\in E}\indet(\genClass^e) \cap \indet(\distClass_z)).
\end{align}
\end{proof}

\subsection{Proof of \cref{prop:linear_expfam}}

This section makes heavy use of probability densities and absolute continuity. Refer to \cref{sec:appx:definitions} for precise definitions. We will reproduce the following Lemma for convenience.

\restateequivdensity*

Now, we may state an intermediate result.

\begin{lemma}
    \label{lemma:rd}
    Suppose probability measures $P_{z,a}, P_{z,b}$ admits strictly positive densities $p_a$, $p_b$. Suppose $A$ is a ($P_{z,a}$, $P_{z,b})$-measure isomorphism. Then, 
    \begin{gather}
        p_b(A(x))k_{A}(x) = p_a(x) \quad a.e.,
    \end{gather}
    where $k_{A}$ depends only on $A$ and is strictly positive a.e..
\end{lemma}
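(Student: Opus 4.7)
The plan is to construct $k_A$ via the Radon--Nikodym derivative of the pushforward $A_{\#}\lambda$ with respect to $\lambda$, and then apply the measure-theoretic change of variables formula. First I would observe that since $p_a$ and $p_b$ are strictly positive, the preceding equivalence-of-densities lemma gives $P_{z,a} \sim \lambda$ and $P_{z,b} \sim \lambda$. Combined with the hypothesis $A_{\#}P_{z,a} = P_{z,b}$, this yields $A_{\#}\lambda \ll A_{\#}P_{z,a} = P_{z,b} \ll \lambda$, and running the same chain using $A^{-1}_{\#}P_{z,b} = P_{z,a}$ gives $\lambda \ll A_{\#}\lambda$ as well. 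Hence $A_{\#}\lambda \sim \lambda$ and admits a strictly positive Radon--Nikodym derivative $j_A := d(A_{\#}\lambda)/d\lambda$, which by construction depends only on $A$ (and the reference measure $\lambda$).

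Next I would compute $P_{z,b}(B)$ in two ways for an arbitrary Borel $B$. Directly, $P_{z,b}(B) = \int_B p_b\, d\lambda$. Alternatively, applying the pushforward identity $\int h\, d(A_{\#}\lambda) = \int h \circ A\, d\lambda$ with $h(y) = 1_B(y)\, p_a(A^{-1}(y))$, we obtain
\begin{align*}
P_{z,b}(B) = A_{\#}P_{z,a}(B) = \int_{A^{-1}(B)} p_a\, d\lambda = \int_B p_a(A^{-1}(y))\, d(A_{\#}\lambda)(y) = \int_B p_a(A^{-1}(y))\, j_A(y)\, d\lambda(y).
\end{align*}
Equating integrands (using arbitrariness of $B$) gives $p_b(y) = p_a(A^{-1}(y))\, j_A(y)$ for $\lambda$-a.e.\ $y$.

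Finally, I would substitute $y = A(x)$. Because $A_{\#}\lambda \ll \lambda$, any $\lambda$-null set $N$ satisfies $\lambda(A^{-1}(N)) = A_{\#}\lambda(N) = 0$, so the identity pulls back to $p_b(A(x)) = p_a(x)\, j_A(A(x))$ for $\lambda$-a.e.\ $x$. Setting $k_A(x) := 1/j_A(A(x))$ then gives $p_b(A(x))\, k_A(x) = p_a(x)$ $\lambda$-a.e.; the same null-set-preservation argument applied to $\{y : j_A(y) = 0\}$ shows $k_A$ is strictly positive $\lambda$-a.e., and $k_A$ depends only on $A$ by construction. The main point requiring care is the directionality of null-set behavior under $A$ and $A^{-1}$, which underlies both the existence of $j_A$ in the first step and the validity of the a.e.\ substitution in the last step.
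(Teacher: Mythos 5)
Your proof is correct and follows essentially the same route as the paper: both establish that the image of Lebesgue measure under $A$ is equivalent to Lebesgue measure (via the strict positivity of $p_a,p_b$ and the isomorphism property) and then identify $k_A$ with the resulting Radon--Nikodym derivative through a change of variables. The only cosmetic difference is that the paper takes $k_A$ directly as the density of $B \mapsto \lambda(A(B))$ and changes variables once, whereas you work with $j_A = d(A_{\#}\lambda)/d\lambda$ and set $k_A = 1/(j_A \circ A)$, which costs you the extra (correctly handled) a.e.\ pull-back step at the end.
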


\begin{proof}
    Since $A$ is a ($P_{z,a}$, $P_{z,b})$-measure isomorphism and $P_{z,a}, P_{z,b}$ are equivalent to $\lambda_z$, we have that for a Borel set $B$,
    \begin{gather}
        \lambda_z(B) = 0 \iff P_{z,a}(B) = 0 \iff P_{z,b}(A(B)) = 0 \iff \lambda_z(A(B)) = 0,
    \end{gather}
    where the first and third equivalences are because $P_{z,a}$ and $P_{z,b}$ are equivalent to $\lambda_z$. This shows that $\lambda_z \circ A$ is equivalent to $\lambda_z$, and hence it has an a.e.-strictly positive density $k_A$. Then, by the definition of the density (\cref{thm:rd}), we have for a Borel set B,
    \begin{gather}
        P_{z,a}(B) = P_{z,b}(A(B))  \\ \iff \int_{B} p_a(x)\lambda_z(dx) = \int_{A(B)} p_b(x)\lambda_z(dx) = \int_{B} p_b(A(x))\lambda_z(A(dx)),
    \end{gather}
    where the last equality is by the standard change of variables formula, noting that $B = A^{-1}(A(B))$ since $A$ is invertible. Now, we have that
    \begin{gather}
        \int_{B} p_a(x)\lambda_z(dx) = \int_{B} p_b(A(x))k_A(x)\lambda_z(dx),
    \end{gather}
    by invoking the definition of the density again. Since the above holds for any $B$, we have
    \begin{gather}
        p_b(A(x))k_A(x) = p_a(x) \quad a.e.,
    \end{gather}
    where $k_A(x)$ is strictly positive a.e..
\end{proof}

\begin{corollary}
    \label{cor:rd}
    Suppose four probability measures $P_{1,a}, P_{2,a}, P_{1,b}, P_{2,b}$ have strictly positive densities $p_{1,a}, p_{2,a}, p_{1,b}, p_{2,b}$. For $A$ both a $(P_{1,a}, P_{1,b})$-measure isomorphism and a $(P_{2,a}, P_{2,b})$-measure isomorphism, we have
    \begin{gather}
        \frac{p_{1,a}}{p_{2,a}}(x) = \frac{p_{1,b}}{p_{2,b}}(A(x)) \quad a.e..
    \end{gather}
\end{corollary}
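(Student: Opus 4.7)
The plan is to apply \cref{lemma:rd} twice and take the quotient. Specifically, since $A$ is a $(P_{1,a}, P_{1,b})$-measure isomorphism with both densities strictly positive, \cref{lemma:rd} yields
\begin{align*}
    p_{1,b}(A(x)) \, k_A(x) = p_{1,a}(x) \quad \text{a.e.},
\end{align*}
and since $A$ is also a $(P_{2,a}, P_{2,b})$-measure isomorphism, another application gives
\begin{align*}
    p_{2,b}(A(x)) \, k_A(x) = p_{2,a}(x) \quad \text{a.e.}
\end{align*}

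The crucial observation that makes the corollary work is that $k_A$ is the \emph{same} factor in both equations: by inspection of the proof of \cref{lemma:rd}, $k_A$ is the Radon--Nikodym density of $\lambda_z \circ A$ with respect to $\lambda_z$, which depends only on $A$ and the reference measure $\lambda_z$, not on the specific pair of measures being transported. Dividing the first equation by the second then cancels $k_A$ and delivers the claimed identity.

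The only nontrivial bookkeeping is to verify that division is legitimate almost everywhere. The denominator $p_{2,a}(x)$ is strictly positive a.e.\ by hypothesis. For $p_{2,b}(A(x))$, strict positivity of $p_{2,b}$ means $N := \{y : p_{2,b}(y) = 0\}$ is $\lambda_z$-null, so I need $A^{-1}(N)$ to be $\lambda_z$-null as well; this follows from the equivalence of $\lambda_z$ and $\lambda_z \circ A$ established in the proof of \cref{lemma:rd} (equivalently, $k_A > 0$ a.e.). Likewise $k_A$ itself is strictly positive a.e., so cancelling it is valid. I expect no real obstacle here beyond this routine handling of the a.e.\ qualifier; the content of the corollary is essentially the observation that the Jacobian-like factor $k_A$ is an intrinsic property of $A$ and therefore drops out of density ratios.
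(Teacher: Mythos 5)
Your proof is correct and follows essentially the same route as the paper, which also derives the identity by applying \cref{lemma:rd} to both pairs of measures and cancelling the common factor $k_A$ using the facts that it is strictly positive a.e.\ and depends only on $A$. Your additional bookkeeping on why the division is valid a.e.\ is sound and actually more explicit than the paper's one-line justification.
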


\begin{proof}
This follows immediately from \cref{lemma:rd} from the fact that $k_A$ is strictly positive a.e. and depends only on $A$.
\end{proof}

We are now ready to prove \cref{prop:linear_expfam}.

\begin{proof}[Proof of \cref{prop:linear_expfam}]
    Let $A \in \Aut(\latent)$ be $(\mathcal{E}_m(\eta_a(u_i), T_a), \mathcal{E}_m(\eta_b(u_i), T_b)$-measure isomorphisms for all $i = 0, 1, \dots, K$. Suppose that both $\{\eta_a(u_i)\}_{i=0}^K$ and $\{\eta_b(u_i)\}_{i=0}^K$ are linearly independent. Fix $j$ arbitrarily and note that $\{\eta_a(u_i) - \eta_a(u_j) \}_{i \neq j}$ and $\{\eta_b(u_i) - \eta_b(u_j) \}_{i \neq j}$ both still span $\bbR^K$. From \cref{cor:rd} and by taking logarithms, we have for each $i \neq j$,
    \begin{gather}
        \eta_a(u_i)^\top T_a(z) - a(\eta_a(u_i)) - (\eta_a(u_j)^\top T_a(z) - a(\eta_a(u_j))) \\
        = \eta_b(u_i)^\top T_b(A(z)) - a(\eta_b(u_i)) - (\eta_b(u_j)^\top T_b(A(z)) - a(\eta_b(u_j))),
    \end{gather}
    almost everywhere, which simplifies to
    \begin{gather}
        (\eta_a(u_i) - \eta_a(u_j))^\top T_a(z) - c_a(u_i) = (\eta_b(u_i) - \eta_b(u_j))^\top T_b(A(z)) - c_b(u_i),
    \end{gather}
    almost everywhere. $c_a$, $c_b$ are differences in the normalizing constants $a(\eta_a)$, and do not depend on $z$---we suppress the dependency on $u$ for convenience. Written in matrix form, we have
    \begin{gather}
        \begin{bmatrix}
            \eta_a(u_0) - \eta_a(u_j)\\
            \vdots \\
            \eta_a(u_K) - \eta_a(u_j)
        \end{bmatrix}^\top 
        T_a(z) = 
        \begin{bmatrix}
            \eta_b(u_0) - \eta_b(u_j)\\
            \vdots \\
            \eta_b(u_K) - \eta_b(u_j)
        \end{bmatrix}^\top T_b(A(z))
        + \mathbf{c},
    \end{gather}
    almost everywhere, where $\mathbf{c}$ is the vector of differences $c_a - c_b$. Following \citep{Khem2020a}, we will call these two matrices $L_a$ and $L_b$, noting that they are invertible since their rows are linearly independent by assumption. Then, we obtain
    \begin{gather}
        L_a^\top T_a(z) = L_b^\top T_b(A(z)) + c \\
        \implies T_b(A(z)) = (L_b^{-1}L_a)^\top T_a(z) - (L_b^{-1}L_a)^\top \mathbf{c} \\
        \implies T_b(A(z)) = L^\top T_a(z) + \mathbf{d},
    \end{gather}
    almost everywhere, where $L = L_b^{-1}L_a$ is invertible and $\mathbf{d} = -L^\top \mathbf{c}$.
\end{proof}

For completeness, we will state the iVAE identifiability result within these terms. 

\begin{proposition}
    Suppose a generative model is described by \eqref{eq:modelme} with latent distributions described by \eqref{eq:ivaeprior}, and that $m$ is strictly positive. Suppose we observe at least $K+1$ distinct values of $u_i$ such that the corresponding natural parameters $\{\eta(u_i)\}_{i=0}^{K}$ are linearly independent. Then, any indeterminacy transformation satisfies
    \begin{align}
        T_b(A_{a,b}(x)) = L^\top T_a(x) + \mathbf{d},
    \end{align}
    almost everywhere, where $L$ is an invertible $K \times K$ matrix and $\mathbf{d}$ is a $K$-dimensional vector. 
\end{proposition}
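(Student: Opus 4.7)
The statement is a direct combination of two results already established in the excerpt: \Cref{cor:me_id_general} (which restricts the indeterminacy set in the multiple environments setting) and \Cref{prop:linear_expfam} (which characterizes joint measure isomorphisms between exponential family distributions). The plan is essentially to verify that the hypotheses of \Cref{prop:linear_expfam} hold for every pair of parametrizations $(f_a, T_a, \eta_a)$ and $(f_b, T_b, \eta_b)$ producing observationally equivalent models, then invoke it directly.

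First, I would treat each value $u_i$ as a distinct environment in the sense of \Cref{sec:ME}, so that \eqref{eqn::statisticalmodel} takes the multiple-environment form of \eqref{eq:modelme} with $P_z^{u_i} = \mathcal{E}_m(\eta(u_i), T)$. By \Cref{cor:me_id_general}, any indeterminacy transformation $A_{a,b}$ of this model lies in $\indet(\genClass) \cap \bigcap_{i=0}^{K} \indet(\distClass_z^{u_i})$. By the characterization in \Cref{thm:indet:transport} applied environmentwise, membership in $\indet(\distClass_z^{u_i})$ means precisely that $A_{a,b}$ is a $\bigl(\mathcal{E}_m(\eta_a(u_i), T_a),\, \mathcal{E}_m(\eta_b(u_i), T_b)\bigr)$-measure isomorphism, simultaneously for every $i = 0, 1, \dots, K$.

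Second, I would check the hypotheses of \Cref{prop:linear_expfam}. Strict positivity of the densities is immediate because $m$ is strictly positive and the exponential tilt $\exp(\eta^\top T(z) - a(\eta))$ is strictly positive. The linear independence requirement is supplied by the hypothesis that $\{\eta(u_i)\}_{i=0}^K$ is linearly independent; this needs to be read as a property of both parametrizations $a$ and $b$ (which should be made explicit in the statement, or taken as a condition on the model class to which both parametrizations belong). With both conditions in hand, \Cref{prop:linear_expfam} applies verbatim to $A = A_{a,b}$ and yields
\begin{align*}
    T_b(A_{a,b}(z)) = L^\top T_a(z) + \mathbf{d} \quad \text{a.e.},
\end{align*}
with $L$ a $K \times K$ invertible matrix and $\mathbf{d} \in \bbR^K$, which is the desired conclusion.

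There is no real obstacle here; the entire content is the bookkeeping that translates the classical iVAE hypotheses into the framework of \Cref{sec:defs,sec:ME}. The only subtle point worth flagging is the one above: the statement as written suggests a single sequence $\{\eta(u_i)\}$, whereas \Cref{prop:linear_expfam} requires linear independence for both parametrizations being compared. In a proof I would state this assumption symmetrically for $\eta_a$ and $\eta_b$, and then note that the conclusion recovers the original iVAE identifiability relation \eqref{eq:ivae:indet:orig} by substituting $x = f_a(z)$ and using $A_{a,b} \equae f_b^{-1}\circ f_a$ from \Cref{thm:indet:transport}.
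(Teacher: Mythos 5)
Your proposal is correct and follows essentially the same route as the paper, whose proof is simply the one-line observation that \cref{cor:me_id_general} (with $\genClass$ unconstrained) reduces the indeterminacy set to the simultaneous measure isomorphisms characterized in \cref{prop:linear_expfam}. Your extra care in verifying the hypotheses of \cref{prop:linear_expfam} and in flagging that the linear-independence condition must hold for both parametrizations $\eta_a$ and $\eta_b$ is a reasonable clarification of the statement, not a departure from the paper's argument.
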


\begin{proof}
    By \cref{cor:me_id_general} and since we do not constrain $\mathcal{F}$, the generator is identifiable up to the transformations described in \cref{prop:linear_expfam}. 
\end{proof}

\subsection{Proof of \cref{prop:orthogonal}}

Here, we present proofs for the automorphism version of \cref{prop:linear_expfam}. These results are simply special cases of the theory developed above by setting $P = P_{z,a} = P_{z,b}$ and where $(P_{z,a}, P_{z,b})$-measure isomorphisms are replaced with $P$-measure automorphisms. We begin with an intermediate result.

\begin{lemma}
    \label{lem:linear}
    Let $\mathcal{E}_{m, T}$ be as in \eqref{eq:expfam:set}, with $m$ strictly positive. Suppose $A$ is simultaneously a $\calE_{m,T}(\eta_1)$ and $\calE_{m,T}(\eta_2)$-measure preserving automorphism. Then, we have
    \begin{gather}
        (\eta_1 - \eta_2)^\top T(z) = (\eta_1 - \eta_2)^\top T(A(z)) \quad a.e..
    \end{gather}
\end{lemma}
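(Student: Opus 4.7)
The plan is to apply \Cref{cor:rd} with the four measures taken to be pairs of the same exponential family distribution, exploiting the fact that $A$ preserves both $\calE_{m,T}(\eta_1)$ and $\calE_{m,T}(\eta_2)$. Specifically, I would set $P_{1,a} = P_{1,b} = \calE_{m,T}(\eta_1)$ and $P_{2,a} = P_{2,b} = \calE_{m,T}(\eta_2)$. Since $m$ is strictly positive, both of these densities are strictly positive (the exponential factor is always positive), so the corollary applies and yields
\begin{equation*}
\frac{p_{\eta_1}}{p_{\eta_2}}(z) \;=\; \frac{p_{\eta_1}}{p_{\eta_2}}(A(z)) \quad \text{a.e.}
\end{equation*}

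Next, I would compute the ratio explicitly using the exponential family form: the base measure $m(z)$ cancels, giving
\begin{equation*}
\frac{p_{\eta_1}}{p_{\eta_2}}(z) \;=\; \exp\!\bigl((\eta_1 - \eta_2)^\top T(z) - (a(\eta_1) - a(\eta_2))\bigr).
\end{equation*}
Taking logarithms of both sides of the a.e.\ equality from the previous step, the log-normalizer constants $a(\eta_1) - a(\eta_2)$ appear identically on both sides and cancel, leaving exactly
\begin{equation*}
(\eta_1 - \eta_2)^\top T(z) \;=\; (\eta_1 - \eta_2)^\top T(A(z)) \quad \text{a.e.}
\end{equation*}

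There is essentially no obstacle here, since \Cref{cor:rd} has already done the measure-theoretic heavy lifting; the only thing to check is that the hypotheses of the corollary are satisfied, namely strict positivity of the densities (guaranteed by strict positivity of $m$) and the measure-isomorphism property (which is given, since a measure-preserving automorphism is a fortiori a $(P,P)$-measure isomorphism). The remainder is the straightforward algebraic simplification of the log-ratio of exponential family densities.
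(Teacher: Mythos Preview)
Your proposal is correct and matches the paper's own proof essentially line for line: the paper also invokes \Cref{cor:rd} with $p_{1,a}=p_{1,b}=p_{\eta_1}$ and $p_{2,a}=p_{2,b}=p_{\eta_2}$, then takes logarithms and cancels the normalizing constants. There is nothing to add.
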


\begin{proof}
    Denote the densities for $\calE_{m,T}(\eta_1)$ and $\calE_{m,T}(\eta_2)$ as $p_1$, $p_2$. The expression is a direct consequence of \cref{cor:rd} by plugging in the exponential family densities $p_{1,a} = p_{1,b} = p_1$ and likewise for $p_2$. Taking logarithms on both sides, we have
    \begin{gather}
        \eta_1^\top T(z) - \eta_2^\top T(z) - a(\eta_1) + a(\eta_2) = \eta_1^\top T(A(z)) - \eta_2^\top T(A(z)) - a(\eta_1) + a(\eta_2) \\
        \implies (\eta_1 - \eta_2)^\top T(z) = (\eta_1 - \eta_2)^\top T(A(z)) \quad a.e..
    \end{gather}
\end{proof}

This result implies \cref{prop:orthogonal}.

\begin{proof}[Proof of \cref{prop:orthogonal}]
\cref{lem:linear} applies to the $K$ contrast vectors $(\eta_i - \eta_0)$, so we have for each $i = 1, \dots, K$,:
\begin{gather}
    (\eta_i - \eta_0)^\top (T(z) - T(A(z)) = 0 \quad a.e..
\end{gather}
Any vector $v \in \text{span}\{(\eta_i - \eta_0)\}_{i = 1}^{K}$ is of the form $v = \sum_{i=1}^K a_i (\eta_i - \eta_0)$. Clearly, 
\begin{gather}
    v^\top (T(z) - T(A(z)) = 0 \quad a.e..
\end{gather}
Since $\text{span}\{(\eta_i - \eta_0)\}_{i = 1}^{K}$ is the row space of $M$, we have $(T(z) - T(A(z))) \in \textrm{ker}M$, almost everywhere.  
\end{proof}

\subsection{Proof of \cref{thm:identifiability}}

A more useful adaptation of \cref{prop:orthogonal} for strong identifiability is the following result.

\begin{proposition}
    \label{prop:identity_expfam}
    Let $\mathcal{E}_{m, T}$ be a fixed exponential family, with dimension $K$, such that $m$ is strictly positive and $T$ is injective. Suppose that $\eta_i \in \mathbb{R}^K$, $i = 0, 1, \dots, K$ span $\mathbb{R}^K$. Suppose $A: \mathbb{R}^d \to \mathbb{R}^d$ is a simultaneously a $\mathcal{E}_{m,T}(\eta_i)$-measure automorphism for each $\eta_i$. Then, $A(z) = z$, almost everywhere.
\end{proposition}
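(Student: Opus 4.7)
The proof plan is to chain two of the tools the paper has already developed: \Cref{prop:orthogonal} to reduce the problem to a statement about the sufficient statistic $T$, followed by the injectivity hypothesis on $T$ to pass from equality under $T$ to equality of the points themselves.

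First, I would apply \Cref{prop:orthogonal} directly. The hypotheses match: $A$ is a $\mathcal{E}_{m,T}(\eta_i)$-measure preserving automorphism for each $i=0,1,\dots,K$, $m$ is strictly positive, and the natural parameters lie in $\mathbb{R}^K$. Thus, with $M$ the $(K+1)\times K$ matrix whose rows are the $\eta_i$,
\begin{equation*}
  T(z) - T(A(z)) \in \ker M \quad \text{a.e.}
\end{equation*}
Since $\{\eta_i\}_{i=0}^{K}$ spans $\mathbb{R}^K$ by hypothesis, the row space of $M$ is all of $\mathbb{R}^K$, so $\ker M = \{0\}$. Hence $T(z) = T(A(z))$ almost everywhere.

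Second, I would invoke injectivity of $T$: for every $z$ outside the exceptional null set, $T(A(z)) = T(z)$ combined with injectivity forces $A(z) = z$. This yields $A(z) = z$ almost everywhere, which is the claim.

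The main obstacle is essentially bookkeeping at the interface between the two steps: one must make sure the ``span $\mathbb{R}^K$'' hypothesis is used in a way compatible with how \Cref{prop:orthogonal} is formulated (namely that the row space of $M$ equals $\mathbb{R}^K$, so that its kernel is trivial), and that the a.e.\ equality $T(z) = T(A(z))$ actually propagates to $A(z) = z$ via a \emph{global} injectivity of $T$, not merely injectivity on some subset. Since the injectivity of $T$ is assumed globally and the null set on which things can fail is a fixed Lebesgue-null set, this is straightforward; there is no real analytic difficulty beyond unpacking the two preceding results.
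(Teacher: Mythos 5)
Your proof is correct and follows the paper's own argument essentially verbatim: apply \Cref{prop:orthogonal}, note that the spanning hypothesis makes $M$ full rank so $\ker M = \{0\}$, hence $T(A(z)) = T(z)$ a.e., and then conclude by injectivity of $T$. The only cosmetic difference is that the paper phrases the rank condition via a choice of base point $\eta_0$ so that the contrasts $\{\eta_i - \eta_0\}$ form a basis, while you argue directly from the row space of $M$; both routes use \Cref{prop:orthogonal} in the same way.
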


\begin{proof}
Without loss of generality, assume that $\eta_0$ is such that $\{\eta_i - \eta_0\}$ forms a basis of $\mathbb{R}^K$. By \cref{prop:orthogonal}, $(T(z) - T(A(z))) \in \textrm{ker}M = \{0\}$, since $M$ is full rank. This shows that $T(A(z)) = T(z)$ almost everywhere. If $T$ is injective, then we have 
\begin{gather}
    A(z) = z \quad a.e..
\end{gather}
\end{proof}

Strong identifiability of our proposed method then follows.

\begin{proof}[Proof of \cref{thm:identifiability}]
    In this model, we have $\distClass_z^{e} = \{\mathcal{E}_{m,T}(\eta_e)\}$. By \cref{cor:me_id_general}, the generator is identifiable up to $\mathcal{A}(\mathcal{F}) \cap \left( \cap_e \mathcal{A}(\{\mathcal{E}_{m,T}(\eta_e)\})  \right)$.
    By \cref{prop:identity_expfam}, $\cap_e \mathcal{A}(\{\mathcal{E}_{m,T}(\eta_e)\})$ contains only functions that are equal to the identity almost everywhere, and hence the model is strongly identifiable.
\end{proof}

\subsection{Geometric characterization of iVAE indeterminacies} \label{sec:geom:indet}

We can further characterize the indeterminacy set geometrically in terms of the orthogonal complements of the subspace spanned by the natural parameter vectors; this is the Gaussian specialization of \cref{prop:orthogonal}. See \cref{tikz_orthogonal_appx} for a visualization.

\begin{figure}[b]
	\centering
		\begin{tikzpicture}[scale=2.3,tdplot_main_coords]
            \coordinate (O) at (0,0,0);
            \coordinate (mu1) at (0.5,0,0);
            \coordinate (mu2) at (0, 0.5,0);
            \draw[thick, draw opacity = 0.4,->] (0,0,0) -- (1,0,0) node[anchor=north east]{$x$};
            \draw[thick, draw opacity = 0.4,->] (0,0,0) -- (0,1,0) node[anchor=north west]{$y$};
            \draw[thick, draw opacity = 0.4,->] (0,0,0) -- (0,0,1) node[anchor=south]{$z$};
            \draw[thick, color = blue, ->] (O) -- (mu1) node[anchor = north west]{$\mu_1$};
            \draw[thick, color = blue, ->] (O) -- (mu2) node[anchor = north east]{$\mu_2$};
            \fill[fill=blue,fill opacity=0.1] ($1.5*(mu1) + 1.5*(mu2)$) -- ($-1.5*(mu1) + 1.5*(mu2)$) -- ($-1.5*(mu1) + -1.5*(mu2)$) -- ($1.5*(mu1) + -1.5*(mu2)$) -- cycle ;
            \fill[] ($1.5*(mu1) + -1.5*(mu2)$) node[color = blue, right = 8pt] {${\scriptstyle \text{span}(\mu_1, \mu_2)}$};
            \draw[dashed, color = red, <->] (0, 0, -0.75) -- (0, 0, 0.75) node[anchor = north east]{${\scriptstyle A(x) - x \in \text{span}(\mu_1, \mu_2)^\perp}$};
        \end{tikzpicture}
        \caption{The Gaussian specialization (\cref{prop:normals}) of \cref{prop:orthogonal}. The indeterminacy set (in \textcolor{red}{red}) concentrates entirely on the z-axis, with means $\mu_1 = (1, 0, 0), \mu_2 = (0, 1, 0)$. The kernel of a plane in $\mathbb{R}^3$ is the perpendicular line through the origin.}
	\label{tikz_orthogonal_appx}
\end{figure}
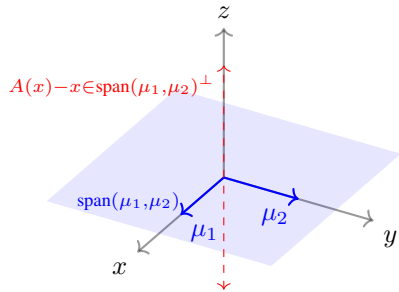

\begin{proposition}
    \label{prop:normals}
    In the multiple environments model described in \cref{thm:identifiability} (with $\latent = \mathbb{R}^d$), fix a base environment with distribution $\NormDist(0, \Sigma)$ and a subset $E^*$ of environments where $|E^*| = d' \leq d$, with distributions $\NormDist(\mu_e, \Sigma)$. Suppose $\{\mu_e\}_{e \in E^*}$ are linearly independent, and $(\mu_e)_i = 0$ for each $e$ and $i \notin d^*$ for some collection of dimensions $d^*$. Then, for any indeterminacy transformation $A_{a,b}$ it holds that
    \begin{align}
        (A_{a,b}(x))_{i\in d^*} = (x)_{i \in d^*} \quad a.e.
    \end{align}
\end{proposition}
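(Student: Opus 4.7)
The plan is to recognize the Gaussian family $\{\NormDist(\mu,\Sigma) : \mu \in \bbR^d\}$ with $\Sigma$ fixed as a full-rank exponential family of the form $\mathcal{E}_{m,T}$, so that \cref{prop:orthogonal} becomes directly applicable. I would write
\begin{align*}
p_{\mu}(x) = m(x)\exp\bigl(\mu^{\top}\Sigma^{-1}x - \tfrac{1}{2}\mu^{\top}\Sigma^{-1}\mu\bigr),
\end{align*}
with base measure $m(x) = (2\pi)^{-d/2}|\Sigma|^{-1/2}\exp(-\tfrac{1}{2}x^{\top}\Sigma^{-1}x) > 0$, which identifies $T(x)=x$ as the sufficient statistic and $\eta = \Sigma^{-1}\mu$ as the natural parameter.

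Next, invoke \cref{cor:me_id_general}: since each $\distClass_z^{e} = \{\NormDist(\mu_e,\Sigma)\}$ is a singleton, every indeterminacy transformation $A_{a,b}$ must simultaneously be a $\NormDist(\mu_e,\Sigma)$-measure preserving automorphism for each $e \in E^* \cup \{0\}$ (taking $\mu_0 = 0$ for the base environment). Applying \cref{prop:orthogonal} with parameter vectors $\eta_0 = 0$ and $\eta_e = \Sigma^{-1}\mu_e$ for $e \in E^*$ then yields
\begin{align*}
x - A_{a,b}(x) \in \ker M \quad \text{a.e.},
\end{align*}
where $M$ is the matrix whose rows are $\{\Sigma^{-1}\mu_e\}_{e \in E^*}$; equivalently, $x - A_{a,b}(x)$ is orthogonal to $\Sigma^{-1}\mu_e$ for every $e \in E^*$.

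The final step translates this kernel condition into the coordinate-wise statement. Because the $\mu_e$ are linearly independent and supported on the coordinates indexed by $d^*$, they span exactly $\mathrm{span}(e_i : i \in d^*)$, which in particular forces $d' = |d^*|$. Consequently $\mathrm{span}(\Sigma^{-1}\mu_e : e \in E^*) = \Sigma^{-1}\,\mathrm{span}(e_i : i \in d^*)$, and taking orthogonal complements constrains $x - A_{a,b}(x)$ to the complementary subspace, from which $(A_{a,b}(x))_{i \in d^*} = x_{i \in d^*}$ a.e.\ can be read off.

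The main obstacle is the last step. Passing from orthogonality against $\{\Sigma^{-1}\mu_e\}$ directly to the vanishing of the $d^*$ coordinates of $x - A_{a,b}(x)$ is immediate when $\Sigma$ preserves the subspace $\mathrm{span}(e_i : i \in d^*)$ (e.g., when $\Sigma$ is block-diagonal with blocks aligned with $d^*$ and its complement); in general, the conclusion is cleanest when read in the $\Sigma^{-1}$-transformed coordinates. Modulo this coordinate bookkeeping, the proof is a direct specialization of the exponential-family machinery already established in \cref{prop:orthogonal,prop:identity_expfam}.
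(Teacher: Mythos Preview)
Your approach is essentially the paper's: invoke \cref{cor:me_id_general} so that any indeterminacy transformation is a simultaneous measure-preserving automorphism for all the fixed Gaussian priors, then apply \cref{prop:orthogonal} with the Gaussian sufficient statistic and natural parameters, and read off the kernel condition on $x - A_{a,b}(x)$.

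There is one point where you are actually more careful than the paper. The paper's proof simply declares ``$T(x)=x$, $\eta_i=\mu_i$'' and arranges the $\mu_e$ into $M$, obtaining $(z-A(z))\in\ker M$ directly; since the $\mu_e$ are supported on $d^*$ and linearly independent, $\ker M$ consists of vectors vanishing on $d^*$, which gives the conclusion. Strictly speaking, with $T(x)=x$ the natural parameter is $\Sigma^{-1}\mu$, as you write, so the rows of $M$ should be $\Sigma^{-1}\mu_e$. Your caveat---that the coordinate-wise conclusion follows cleanly when $\Sigma$ preserves $\mathrm{span}(e_i:i\in d^*)$, e.g.\ $\Sigma$ block-diagonal along $d^*$---is a correct observation that the paper's argument glosses over (effectively treating $\Sigma$ as though it were the identity on this subspace). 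Modulo this bookkeeping, your proof matches the paper's.
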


\begin{proof}
    Similarly to the previous identifiability proofs, we appeal to \cref{cor:me_id_general} and analyze the shared automorphisms. For Gaussian distributions with a fixed covariance matrix varying by its mean, we have $T(x) = x$, and $\eta_i = \mu_c$. Using the base environment we have $\mu_0 = 0$. Now, arrange $\{\mu_i\}_{i=1}^{K}$ into the rows of a matrix $M$. By \cref{prop:orthogonal}, we have $(z - A(z)) \in \textrm{ker}M$. 

    $M$ has columns of $0$ corresponding to $d^*$ and linearly independent rows. Together, standard Gaussian elimination reveals that the reduced row echelon form of $M$ has the following form:
    \begin{align}
        \begin{bNiceMatrix}[first-row, last-col = 5]
            j \in d^* & j \notin d^*  & j \in d^*  & & \\
            1 & 0 & 0 & \cdots & \mu_1\\
            0 & 0 & 1 & \cdots & \mu_2\\
            \vdots & \vdots & \vdots & \cdots  &  \\
            0 & 0 & 0 & \cdots & \mu_d
        \end{bNiceMatrix}
    \end{align}
    The corresponding kernel is the space of vectors with $0$ entries for the indices $d^*$. Hence, $(z - A(z)) \in \textrm{ker}M$ implies $(A(z))_{i \in d^*} = (z)_{i \in d^*}$.

\end{proof}

\subsection{Identifiability of Equivariant Stochastic Mechanisms}
\label{proofs:esm}

In this section, we repeat the exercise of iVAE for the equivariant stochastic mechanisms model \citep{Ahuja2021}. In \citep{Ahuja2021}, weak identifiability of a temporal generative model is established. Adapted to our notation (note that the time indices represent environments), the model can be described:
\begin{align}
    \label{eqn:esm}
    X_t = f(Z_t)\;, \quad Z_{t+1} = m_t(Z_t, U_t)\;, Z_t \condind U_t\;, \quad t = 1, 2, \dots,
\end{align}
where $m_t \in \bfM: \latent \times [0,1] \to \latent$ are unknown mechanisms and $U_t$ are auxiliary noise variables. Note that this is in fact a noiseless generative model at the level of the observations, but where the underlying latent variable evolves according to a determinstic, but unknown mechanism $m_t$ and random noise $U_t$. To be clear, this means that $\genClass$ is fully flexible, while $\distClass_z^{t}$ is parametrized by an initial condition $P_1$, the distributions for $U_t$, and the mechanisms $m_t$. In what follows, we will assume a fixed $P_1$ and $U_t \sim U[0,1]$ as in \citep{Ahuja2021}, and leave $\distClass_z^{t}$ parametrized purely by the mechanisms $m_t$.

Denote the marginal distribution of $Z_{t}$ as $P_t$. In \citep{Ahuja2021}, identifiability of the generator $f$ is established up to pre-composition of some transformation $A$ such that $A \circ m_a(z,U) \equdist m_b(A(z), U)$ for $U\sim U[0,1]$, for all possible values of $z$, and $m_a, m_b \in \bfM$. Using our framework, we are able to show the following stronger identifiability result, using only observations from two time points $t = 1, 2$. 
\begin{proposition}
    \label{prop:esm}
    The model described by \eqref{eqn:esm} is identifiable up to $A_{a,b} \in \indet(\obsModel)$ satisfying
    \begin{gather}
        A_{a,b}(m_a(Z, U)) \equdist m_b(A_{a,b}(Z), U),
    \end{gather}
    for any $m_a, m_b \in \bfM$, $U \sim U[0,1]$ and any random variable $Z$ independent of $U$. 
\end{proposition}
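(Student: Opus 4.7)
I will apply \Cref{cor:me_id_general} with time indices $t = 1, 2, \dots$ playing the role of environments, and then read off the equivariance condition from the constraints that $A_{a,b}$ must satisfy at times $t=1$ and $t=2$. Since the generator class $\genClass$ is unconstrained in model \eqref{eqn:esm}, $\indet(\genClass)$ contributes no restriction beyond $A_{a,b}\in\Aut(\latent)$, so the entire indeterminacy set is controlled by $\bigcap_{t}\indet(\distClass_z^{t})$, where $\distClass_z^{t}$ is the class of distributions of $Z_t$ arising from the fixed initial $P_1$, the noise $U\sim U[0,1]$, and the mechanisms $m_1,\dots,m_{t-1}\in\bfM$.

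The first key step is the analysis at $t=1$. Because $P_1$ is fixed across parameterizations, the measure-isomorphism characterization in \Cref{thm:indet:transport} forces $A_{a,b}$ to be a $P_1$-measure preserving automorphism, i.e.\ $A_{a,b}(Z_1)\equdist Z_1$. The second key step is the analysis at $t=2$: the latent distributions are $P_{2,a}=\text{Law}(m_{1,a}(Z_1,U_1))$ and $P_{2,b}=\text{Law}(m_{1,b}(Z_1,U_1))$, and \Cref{thm:indet:transport} again gives
\[
A_{a,b}\!\left(m_{1,a}(Z_1,U_1)\right)\;\equdist\; m_{1,b}(Z_1,U_1).
\]

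The third step glues the two constraints together. Using $A_{a,b}(Z_1)\equdist Z_1$ together with $Z_1\condind U_1$ (hence $A_{a,b}(Z_1)\condind U_1$ by determinism of $A_{a,b}$), a pushforward argument replaces $Z_1$ by $A_{a,b}(Z_1)$ in the right-hand side without changing its distribution:
\[
m_{1,b}(Z_1,U_1)\;\equdist\; m_{1,b}(A_{a,b}(Z_1),U_1).
\]
Chaining these equalities yields the desired equivariance $A_{a,b}(m_a(Z,U))\equdist m_b(A_{a,b}(Z),U)$ for the specific pair $(m_a,m_b)=(m_{1,a},m_{1,b})$ and $Z\equdist Z_1$. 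To upgrade to \emph{any} $m_a,m_b\in\bfM$ and any admissible $Z\condind U$, I would iterate the same argument: at every transition $t\to t+1$, the induction hypothesis $A_{a,b}(Z_{t,a})\equdist Z_{t,b}$ combines with the $t{+}1$ isomorphism constraint to give the equivariance for $(m_{t,a},m_{t,b})$ applied to $Z\equdist Z_{t,a}$. Since $\theta_a,\theta_b$ are free to choose any mechanisms in $\bfM$ at each step, and since the equivariance is a purely distributional statement depending only on $\Law(Z)$ and $\Law(U)$, this sweeps out the claim for all $(m_a,m_b,\Law(Z))$ reachable in the model.

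The main obstacle is the last extension step: bridging from the per-time-step equivariance (which we get directly from \Cref{cor:me_id_general,thm:indet:transport}) to the universal ``any $Z$ independent of $U$'' formulation. The cleanest way to handle this is to interpret the conclusion as a pushforward identity on the space of probability measures on $\latent$, and to observe that the constraint on $A_{a,b}$ is a purely distributional one, so whenever it holds for $Z\equdist Z_{t,a}$ at one time, it holds for any $Z$ with the same law---so richness of $\{\Law(Z_{t,a})\}$ as $\theta_a$ ranges over the model is what gives the ``any $Z$'' form. The remaining bookkeeping---verifying \Cref{assump::noiseless} (trivial, as $X_t=f(Z_t)$ is noiseless) and measurability of the iterated mechanisms---is routine.
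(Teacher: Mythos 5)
Your proposal is correct and follows essentially the same route as the paper: it combines the $P_1$-preservation forced at $t=1$ with the measure-isomorphism constraint at $t=2$ from \cref{thm:indet:transport} (via \cref{cor:me_id_general}) and the independence $Z_t \condind U_t$ to read off the equivariance. The only differences are presentational---the paper routes the same argument through an augmented latent space $\latent\times[0,1]$ with extended maps $\tilde{f}(z,u)=(f(z),u)$, $\tilde{m}(z,u)=(m(z,u),u)$ and explicit product-measure bookkeeping, and stops at $t=1,2$ rather than iterating over transitions, but the substance (and the residual looseness in upgrading to ``any $Z$ independent of $U$,'' which you flag more candidly than the paper does) is identical.
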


\begin{proof}
We can analyze this model in our framework using just two time-points, $t = 1, 2$. We work on an augmented latent space $\tilde{\latent} = \latent \times [0,1]$ and treat the random variables $U_t$ as additional latent variables (i.e., as a ``noiseless'' case under our framework). For a generator $f: \latent \to \obs$, we extend $\tilde{f}: \tilde{\latent} \to \obs \times [0,1]$, $\tilde{f}(z, u) = (f(z), u)$. The identity extension ensures that $\tilde{f}$ is still injective, and is unique to $f$. Now suppose $f_a$ and $f_b$ are such that the distribution of $X_1$ and $X_2|X_1$ match. Note the marginal and conditional uniquely determine the joint, and hence we simply assume that the joint and hence marginal distributions of $X_1$ and $X_2$ match.

Let the joint distribution of $Z_1$ and $U$ be denoted $\pi_{Z_1, U}$. Since they are independent, we have that $\pi_{Z_1, U} = P_1 \otimes U[0,1]$.\footnote{This means that for a Borel product $B_z \times B_u$, where $B_z$, $B_u$ are Borel sets in their respective domains, we have $\pi_{X_1, U}(B_z \times B_u) = (P_1(B_z))(U[0,1](B_u))$.} We also extend the mechanism $m$ as $\tilde{m}(z, u) = (m(z, u), u)$, implying that $\tilde{m}^{-1}(B_z \times B_u) = m^{-1}(B_z) \times B_u$. Since $Z_2 = m(Z_1, U_1)$, this then implies that $P_2 = \pi_{Z_1, U} \circ \tilde{m}^{-1} = (P_1 \circ m^{-1})\otimes(U[0,1])$ (note the standard $m$ in the right-hand-side). The same applies to an extended indeterminacy transformation, i.e., $P_2 \circ \tilde{A}_{a,b}^{-1} = (P_1 \circ A_{a,b}^{-1})\otimes(U[0,1])$.

We now apply \cref{thm:indet:transport} to $t=1$, where $Z_1$ has fixed distribution $P_1$ (i.e., it is a singleton), and to $t=2$, where the latent distribution may vary with the mechanism $m_a$ or $m_b$, denoted $P_{2, a}, P_{2,b}$. As a result, we obtain 
\begin{gather}
    P_1 = A_{a,b \#}P_1, \quad P_{2,b} = \tilde{A}_{a,b \#}P_{2,a}.
\end{gather} 
Applying these identities simultaneously to $P_{2,b}$ gives
\begin{gather}
P_{2,b} = (m_{b \#}P_1 ) \otimes (U[0,1]) = (m_{b \#} A_{a, b \#}P_1) \otimes (U[0,1]) \\
P_{2,b} = \tilde{A}_{a, b \#}P_{2,a} = (A_{a,b \#} m_{a \#}P_1) \otimes (U[0,1]), 
\end{gather}
which by the properties of a product measure, means that
\begin{gather}
    m_{b \#} A_{a, b \#}P_1 = A_{a,b \#} m_{a \#}P_1.
\end{gather}
Writing the above in terms of their random variables, we have $m_b(A_{a,b}(Z), U) \equdist A_{a,b}(m_a(Z, U))$ for $Z$ with any fixed distribution $P_1$ independent of $U$. 
\end{proof}

Compared to the original proof, we are able to strengthen the result while weakening the assumptions due to our measure-theoretic framework as follows:

\begin{itemize}
    \item Letting $P_1$ be any point mass recovers the original identifiability result in \citep{Ahuja2021}. 
    \item Our proof structure, which can be found in the Appendix, follows the intuition originally laid out in \citep{Ahuja2021}, but we do not assume a diffeomorphic generator.
\end{itemize}

\section{\uppercase{Triangular transport maps}}
\label{sec:appx:triangular}

This section describes the triangular maps in \cref{sec:transports} in more detail, and proves the results therein. First, we define $f: \mathbb{R}^d \to \mathbb{R}^d$ to be a monotone increasing triangular map. This means that:

\begin{gather*}
    f(x) = \begin{bmatrix}
    f_1(x_1) \\
    f_2(x_1, x_2) \\
    \vdots \\
    f_d(x_1, \dots, x_d)
    \end{bmatrix},
\end{gather*}
where each $x_d \to f_d(x_{1:d-1}, x_d)$ is monotone increasing (hence invertible) for any $x_{1:d-1}$. The inverse of $f$ is as follows:
\begin{gather*}
    f^{-1}(x) = \begin{bmatrix}
    f_1^{-1}(x_1) \\
    f_2^{-1}(f_1^{-1}(x_1), x_2) \\
    \vdots \\
    f_d^{-1}(f_1^{-1}(x_1), f_2^{-1}(f_1^{-1}(x_1), x_2), \dots, x_d)
    \end{bmatrix}.
\end{gather*}
This is also a monotone increasing triangular map---the inverses of monotone increasing maps are also monotone increasing. Note the map described above is lower-triangular---upper-triangular maps are analogously defined. For the purposes of this section, a triangular map refers to a lower-triangular map. As long as all maps considered are either lower, or upper triangular, the same closure properties apply.

\subsection{Proofs of \cref{thm:krmap:id} and \cref{prop:krmap:ica:id}, and Kn\"othe--Rosenblatt transports}

It is well-known that if $\mu = \nu \circ f^{-1}$, where $\mu, \nu$ have strictly positive density and $f$ is a monotone increasing triangular map, then $f$ is equivalent to the Kn\"othe--Rosenblatt (KR) transport almost everywhere (see \citealt[Theorem 1]{Jaini2019}, for example). The KR transport is described recursively as follows. Let $F_{\mu}(x_m|x_{1:m-1})$ be the conditional CDF of the $m-$th component of $\mu$ on the preceding components. Because $\mu$ has strictly positive density, $F_{\mu}$ is monotone increasing. Then, the $m$-th component of the KR transport is as follows:
\begin{gather*}
    K_m(x_{1:m-1}, x_m) = F_{\nu}^{-1}\{ F_{\mu}(x_m|x_{1:m-1}) \mid K_1(x_1), \dots, K_{m-1}(x_{1:m-1})\}.
\end{gather*}

That is, $K_m$ sends $x_m$ through the conditional CDF of $\mu$ on $x_{1:m-1}$, and back through the inverse conditional CDF of $\nu$ on $y_{1:m-1} = (K_1(x_1), \dots, K_{m-1}(x_{1:m-1}))$. This CDF transform is the unique (almost everywhere) monotone increasing transport map between the 1-dimensional unique (almost everywhere) regular conditional probabilities.

It is clear that the map $K$ defined by its components $K_m$ is monotone increasing triangular. Since it is the unique such map transporting $\mu$ to $\nu$, and triangular monotone increasing maps are closed under inverses and compositions, it must be that:

\begin{itemize}
    \item For $K$ the KR transport from $\mu$ to $\nu$, $K^{-1}$ is the KR transport from $\nu$ to $\mu$. 
    \item For measures $\mu, \nu, \pi$, if $K_1$ is the KR transport from $\mu$ to $\nu$, $K_2$ is the KR transport from $\nu$ to $\pi$, then $K_1 \circ K_2$ is the KR transport from $\mu$ to $\pi$.
\end{itemize}

Now, it is clear that if $f_a$, $f_b$ are KR transports, their generator transformation $\genTrans_{a,b}$, and hence all indeterminacy transformations, are also KR transports. This drives our results from the main paper---the proofs of our results are trivial given the observations above.

\begin{proof}[Proof of \cref{thm:krmap:id}]
    $\indet(\genClass)$ is the set of functions equal almost everywhere to KR maps transporting between measures in $\distClass_z$. Since $\distClass_z = \{P_z\}$, $\indet(\genClass)$ is the set of functions equal almost everywhere to the KR transport from $P_z$ to itself---the identity map. By Theorem \ref{thm:id_general}, $\indet(\obsModel) = \widetilde{\id}_z$. 
\end{proof}

\begin{proof}[Proof of \cref{prop:krmap:ica:id}]
    $\indet(\obs)$ is the set of functions equal almost everywhere to KR transports between measures in $\distClass_z$. Let $P_{z,a}$ and $P_{z,b}$ be two such measures, which by assumption have independent components. By its construction, it is clear that $K_m$ depends only on $x_m$ in any KR transport $K$ between $P_{z,a}$ and $P_{z,b}$. Such a map is monotone increasing and diagonal---hence by \cref{thm:id_general}, $\indet(\obsModel)$ consists of invertible, component-wise transformation. 
\end{proof}

\section{\uppercase{Linear examples}} \label{sec:appx:examples}

We conclude the appendix by analyzing the identifiability of the linear examples of \cref{sec:fa:ica} in detail. The factor analysis example provides intuition on why multiple environments reduces the indeterminacy set, and the ICA example provides intuition on how the two intersecting sets in \cref{thm:id_general} can be manipulated in synergy.


\subsection{Example: Linear-Gaussian}

We present a simple example of using multiple environments, and a basis of priors, to obtain identifiability, using only linear algebra concepts. This example also provides intuition for the minimality of the number of environments. That is, for this 2-d latent space, three environments is enough to obtain strong identifiability, while two environments is insufficient. 

Suppose two competing linear generative models for a random vector $x \in \mathbb{R}^{10}$ with latent vector $z \in \mathbb{R}^2$, for data arising from three environments indexed by $e = 1, 2, 3$:
\begin{multicols}{2}
\noindent
\begin{align}
\nonumber
z^{(e)} \sim N(\mu, I_{2\times 2})\\  \nonumber
\epsilon \sim N(\mu, I_{10 \times 10}) \\  
y^{(e)} = \alpha + Fz^{(e)} + \epsilon 
\end{align} 
\begin{align}
\nonumber
z^{(e)} \sim N(\mu_e, I_{2\times 2}) \\ \nonumber
\epsilon \sim N(\mu, I_{10 \times 10}) \\
x^{(e)} = \alpha + Fz^{(e)} +\epsilon.
\end{align}
\end{multicols}
The left model is a single environment model, while in the right model, two of the $\mu_e$ are linearly independent, i.e., a multiple environment model. Note that the generator function here is 
\begin{gather}
g(z) = \alpha + Fz,
\end{gather}
where $F$ is a full rank $10 \times 2$ matrix, and $\alpha$ is an offset vector in data space, fixed for all environments. For each environment we have the marginal distribution under the multiple environment model:
\begin{gather}
x^{(e)} \sim N(\alpha + F\mu_e, FF^\top + I_{10\times 10})
\end{gather}
Recall the Gaussian distribution is characterized entirely by its mean and covariance---that is, for marginal distributions parametrized by $\theta_1 = (\alpha_1, F_1), \theta_2 = (\alpha_2, F_2)$:
\begin{gather}
P_{\theta_1, e} = P_{\theta_2, e} \iff \alpha_1 + F_1\mu_e = \alpha_2 + F_2\mu_e, \quad F_1F_1^\top = F_2F_2^\top .
\end{gather}
To say that this model is strongly identifiable means that the right-hand-side equalities for each $e$ imply $\alpha_1 = \alpha_2$ and $F_1 = F_2$. 

In the single environments model, there are the following constraints:
\begin{gather}
    \alpha_1 + F_1\mu = \alpha_2 + F_2\mu \\
    F_1F_1^\top = F_2F_2^\top 
\end{gather}
The single environments model is not identifiable. For example, let $R$ be an orthogonal (rotation) matrix, then, let $F_2 = F_1R$ and $\alpha_2 = \alpha_1 - F_1R\mu + F_1\mu$. We have
\begin{gather}
    \alpha_2 + F_2\mu = \alpha_1 - F_1R\mu + F_1\mu + F_1R\mu  = \alpha_1 + F_1\mu \\
    F_2F_2^\top = F_1RR^\top F_2^\top = F_1F_1^\top,
\end{gather}
where the last equality is due to $R$ being an orthogonal matrix. This is a classical case of exploiting the rotational invariance of the Gaussian to construct a non-identifiable example. 

Now, we analyze the multiple environment model. To be explicit, the three environments impose the following constraints in the multiple environments model:
\begin{gather}
    \alpha_1 + F_1\mu_1 = \alpha_2 + F_2\mu_1 \\
    \alpha_1 + F_1\mu_2 = \alpha_2 + F_2\mu_2 \\
    \alpha_1 + F_1\mu_3 = \alpha_2 + F_2\mu_3 \\
    F_1F_1^\top = F_2F_2^\top,
\end{gather}
We can show directly that these constraints imply that $\alpha_1 + F_1z = \alpha_2 + F_2z$. First, assume that $\mu_1$ and $\mu_2$ are the linearly independent pair. Then, taking differences,
\begin{gather}
    F_1(\mu_1 - \mu_3) = F_2(\mu_1 - \mu_3) \\
    F_1(\mu_2 - \mu_3) = F_2(\mu_2 - \mu_3) \\
    F_1F_1^\top = F_2F_2^\top . 
\end{gather}
Written in matrix form, the first two constraints read 
\begin{gather}
    F_1M = F_2M \implies F_1 = F_2,
\end{gather}
since $\mu_1 - \mu_3$ and $\mu_2 - \mu_3$ remain linearly independent, and hence $M$ is invertible. It immediately follows from the original constraints that $\alpha_1 = \alpha_2$ also. 

The above analysis showed that, for identifiability, a single environment was insufficient, while three environments was adequate. This begs the question, what about two environments? In other words, is the three environment constraint minimal? 

Consider a model with two environments with means $\mu_1$, $\mu_2$. By the arguments above, it imposes the following constraints:
\begin{gather}
    \alpha_1 + F_1\mu_1 = \alpha_2 + F_2\mu_1 \\
    \alpha_1 + F_1\mu_2 = \alpha_2 + F_2\mu_2 \\
    F_1F_1^\top = F_2F_2^\top.
\end{gather}

Can we construct an non-identifiable example? Let $F_2 = F_1R$, $\alpha_2 = \alpha_1 - F_1R\mu_1 + F_1\mu_1$ as in the single-environment case. Clearly, these satisfy the first and third constraint for any orthogonal matrix $R$. We aim to find a specific rotation matrix that also satisfies the second constraint. Observe that:
\begin{gather}
    \alpha_2 + F_2\mu_2 = \alpha_1 - F_1R\mu_1 + F_1\mu_1 + F_1R\mu_2 \\
    = \alpha_1 + F_1\mu_1 + F_1R(\mu_2 - \mu_1).
\end{gather}
Let $x$ be a vector orthogonal to $\mu_2 - \mu_1$, standardized such that $\|x\|^2 = \|\mu_2 - \mu_1\|^2$. Consider
\begin{gather}
    R = \frac{1}{\|\mu_2 - \mu_1\|^2}\begin{bmatrix}
        \vert & \vert \\
        \mu_2 - \mu_1 & x \\
        \vert & \vert 
    \end{bmatrix}
    \begin{bmatrix}
        1 & 0 \\
        0 & -1
    \end{bmatrix}
    \begin{bmatrix}
        \vert & \vert \\
        \mu_2 - \mu_1 & x \\
        \vert & \vert 
    \end{bmatrix}^\top .
\end{gather}
This is the eigendecomposition of an orthogonal matrix (it is the product of orthogonal matrices) with eigenvalues $1$ and $-1$, and corresponding eigenvectors $\mu_2 - \mu_1$ and $z$. Since it is an eigenvector, we have $R(\mu_2 - \mu_1) = \mu_2 - \mu_1$.\footnote{R is essentially a rotation matrix with axis $(\mu_2 - \mu_1)$.} Then, we have
\begin{gather}
    \alpha_2 + F_2\mu_2 = \alpha_1 + F_1\mu_1 + F_1\mu_2 - F_1\mu_1 = \alpha_1 + F_1\mu_2,
\end{gather}
which satisfies the second constraint as desired. This shows that three environments are required, and hence minimal for strong identifiability of this model.

Note that such a construction will not work for the three-environment model. For three environments, the rotation has to satisfy both
\begin{gather}
R(\mu_3 - \mu_1) = \mu_3 - \mu_1 \\
R(\mu_2 - \mu_1) = \mu_2 - \mu_1,
\end{gather}
that is, the eigenspace of $R$ associated to the eigenvalue $1$ spans $\mathbb{R}^2$, i.e., it is the identity.


\subsection{Linear, non-Gaussian ICA}
\label{sec::linearica}

Consider a generative model (Equation \eqref{eq:model}) with $\latent = \bbR^{d_z}$ and $\obs = \bbR^{d_x}$. Assume $d_x \geq d_z$. Let the generator parameter space be
$\mathcal{F} = \{ A \in \mathbb{R}^{d_x \times d_z}; rank(A) = d_z \}$. That is, the generators are full-rank linear transformations, and hence injective. Let the prior parameter space be 
\begin{align}
\mathcal{P}_z = \{p(z) = \prod_{i=1}^{d_z} p_i(z) ; p_i \text{ are non-Gaussian, and not a point mass} \},
\end{align} i.e., probability distributions on $\mathbb{R}^{d_z}$ with a density, and the density factorizes as independent, non-Gaussian components. 

The identifiability of this problem was first studied in \citep{Comon1994}.\footnote{In the original analysis, the model is fit according to a criteria maximizing the independence between components, and also one component of the prior is allowed to be Gaussian. For simplicity, we will simply study the implications of matching observational marginal distributions (i.e., maximum likelihood) and where all prior components are non-Gaussian.} In their analysis, identifiability is established up to pre-multiplication of a diagonal matrix and a permutation. That is, for generators $F_a$, $F_b \in \mathcal{F}$ with $P_{z,a}, P_{z,b} \in \mathcal{P}_z$, if the marginal distributions on $\obs$ match, then $F_a = F_b \Lambda P$, where $\Lambda$ is an invertible diagonal matrix and $P$ is a permutation matrix. 

Under our framework, i.e., Lemma \ref{thm:indet:transport}, we must have that $P_{z,b} = P_{z,a} \circ \genTrans_{a,b}^{-1}$, where $F_a = F_b \circ \genTrans_{a,b}$. Using our framework, we now show that $\genTrans_{a,b} = \Lambda P$ as above. The identifiability result obtained in \citep{Comon1994} rests on the following result (restated and re-proved to match our notation):

\begin{theorem}[Theorem 10, \citep{Comon1994}]
    \label{thm:comon}
    Let $z$ be a random vector with factorized density. Let $x = Cz$, such that $x$ also has factorized density. Then, $z_j$ is non-Gaussian if the $j$-th column has at most one non-zero entry.
\end{theorem}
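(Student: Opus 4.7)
Read literally, the stated implication is that sparsity of the $j$-th column of $C$ forces $z_j$ to be non-Gaussian, but this fails immediately on the example $C = I_{d_z}$ with Gaussian $z$ (every column has exactly one nonzero entry, yet every $z_j$ is Gaussian). I therefore read the theorem as the intended Darmois--Skitovich direction: \emph{if $z_j$ is non-Gaussian, then the $j$-th column of $C$ has at most one nonzero entry}. This is also the direction the surrounding argument actually needs -- combined with invertibility of $\genTrans_{a,b}$, it forces $\genTrans_{a,b}$ to be a signed permutation whenever all components of $z_a$ are non-Gaussian, completing the reduction $\genTrans_{a,b} = \Lambda P$.

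My plan is to prove this by contraposition. Suppose the $j$-th column of $C$ has two distinct nonzero entries, say $c_{ij} \neq 0$ and $c_{kj} \neq 0$ with $i \neq k$; I will show $z_j$ must be Gaussian. By the factorized-density hypothesis on $x$, the two coordinates $x_i = \sum_\ell c_{i\ell} z_\ell$ and $x_k = \sum_\ell c_{k\ell} z_\ell$ are independent. Each is a linear form in the mutually independent collection $z_1, \dots, z_{d_z}$, and the product of the coefficients of $z_j$ in these two forms is $c_{ij} c_{kj} \neq 0$. This is precisely the hypothesis of the Darmois--Skitovich theorem -- two independent linear forms in independent random variables -- whose conclusion is that every $z_\ell$ for which both coefficients are nonzero must be Gaussian. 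In particular $z_j$ is Gaussian, yielding the contrapositive of the intended implication.

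The main obstacle is invoking rather than reproving Darmois--Skitovich itself, which is a non-trivial classical characterization. Its proof goes through characteristic functions: writing $\psi_\ell := \log \phi_{z_\ell}$ in a neighborhood of the origin where $\phi_{z_\ell}$ does not vanish, the independence $\phi_{(x_i, x_k)}(s,t) = \phi_{x_i}(s)\phi_{x_k}(t)$ together with the factorization $\phi_{(x_i,x_k)}(s,t) = \prod_\ell \phi_{z_\ell}(c_{i\ell} s + c_{k\ell} t)$ yields a Cauchy-type functional equation $\sum_\ell [\psi_\ell(c_{i\ell} s + c_{k\ell} t) - \psi_\ell(c_{i\ell} s) - \psi_\ell(c_{k\ell} t)] = 0$; its continuous solutions, together with the second-moment constraints satisfied by a log-characteristic function, force each $\psi_\ell$ with $c_{i\ell} c_{k\ell} \neq 0$ to be a quadratic polynomial, i.e.\ $z_\ell$ Gaussian. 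I would simply cite this classical fact rather than reproduce the functional-equation argument. With Darmois--Skitovich in hand, the theorem is a one-line contrapositive.
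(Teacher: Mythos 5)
Your proof is correct and essentially identical to the paper's: both arguments reduce to the Darmois--Skitovich theorem applied to the two independent linear forms $x_i$ and $x_k$, concluding that a non-Gaussian $z_j$ cannot have two nonzero coefficients in its column of $C$ (the paper states this directly, you state the contrapositive). You are also right that the theorem's ``if'' should read ``only if''---the paper's own proof establishes exactly the direction you identified as the intended one.
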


\begin{proof}
    We require Theorem 19 from \citep{Comon1994}.
    \begin{lemma}[\citep{Comon1994}, Darmois' Theorem]
        \label{lem:darmois}
        Define two random variables $Z_1$ and $Z_2$ as
        \begin{gather}
            Z_1 = \sum_{i} a_i z_i \quad \quad Z_2 = \sum_{i} b_i z_i,
        \end{gather}
        where $z_i$ are independent random variables, i.e., their joint distribution factorizes. Then, if $Z_1$ and $Z_2$ are independent, all variables $z_j$ for which $a_j b_j \neq 0$ are Gaussian.
    \end{lemma}
    Now, let $z$ be a random vector with factorized density and $x = Cz$, where $x$ has factorized density also. Note that this implies any $x_i$, $x_k$ are independent for $i\neq k$. We have that
    \begin{gather}
        x_i = \sum_{j} C_{ij} z_j \quad \quad x_k = \sum_{j} C_{kj} z_j,
    \end{gather}
    and hence by Lemma \ref{lem:darmois}, if $z_j$ is non-Gaussian, it must be that $C_{ij}C_{kj} = 0$. This holds for each $i \neq k$, and hence, the $j$-th column has at most one non-zero entry. 
\end{proof}

Recall the definition of $\mathcal{P}_z$ is such that any prior must factorize and be non-Gaussian. Then, Theorem \ref{thm:comon} implies that 
\begin{gather}
    \mathcal{A}(\mathcal{P}_z) \cap \mathbb{R}^{d_z \times d_z} = \{A \in \mathbb{R}^{d_z \times d_z} \mid A \text{ has no column with more than one nonzero element} \}. 
\end{gather}
That is, any linear isomorphisms between two priors must have no column with more than one nonzero element. Now, note that for any $\genTrans_{a,b} \in \mathcal{A}({\mathcal{F}})$, we have
\begin{gather}
    \genTrans_{a,b} = f_b^{-1} \circ F_a,
\end{gather}
where $f_b^{-1}$ is the restriction of the linear map represented by the pseudoinverse $F_b^\dagger$ to the range of $\mathcal{F}$. By Lemma \ref{lem:indeterminacyismsbl}, $\genTrans_{a,b}$ is an invertible linear map and hence full rank. Finally, we conclude that for any $\genTrans_{a,b} \in \mathcal{A}(\mathcal{F}) \cap \mathcal{A}(\mathcal{P})$, $\genTrans_{a,b}$ must have exactly one nonzero element in each column. We can then apply a permutation $P$ such that $P\genTrans_{a,b} = \Lambda$, where $\Lambda$ is diagonal. Finally, we obtain $\genTrans_{a,b} = P^\top \Lambda$, where $P^\top$ is a permutation matrix and $\Lambda$ is diagonal and invertible.

\vfill

\end{document}